\documentclass{article}

\usepackage{arxiv}

\usepackage[utf8]{inputenc} 
\usepackage[T1]{fontenc}    
\usepackage{hyperref}       
\usepackage{url}            
\usepackage{booktabs}       
\usepackage{amsfonts}       
\usepackage{amssymb}
\usepackage{amsmath}
\usepackage{amsthm}
\usepackage{array}
\usepackage{balance}
\usepackage{nicefrac}       
\usepackage{microtype}      
\usepackage{cleveref}       
\usepackage{lipsum}         
\usepackage{graphicx}
\usepackage{subfigure}
\usepackage{algorithm}
\usepackage[noend]{algpseudocode}
\usepackage{setspace}
\usepackage{footnote}
\usepackage{tablefootnote}
\usepackage{footnote}
\usepackage{natbib}
\usepackage{doi}
\usepackage{enumitem}
\allowdisplaybreaks[4]

\newtheorem{Definition}{\bf{Definition}}

\newtheorem{Theorem}{\bf{Theorem}}
\newtheorem{Corollary}{\bf{Corollary}}
\newtheorem{Lemma}{\bf{Lemma}}
\newtheorem{Remark}{\bf{Remark}}
\newtheorem{Assumption}{\bf{Assumption}}


\newcommand{\Rcal}{{\mathcal{R}}}

\newcommand{\Ecal}{{\mathcal{E}}}

\newcommand{\Ncal}{{\mathcal{N}}}
\newcommand{\Vcal}{{\mathcal{V}}}
\renewcommand{\a}{{\bf a}}


\newcommand{\g}{{\bf g}}


\renewcommand{\u}{{\bf u}}
\renewcommand{\v}{{\bf v}}
\newcommand{\w}{{\bf w}}
\newcommand{\x}{{\bf x}}
\newcommand{\y}{{\bf y}}

\newcommand{\A}{{\bf A}}

\newcommand{\Dcal}{\mathcal{D}}

\newcommand{\I}{{\bf I}}
\newcommand{\J}{{\bf J}}

\newcommand{\N}{{\bf N}}

\renewcommand{\P}{{\bf P}}

\newcommand{\U}{{\bf U}}

\newcommand{\Ocal}[1]{{\mathcal{O}\left(#1\right)}}

\newcommand{\Xcal}{{\mathcal{X}}}



\newcommand{\bPi}{\boldsymbol{\Pi}}

\newcommand{\1}{{\bf 1}}

\newcommand{\argmin}{\operatornamewithlimits{\tiny argmin}}
\newcommand{\argmax}{\operatornamewithlimits{argmax}}

\newcommand{\lrincir}[1]{\left( #1 \right)}
\newcommand{\abs}[1]{\left\lvert#1\right\rvert}

\newcommand{\lrnorm}[1]{\left\lVert#1\right\rVert}
\newcommand{\lrangle}[1]{\left\langle#1 \right\rangle}

\newcommand{\EE}{\mathop{\mathbb{E}}}
\newcommand{\PP}{\mathop{\mathbb{P}}}
\newcommand{\RR}{\mathbb{R}}

\title{Markov Chain Mirror Descent On Data Federation}

\date{}

\author{Yawei Zhao \\ 
	Medical Big Data Research Center\\ 
	Chinese PLA General Hospital\\ 
	Beijing, China, 100851 \\ 
	\texttt{csyawei.zhao@gmail.com} \\ 
}


\hypersetup{
pdftitle={Model Wandering},
pdfsubject={cs},
pdfauthor={Yawei Zhao},
pdfkeywords={Markov chain, Mirror descent, Data Federation, Optimization, Federated learning},
}

\begin{document}
\maketitle

\begin{abstract}
Stochastic optimization methods such as mirror descent have wide applications due to low computational cost. Those methods have been well studied under assumption of the independent and identical distribution, and usually achieve sublinear rate of convergence. However, this assumption may be too strong and unpractical in real application scenarios. Recent researches investigate stochastic gradient descent when instances are sampled from a Markov chain. Unfortunately, few results are known for stochastic mirror descent. In the paper, we propose a new version of stochastic mirror descent termed by \textsc{MarchOn} in the scenario of the federated learning. Given a distributed network, the model iteratively travels from a node to one of its neighbours randomly. Furthermore, we propose a new framework to analyze \textsc{MarchOn}, which yields best rates of convergence for convex, strongly convex, and non-convex loss. Finally, we conduct empirical studies to evaluate the convergence of \textsc{MarchOn}, and validate theoretical results.    
\end{abstract}

\keywords{Markov chain, Mirror descent, Data federation, Convergence rate.}

\section{Introduction}
\label{sect:introduction}
Stochastic optimization methods have draw much attention in the era of big data due to their low computational cost \cite{yang2015big}. Examples of those methods include gradient descent, mirror descent, and their stochastic versions \cite{bubeck2015convex,hazan2016introduction,shalev2012online}. Generally, those stochastic optimization methods iteratively sample one instance randomly, and update a given model by using such instance. When those instances are assumed to be sampled from the independent and identical distribution (IID), those methods usually obtain sublinear rate of convergence \cite{books2014shalev}. 

However, the assumption of IID may be too strong and unpractical in real application scenarios \cite{tit2013agarwal,kuznetsov2020discrepancy}. A relatively mild assumption is that instances are sampled from a Markov chain. Under the assumption, extensive researches develop Markov chain versions of stochastic optimization methods, and furthermore investigate convergence rate of those methods \cite{duchi:2012:siamjoo,sun2018markov,even:2022:mcss:icml}. For example, $\Ocal{\frac{\ln T}{\sqrt{T}}}$ rate of convergence is achieved for convex loss \cite{duchi:2012:siamjoo}. $\Ocal{\frac{1}{T^{1-q}}}$ rate of convergence is obtained for either convex or non-convex loss \cite{sun2018markov}. $\Ocal{\frac{\ln T}{T}}$ rate of convergence is established for strongly convex loss, and $\Ocal{\frac{\ln T}{\sqrt{T}} + \frac{\ln T}{T}}$ rate of convergence is gained 
for non-convex loss \cite{even:2022:mcss:icml}. More results are summarized in Table \ref{table:convergence:rate}. Although extensive researches present more and more results on the convergence of the Markov chain stochastic optimization methods, those studies usually have three limitations. First, none of existing work offers a unified framework, providing convergence rate for convex, strongly convex and non-convex loss. Second, the existing result of convergence rate is sub-optimal for convex loss. Third, most existing researches focus on stochastic gradient descent, and few results are known for mirror descent. We are motivated by drawbacks of existing work, and provide a new analysis framework for Markov chain mirror descent. 

In the paper, we propose a new version of the Markov chain mirror descent method termed by \textsc{MarchOn} in the scenario of federated learning. In this scenario, the model travels over the distributed network randomly. When the model arrives at a node, it is updated by performing mirror descent with local data of the node. Since the model has to choose one of the node's neighbours as the destination node randomly, and travels one step to the destination node, the trajectory of node is a Markov chain. Furthermore, we provide a new unified framework to analyze the convergence rate of \textsc{MarchOn}, and present the best result on the convergence rate. Specifically, \textsc{MarchOn} achieves $\Ocal{\frac{1}{\sqrt{T}} + \frac{\ln T}{T} + \frac{1}{T}}$ rate of convergence for convex loss, which is the first result in the setting as far as we know. Finally, we conduct extensive empirical studies to evaluate the convergence of \textsc{MarchOn}, and validates theoretical results.

\section{Related Work}
\label{sec:related:work}

\subsection{Markov Chain Gradient Descent} 
Tao Sun et al. investigate the convergence of Markov chain stochastic gradient descent, and achieves $\Ocal{\frac{1}{T^{1-q}}}$ rate of convergence by choosing the step size as $\frac{1}{t^q}$ with $0.5<q<1$ for both convex and non-convex loss \cite{sun2018markov}. Based on this work, Puyu Wang et al. investigate stability and generalization of Markov chain stochastic gradient descent, and provide a comprehensive analysis for both minimization and minimax problems \cite{wang2022stability}. Additionally, Abhishek Roy et al. propose the moving-average based single-time scale algorithm to find the $\epsilon$-stationary point, and presents $\Ocal{\frac{1}{T^{0.4}}}$ rate of convergence for non-convex loss \cite{roy2022constrained}. Moreover, Ron Dorfman et al. propose an adaptive method for stochastic optimization with Markovian data, and do not require the knowledge of the mixing time \cite{dorfman2022adapting}. It achieves $\Ocal{\frac{\ln T}{\sqrt{T}}}$ rate of convergence for both convex and non-convex loss. Besides, Guy Bresler et al. focus on least square loss, and establish information theoretic minimax lower bounds when instances are sampled from a Markov chain \cite{bresler2020lsr}. Their analysis results demonstrate the optimization with Markovian data is generally more difficult than that with independent data, which is consistent with our analysis. Recently, Aleksandr Beznosikov et al. offer a unified analysis framework for accelerated version of first-order gradient methods with Markovian noise, and also present its extension to variational inequalities \cite{beznosikov2023first}.  Note that those existing researches mainly provide rigorous analysis of convergence rate for stochastic gradient descent. Unfortunately, those analysis cannot be extended to the mirror descent algorithms directly. We note that John C. Duchi et al. show $\Ocal{\frac{\ln T}{\sqrt{T}}}$ rate of convergence for Markov chain mirror descent \cite{duchi:2012:siamjoo}. As shown in Table \ref{table:convergence:rate}, the proposed method achieves better rate of convergence for convex loss. As far as we know, it is the first known result about convergence rate for either strongly convex or non-convex loss. 

\begin{table}
\centering
\caption{Convergence rates of the proposed method and other related methods. `Convex', `Strongly Convex', `Non-Convex' represents different conditions on the convexity of loss function. `Mirror Map' represents whether the model is updated by mirror descent.}
\renewcommand\arraystretch{2}
\begin{tabular}{c|c|c|c|c}
\hline
Algorithms                             & Convex                    & Strongly Convex    & Non-Convex            & {\small Mirror Map} \\ \hline \hline  
EMD, \cite{duchi:2012:siamjoo} & $\Ocal{\frac{\ln T}{\sqrt{T}}}$\tablefootnote{See Corollary 2 and Eq. 11 in \cite{duchi:2012:siamjoo}.} &  None                              & None & Yes \\ \hline
MCGD, \cite{sun:2018:mcgd} & $\Ocal{\frac{1}{T^{1-q}}}$\tablefootnote{Note that $\frac{1}{2}<q<1$ here.} &  None                              &  $\Ocal{\frac{1}{T^{1-q}}}$ & No \\ \hline
IASA, \cite{roy2022constrained} & None &  None                              &  $\Ocal{\frac{1}{T^{0.4}}}$ & No \\ \hline
MAG, \cite{dorfman2022adapting} & $\Ocal{\frac{\ln T}{\sqrt{T}}}$ &  None                              &  $\Ocal{\frac{\ln T}{\sqrt{T}}}$ & No \\ \hline
MC-SGD, \cite{even:2022:mcss:icml} & None                   &  $\Ocal{\frac{\ln T}{T}}$   & $\Ocal{\frac{\ln T}{\sqrt{T}}\mathrm{+}\frac{\ln T}{T}}$  & No \\ \hline
Markov SGD, \cite{doan:2023:mgd:tac} & $\Ocal{\frac{\ln(\ln T)\ln^2 T}{\sqrt{T}} \mathrm{+} \frac{\ln^2 T}{\sqrt{T}}}$ & $\Ocal{\frac{\ln T}{T} + \frac{1}{T^2}}$ &  None\tablefootnote{Although \cite{doan:2023:mgd:tac} provides $\Ocal{\frac{\ln T}{T} + \frac{1}{T^2}}$ rate for non-convex loss with the Polyak-{\L}ojasiewicz condition, the convergence rate is still unknown for general smooth non-convex loss in the work. } & No \\ \hline
\textbf{this paper}                               &   $\Ocal{\frac{1}{\sqrt{T}} + \frac{\ln T}{T} + \frac{1}{T}}$   & $\Ocal{\frac{\ln T}{T}+\frac{1}{T}}$  & $\Ocal{\frac{\ln T}{\sqrt{T}}+\frac{1}{\sqrt{T}}}$ & Yes \\ \hline
\end{tabular}
\label{table:convergence:rate}
\end{table}

\subsection{Optimization in Federated Setting}
Xiangru Lian et al. propose an asynchronous decentralized stochastic gradient decent algorithm, which is robust for communication in in heterogeneous environment, and achieves almost best convergence rate \cite{2017Asynchronous}. Kunal Srivastava et al. introduce decentralized optimization algorithm with random communication graph, where every node chooses its neighbours to update model \cite{srivastava2011distributed}. Both researches assume instances are sampled from IID. Different from them, more and more federated optimization methods are proposed and analyzed under the Markov chain sampling strategy. Hoi-To Wai establishes a finite time convergence analysis for stochastic gradient descent, and achieves $\Ocal{\frac{\ln T}{\sqrt{T}}}$ rate of convergence under assumptions of Markovian noise and time varying communication network \cite{wai2020consensus}. Ghadir Ayache et al. propose random walk stochastic gradient descent method in the decentralized network, and establishes $\Ocal{\frac{1}{T^{1-q}}}$ with $\frac{1}{2} < q < 1$ rate of convergence \cite{ayache2019random}. They furthermore focus on alleviating the communication bottleneck and data heterogeneity during training of model, and design near-optimal node sampling strategy for random walk of model \cite{ayache2023walk}.  Tao Sun et al. propose both zeroth order and first order versions of decentralized Markov chain gradient descent methods, and present convergence rate of those methods \cite{sun2023decentralizedsgdmc}. Additionally, Xianghui Mao et al. propose a random walk strategy for node selection, and perform ADMM on the node \cite{mao2020walkman}. When the loss is least square, the linear convergence is achieved. Similarly, the proposed method is also designed for the setting of federated learning. However, compared with those existing researches, the proposed method obtains more tight results on the rate of convergence. The superiority of the proposed analysis framework is briefly illustrated in Table \ref{table:convergence:rate}.

\section{Markov Chain Mirror Descent On Data Federation}
\label{sec:mcmd}

\subsection{Problem Settings}

In the paper, we investigate mirror descent in the scenario of federated learning. The concept of \textit{network} is used as an abstraction of data federation, where a \textit{node} represents an owner of some private data, and a model travels from a node to one of its neighbours over the network. Generally, we use $\Ncal=\{\Vcal, \Ecal\}$, $\Vcal$, and $\Ecal$ to represent the network, its set of nodes, and its set of edges, respectively. Suppose there exist $n$ nodes, that is $|\Vcal| = n$, and a node is identified by an id number which is chosen from $\{0,1,2,\cdots,n\}$. For example, a node $v_i\in\Vcal$ is identified by $i\in[n]$. As illustrated in Figure \ref{figure:algo:marchon}, the left shows the abstraction of a data federation consisting of four nodes, and the right shows a network $\Ncal$. In the paper, we focus on the following optimization problem. 
\begin{align}
\label{equa:full:objective}
\min_{x\in\RR^d} f(\x) := \frac{1}{n}\sum_{v\in\Vcal} f_v(\x), 
\end{align} where $f_v$ represents the local loss at the node $v$. Major notations used in the paper are summarized as follows.
\begin{itemize}[leftmargin=*]
\item $\Ncal$ represents a network. $\Vcal$ and $\Ecal$ represents its node set and edge set, respectively. $v\in\Vcal$ represents a node, and $n = |\Vcal|$ is the number of nodes in $\Vcal$. $v_{i_t}$ represents a node whose id is $i_t\in[n]$.
\item $\Dcal_v$ represents the local data of the node $v$. Similarly, $\Dcal_{i_t}$ represents the local data of the node $v_{i_t}$.
\item Denote the external variance between nodes by $\sigma_{\Vcal}^2 := \EE_{v} \lrnorm{\nabla f_{v}(\x) - \nabla f(\x)}^2$ where $v$ is chosen from a Markov chain, and the internal variance for the node $v\in\Vcal$ by $\sigma_v^2 := \EE_{\a} \lrnorm{\nabla f_{v}(\x;\a) - \nabla f_v(\x)}^2$ where $\a$ is uniformly sampled from $\Dcal_v$.
\item $\Phi$ is a distance function, and $B_{\Phi}(\u,\v)$ represents the Bregman divergence between $\u$ and $\v$.
\item $f_{i_t}$ is the local loss function for the node $v_{i_t}$, and $\nabla f_{i_t}$ represent its gradient.
\item $\EE$ represents to take mathematical expectation, and the default is to take mathematical expectation with respect to all random variables.
\end{itemize}

\subsection{Algorithm}
\label{subsect:algo}

Suppose the model $\x_t$ arrives at the $i_t$ node for the $t$-th iteration. Sample an instance $\a$ from the local data $\Dcal_{i_t}$, that is $\a\sim\Dcal_{i_t}$, we denote the decaying set $\Xcal_t$ by 
\begin{align}
\label{equa:decaying:set:W}
\Xcal_t := \left\{  \x : \lrnorm{\x - \lrincir{\x_t - \eta_t \cdot \nabla f_{i_t}(\x;\a\sim\Dcal_{i_t})}} \le \eta_t^2 \cdot \lrnorm{\nabla f_{i_t}(\x_t;\a\sim\Dcal_{i_t})}^2 \right\}.
\end{align} Furthermore, given a Markov chain $\zeta_T:=\{i_t\}_{t=1}^T$, we update the model $\x_t$ by performing 
\begin{align}
\label{equa:parameters:update} 
\x_{t+1}  = \argmin_{\x\in\Xcal_t} \lrangle{\nabla f_{i_t}(\x_t;\a\sim\Dcal_{i_t}), \x - \x_t} + \frac{1}{\eta_t} B_{\Phi}\lrincir{\x, \x_t},
\end{align} where $\eta_t$ is the step size, $i_t$ with $i_t\in[n]$ represents the node's id selected at the $t$-th iteration, and $\nabla f_{i_t}(\x_t;\a\sim\Dcal_{i_t})$ is a Markov chain stochastic gradient at the node $i_t$. $B_{\Phi}(\u,\v)$ is defined by
\begin{align}
\nonumber
B_{\Phi}(\u,\v) := \Phi(\u) - \Phi(\v) - \lrangle{\nabla \Phi(\v), \u-\v},
\end{align} where $\Phi$ is a given distance function. It represents the Bregman divergence between $\u$ and $\v$ \cite{parikh2014proximal}. Generally, $\Phi$ is $\mu_{\Phi}$-strongly convex, that is, for any $\x$ and $\y$, $B_{\Phi}(\x, \y) \ge \frac{\mu_{\Phi}}{2} \lrnorm{\x - \y}^2$.

\begin{figure}
\setlength{\abovecaptionskip}{0pt}
\setlength{\belowcaptionskip}{0pt}
\centering 
\includegraphics[width=0.98\columnwidth]{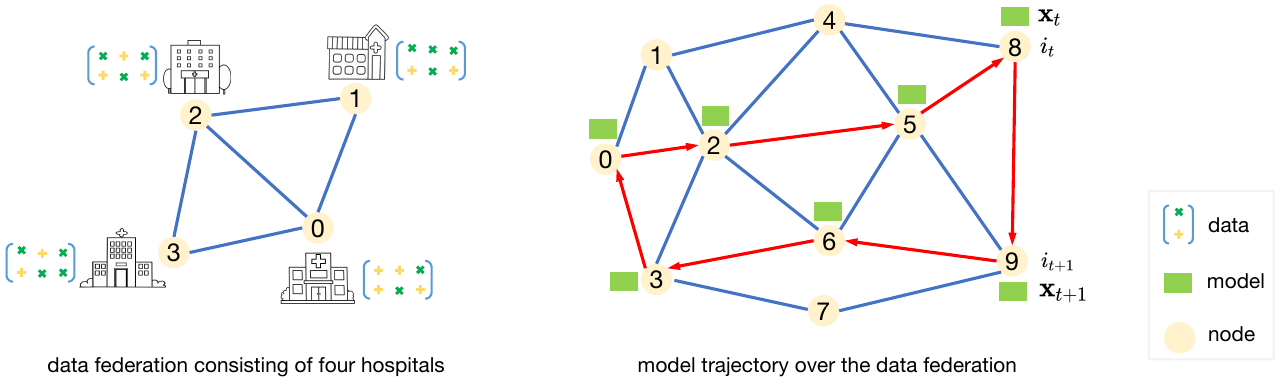}
\caption{Illustration of network, and the algorithm.}
\label{figure:algo:marchon} 
\end{figure}

\begin{algorithm}[h]
\setstretch{1.35}
    \caption{\textsc{MarchOn}: \textbf{Mar}kov \textbf{Ch}ain Mirror Descent \textbf{On} Data Federation.}
    \label{algo:marchon}
    \begin{algorithmic}[1]
        \Require The number of steps $T$, the network $\Ncal=\{\Vcal, \Ecal\}$, the initial model $\x_0$, and the initial starting node $v_{i_0}$.
        \State Initialize $v$ by letting $v \leftarrow v_{i_0}$.
        \For {$t= 0,1,2, ..., T-1$}
            \State The model chooses one of neighbours of $v$, e.g. $v_{i_t}\in\Vcal$, as the destination randomly, and then travels one step from $v$ to $v_{i_t}$.
                \State Sample an instance $\a\sim\Dcal_{i_t}$ of the node $v_{i_t}$ randomly, and compute a stochastic gradient $\nabla f_{i_t}(\x_t; \a)$.
                \State Perform \ref{equa:parameters:update} for once update of the model, and obtain $\x_{t+1}$.
                \State Update $v$ by $v\leftarrow v_{i_t}$.
        \EndFor
        \State \Return $\bar{\x}_T := \frac{1}{T}\sum_{t=1}^T \x_t$.
      \end{algorithmic}
\end{algorithm} 

As illustrated in Algorithm \ref{algo:marchon}, the model travels over the network (Line 3), and updates its values by using the local data of a node for every step (Line 4). Note that there are two levels of randomness, which is different from existing related researches \cite{sun:2018:mcgd,roy2022constrained,doan:2023:mgd:tac}. First, node is chosen from its neighbours randomly for the next step, and thus leads to a Markov chain in the trajectory. Second, an instance is chosen from the local data of a node randomly, and thus leads to a stochastic gradient. Comparing with existing federated optimization methods, the method has following advantages.
\begin{itemize}[leftmargin=*]
\item \textbf{Supporting mirror map.} Existing methods mainly focus on the Markov chain stochastic gradient descent \cite{sun:2018:mcgd,roy2022constrained,doan:2023:mgd:tac}. It is still unclear whether the mirror descent could work well in federated learning, and find the optimum by querying Markov chain stochastic gradient efficiently. The proposed method gives positive answers, and the regret analysis in the next section shows that it successfully finds the optimum with sublinear convergence.
\item \textbf{Low control complexity.} Optimization methods such as FedAvg \cite{yang2019federated}, and FedProx \cite{li2020federated} usually need to control either multiple or all nodes, use local data of those nodes, and then update model by gradient aggregation. Comparing with those methods, the proposed method only needs to control a node and one of its neghbours for every step, which leads to low control complexity. The superiority becomes significant for a large scale network.
\item \textbf{Sublinear convergence.} The proposed method successfully achieves $\Ocal{\frac{1}{\sqrt{T}}+\frac{\ln T}{T} + \frac{1}{T}}$ rate of convergence for general convex loss, $\Ocal{\frac{\ln T}{T}+\frac{1}{T}}$ rate of convergence for strongly convex loss, and $\Ocal{\frac{\ln T}{\sqrt{T}} + \frac{1}{\sqrt{T}}}$ rate of convergence for non-convex loss, respectively.  As illustrated in Table \ref{table:convergence:rate}, the proposed method achieves more tight bounds of convergence than existing methods.
\end{itemize}

\section{Theoretical Analysis}
\label{sect:theoretical:analysis}

\subsection{Assumptions}
The following assumptions are used in analysis of convergence, which are widely used in existing studies \cite{sun:2018:mcgd}.
\begin{Assumption}[Markov chain]
\label{assumption:markov:chain}
Assume the Markov chain $\{i_t\}_{t\ge1}$ is time-homogeneous, irreducible, and aperiodic. It has a transition matrix $\P$ and its stationary distribution $\pi_\ast$ with $\pi_\ast = \left [\frac{1}{n}, \frac{1}{n}, \cdots, \frac{1}{n} \right ]$. 
\end{Assumption}

\begin{Assumption}[Smoothness]
\label{assumption:smooth}
Assume $f_v$ with $v\in\Vcal$ is $L$-smooth, that is, for any $\x$ and $\y$, 
\begin{align}
\nonumber
f_v(\x) - f_v(\y) \le \lrangle{\nabla f_v(\y), \x-\y} + \frac{L}{2}\lrnorm{\x-\y}^2.
\end{align}
\end{Assumption}

\begin{Assumption}[Bounded variance]
\label{assumption:bounded:variance}
Assume $\EE_{\a\sim\Dcal_v}\lrnorm{\nabla f_v(\x;\a) - \nabla f_v(\x)}^2 \le \sigma_v^2$, and $\EE_{v}\lrnorm{\nabla f_v(\x) - \nabla f(\x)}^2 \le \sigma_{\Vcal}^2$ for any $v\in\Vcal$. 
\end{Assumption}

\begin{Assumption}[Bounded gradient]
\label{assumption:bounded:gradient}
Assume $\max\left\{\lrnorm{\nabla f_v(\x)}, \lrnorm{\nabla \Phi(\x)} \right\} \le G$ for any $v\in\Vcal$ and $\x$.
\end{Assumption}

\begin{Assumption}[Bounded domain]
\label{assumption:bounded:domain}
Assume $\max \left\{\lrnorm{\x - \y}^2, B_{\Phi}(\x, \y) \right\} \le R^2$ for any $\x$ and $\y$.
\end{Assumption}
After that, we introduce some constants, which will be used in the following analysis. Those constants are first introduced in \cite{sun:2018:mcgd}, and then widely used in existing work \cite{even:2022:mcss:icml}. Suppose the Jordan normal form of transition matrix $\P$ is denoted by 
\begin{align}
\nonumber
\P:=\U \begin{bmatrix}
 1 &  &  & \\ 
 &  \J_2&  & \\ 
 &  & \cdots & \\ 
 &  &  & \J_m
\end{bmatrix} \U^{-1},
\end{align} where $m$ is the number of blocks, and $d_i\ge1$ is the dimension of the $i$-th block submatrix $\J_i$ with $i\in\{2,3,\cdots,m\}$. Additionally, suppose $\rho_i$ represents the $i$-th largest eigenvalue of $\P$, and $\rho$ is denoted by
\begin{align}
\nonumber
\rho := & \frac{\max\{|\rho_2|, |\rho_n|\}+1}{2} \in [0, 1).
\end{align} Specifically, $C_{\P}$ is defined by 
\begin{align}
\label{equa:define:CP}
C_{\P} := \lrincir{\sum_{i=2}^m d_i^2}^{\frac{1}{2}} \lrnorm{\U}_F \lrnorm{\U^{-1}}_F,
\end{align} and $\tau$ is defined by 
\begin{align}
\label{equa:define:tau}
\tau := \max\left\{ \max_{1\le i\le m}\left\{ \left \lceil \frac{2d_i(d_i-1)\lrincir{\log \lrincir{\frac{2d_i}{|\rho_2|\cdot \log\lrincir{\frac{\rho}{\rho_2}}}} -1}}{(d_i+1)\log\lrincir{\frac{\rho}{|\rho_2|}}} \right \rceil \right\}, 0 \right\}.
\end{align} Moreover, useful constants $C_0$-$C_5$ are illustrated in Table \ref{table_constants}.
\begin{table}[!h]
\centering
\caption{Useful constants.}
\renewcommand\arraystretch{2.5}
\begin{tabular}{c|c|c}
\hline
$C_0 :=  \sigma_v^2 + \sigma_{\Vcal}^2 + G^2$ & $C_1 :=  \frac{3LG C_0}{\mu_{\Phi}^2} + \frac{C_0 L^2}{2\mu_{\Phi}^2}$ & $C_2 :=  \frac{3C_0 (L+1)}{2\mu_{\Phi}^2} + 3G^2 C_0$ \\ \hline
$C_3 :=  \frac{3L^2C_0}{2\mu_{\Phi}^2} + \frac{G^2}{2}$ & $C_4 :=  \frac{3G^2}{2}+\sigma_{\Vcal}^2$ & $C_5 :=  \frac{3\lrincir{\sigma_v^2+\sigma_{\Vcal}^2}}{\mu_{\Phi}}$ \\ \hline
\end{tabular}
\label{table_constants}
\end{table}

\subsection{Convergence Rate}

\label{subsect:regret:formulation}
Given a Markov chain $\zeta_T := \{ i_t \}_{t=0}^{T}$ over the network $\Ncal$, the regret of an optimization method $A$ is defined by 
\begin{align}
\label{equa:regret:formulation}
\Rcal_{\zeta_T}^{A}(\w) := \EE \sum_{t=1}^T f_{i_t}(\x_t) - \EE \sum_{t=1}^T  f_{i_t}(\w),
\end{align} where the local loss function $f_{i_t}$ is defined by $f_{i_t}(\y) := \EE_{\a\sim\Dcal_{i_t}} f_{i_t}\lrincir{\y; \a}$. We first bridge connection between the convergence and the regret of an optimization method.

\begin{Theorem}[Connection between convergence and regret]
\label{theorem:bridge:between:convergence:regret}
Given a Markov chain $\zeta_T$, and the sequence of model $\{\x_t\}_{t=1}^T$ produced by an optimization method $A$, we have   
\begin{align}
\nonumber
f(\bar{\x}_T) - f(\x^\ast) \le \frac{\Rcal_{\zeta_T}^A(\x^\ast)}{\lrincir{1 + \lrnorm{\P - \frac{1}{n}\I_n}_{\max} n}T},
\end{align} where $\bar{\x}_T := \frac{1}{T}\sum_{t=1}^T \x_t$, $\I_n:=\1\cdot\1^{\top}$ represents a constant matrix whose element is 1, and $\Rcal_{\zeta_T}^A(\x^\ast)$ is defined in \ref{equa:regret:formulation}. Note that $\lrnorm{\A}_{\max}$ denotes the maximal absolute value of element of $\A$, that is $\lrnorm{\A}_{\max} := \max_{i,j} |[\A]_{i,j}|$.
\end{Theorem}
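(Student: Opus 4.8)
The plan is to run an online-to-batch conversion while tracking explicitly how the randomly chosen active node $i_t$ reweights the per-node losses relative to the population objective $f = \frac{1}{n}\sum_{v\in\Vcal}f_v$. Because $f$ is an average of the (convex) local losses, Jensen's inequality gives $f(\bar{\x}_T) - f(\x^\ast) \le \frac{1}{T}\sum_{t=1}^T(f(\x_t) - f(\x^\ast))$, so after taking expectations it suffices to establish
\begin{align}
\nonumber
\EE\sum_{t=1}^T\left(f_{i_t}(\x_t) - f_{i_t}(\x^\ast)\right) \ge \left(1 + \lrnorm{\P - \frac{1}{n}\I_n}_{\max}\, n\right)\EE\sum_{t=1}^T\left(f(\x_t) - f(\x^\ast)\right),
\end{align}
which is precisely the asserted bound once one divides by $\left(1 + \lrnorm{\P - \frac{1}{n}\I_n}_{\max}n\right)T$.

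The second step is a per-round identity. Conditioning on the $\sigma$-algebra $\Fcal_{t-1}$ generated by the walk and the samples through step $t-1$ fixes both the previous node $i_{t-1}$ and the iterate $\x_t$ (any method $A$ in this template forms $\x_t$ from information available strictly before step $t$, so $\x_t$ is $\Fcal_{t-1}$-measurable and, given $i_{t-1}$, independent of $i_t$); since $i_t$ is then distributed as row $i_{t-1}$ of $\P$,
\begin{align}
\nonumber
\EE\left[f_{i_t}(\x_t)\mid\Fcal_{t-1}\right] = \sum_{j\in[n]}[\P]_{i_{t-1},j}\, f_j(\x_t) = f(\x_t) + \sum_{j\in[n]}\left([\P]_{i_{t-1},j} - \frac{1}{n}\right)f_j(\x_t),
\end{align}
and the same decomposition holds with $\x^\ast$ in place of $\x_t$. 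Bounding the perturbation sum by $\lrnorm{\P - \frac{1}{n}\I_n}_{\max}\sum_j f_j(\cdot) = \lrnorm{\P - \frac{1}{n}\I_n}_{\max}\, n\, f(\cdot)$ — an estimate that pairs the $\ell_\infty$ bound on the row deviation with the $\ell_1$ mass of $(f_j(\cdot))_{j}$, using nonnegativity of the node losses — is what introduces the factor $1 + \lrnorm{\P - \frac{1}{n}\I_n}_{\max}n$. Summing over $t$ and invoking Assumption~\ref{assumption:markov:chain} (the stationary law is uniform, $\pi_\ast = [\frac{1}{n},\dots,\frac{1}{n}]$, so $\EE f_{i_t}(\x^\ast) = f(\x^\ast)$ up to a geometrically small correction when the walk is not started at stationarity) collapses the $\x^\ast$ contributions, and rearranging delivers the displayed inequality.

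The delicate point — and the step I expect to be the main obstacle — is the sign in the perturbation estimate: the crude bound only confines $\sum_j([\P]_{i_{t-1},j} - \frac{1}{n})f_j(\cdot)$ to $\pm\lrnorm{\P - \frac{1}{n}\I_n}_{\max}\, n\, f(\cdot)$, whereas the target needs the $f_{i_t}$-regret to dominate, not merely track, $(1 + \lrnorm{\P - \frac{1}{n}\I_n}_{\max}n)$ times the uniform suboptimality in aggregate. Pinning the perturbation to the favorable side forces one to use the precise form of $\Rcal_{\zeta_T}^A$ — the same node $i_t$ weights both the $\x_t$-term and the $\x^\ast$-term, so only the difference $f_{i_t}(\x_t) - f_{i_t}(\x^\ast)$ is reweighted — together with nonnegativity and an implicit control on node heterogeneity; a lesser obstacle is bounding the $\Ocal{\rho^t}$ mismatch between the true marginal of $i_t$ and $\pi_\ast$ for an arbitrary starting node $v_{i_0}$, which contributes only an additive constant absorbed into the regret. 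I would dispatch the $\x^\ast$ terms first (a short, essentially deterministic computation) and devote the bulk of the argument to the iterate terms.
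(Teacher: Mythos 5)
Your decomposition is the same one the paper uses: condition on the history, write $\EE\left[f_{i_t}(\x_t) - f_{i_t}(\x^\ast)\mid \Fcal_{t-1}\right]$ as $\lrincir{f(\x_t)-f(\x^\ast)}$ plus the perturbation $\sum_{i=1}^n\lrincir{[\P]_{i_{t-1},i}-\frac{1}{n}}\lrincir{f_i(\x_t)-f_i(\x^\ast)}$, then try to absorb the perturbation into a multiplicative factor $\lrnorm{\P-\frac{1}{n}\I_n}_{\max} n$. The gap you flag at the end --- that the crude estimate only pins the perturbation to $\pm\lrnorm{\P-\frac{1}{n}\I_n}_{\max}\, n\, f(\cdot)$ and that you need it on the favorable side --- is real, and you do not close it, so your proposal is not a complete proof. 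The sign genuinely cannot be controlled: the row deviations $[\P]_{i_{t-1},i}-\frac{1}{n}$ sum to zero, so they cannot all lie on one side of any nonzero threshold, and the per-node suboptimalities $f_i(\x_t)-f_i(\x^\ast)$ need not be nonnegative ($\x^\ast$ minimizes the average $f$, not each $f_i$). Any honest bound produces an \emph{additive} error of order $\lrnorm{\P-\frac{1}{n}\I_n}_{\max}\sum_i\abs{f_i(\x_t)-f_i(\x^\ast)}$ with the unfavorable sign, not a multiplicative gain of $1+\lrnorm{\P-\frac{1}{n}\I_n}_{\max}n$.

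You should also know that the paper's own proof founders at exactly this point: its step marked $\textcircled{1}$ asserts $-\sum_i\lrincir{[\P]_{i_{t-1},i}-\frac{1}{n}}\lrincir{f_i(\x_t)-f_i(\x^\ast)} \le -\lrnorm{\P-\frac{1}{n}\I_n}_{\max}\sum_i\lrincir{f_i(\x_t)-f_i(\x^\ast)}$, justified only by the definition of $\lrnorm{\cdot}_{\max}$ as a maximum of absolute values --- which yields control in the opposite direction. So you have reconstructed the paper's argument up to and including its weakest link, and your instinct that this is the main obstacle is correct; neither your sketch nor the paper resolves it. Your auxiliary remarks (the $\Ocal{\rho^t}$ mismatch with stationarity for the $\x^\ast$ terms, nonnegativity of the node losses) are machinery the paper does not invoke, and they do not repair the sign problem either.
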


Theorem \ref{theorem:bridge:between:convergence:regret} shows that the analysis of convergence rate can be reduced to analyze the regret at the optimum of model $\x^\ast$. Therefore, we furthermore focus on the regret analysis of the proposed method, and then achieve its convergence rate directly due to Theorem \ref{theorem:bridge:between:convergence:regret}. 

\subsubsection{Regret Analysis for Convex Loss}
When the loss function $f_v$ of the node $v\in\Vcal$ is convex, its regret is bounded by the following theorem.
\begin{Theorem}[Convex loss]
\label{theorem:regret:convex:analysis}
Suppose $\{\x_t\}_{t=1}^T$ is yielded by Algorithm \ref{algo:marchon}, and choose non-increasing positive sequence $\{\eta_t\}_{t=1}^T$. Let $\hat{\tau} \ge \tau$ for any $t\in[T]$. Under Assumptions \ref{assumption:markov:chain}-\ref{assumption:bounded:domain}, we have
\begin{align}
\nonumber
& \Rcal^{\textsc{MarchOn}}_{\zeta_T}(\x^\ast) \\ \nonumber
\le & C_5 \sum_{t=1}^T \eta_t + \frac{3G^2}{\mu_{\Phi}}\sum_{t=1}^{\hat{\tau}} \eta_t \mathrm{+} \frac{3\lrincir{f(\x_{\hat{\tau}+1}) \mathrm{-} f(\x_{T+1})}}{\mu_{\Phi}} \mathrm{+} \sum_{t=\hat{\tau}+1}^T \lrincir{\frac{3C_1\hat{\tau}}{\mu_{\Phi}}  \sum_{j=t-\hat{\tau}}^{t-1}\eta_j^2 \mathrm{+} \frac{3\lrincir{C_2\mathrm{+}C_3\hat{\tau}}}{\mu_{\Phi}}\eta_t^2 \mathrm{+} \frac{3C_4 C_{\P}\rho^{\hat{\tau}}}{\mu_{\Phi}} \eta_t} \mathrm{+} \frac{R^2}{\eta_T},
\end{align}  when $f_v$ is general convex for any $v\in\Vcal$.
\end{Theorem}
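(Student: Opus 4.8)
The plan is to reduce the regret to the Bregman-divergence progress of the mirror step and then to control the resulting stochastic-gradient terms by the stationary quantities $\sigma_v^2$ and $\sigma_{\Vcal}^2$, up to a mixing error governed by $C_{\P}\rho^{\hat\tau}$. First I would extract a one-step inequality from the update \ref{equa:parameters:update}: combining the optimality condition of the constrained minimization over the decaying set $\Xcal_t$ with $\mu_{\Phi}$-strong convexity of $\Phi$ (and using that $\Xcal_t$ confines $\x_{t+1}$ to within $\eta_t^2\lrnorm{\nabla f_{i_t}(\x_t;\a)}^2$ of the gradient step, so $\lrnorm{\x_{t+1}-\x_t}=O(\eta_t)$), I would obtain, for the comparator $\x^\ast$,
$$\langle\nabla f_{i_t}(\x_t;\a),\x_t-\x^\ast\rangle\le\tfrac1{\eta_t}\big(B_{\Phi}(\x^\ast,\x_t)-B_{\Phi}(\x^\ast,\x_{t+1})\big)+\tfrac{c\,\eta_t}{\mu_{\Phi}}\lrnorm{\nabla f_{i_t}(\x_t;\a)}^2+r_t,$$
with $r_t=O(\eta_t^2)$ the remainder caused by the decaying-set constraint. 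Summing over $t$, Abel summation with $\{\eta_t\}$ non-increasing and $B_{\Phi}\le R^2$ (Assumption \ref{assumption:bounded:domain}) yields the $\tfrac{R^2}{\eta_T}$ term; taking the conditional expectation over the fresh sample $\a\sim\Dcal_{i_t}$, which is independent of $\x_t$ and $i_t$, turns the linear term into $\langle\nabla f_{i_t}(\x_t),\x_t-\x^\ast\rangle$, and convexity of $f_{i_t}$ gives $f_{i_t}(\x_t)-f_{i_t}(\x^\ast)\le\langle\nabla f_{i_t}(\x_t),\x_t-\x^\ast\rangle$. Hence $\Rcal^{\textsc{MarchOn}}_{\zeta_T}(\x^\ast)$ is at most $\tfrac{R^2}{\eta_T}$ plus $\tfrac{c}{\mu_{\Phi}}\sum_{t}\eta_t\,\EE\lrnorm{\nabla f_{i_t}(\x_t;\a)}^2$ plus lower-order pieces.

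The core of the proof is then bounding $\EE\lrnorm{\nabla f_{i_t}(\x_t;\a)}^2$, which I would split into the internal-noise piece ($\le\sigma_v^2$ by Assumption \ref{assumption:bounded:variance}), the heterogeneity piece $\EE\lrnorm{\nabla f_{i_t}(\x_t)-\nabla f(\x_t)}^2$, and $\lrnorm{\nabla f(\x_t)}^2$. Since $i_t$ is correlated with $\x_t$ along the trajectory, for $t>\hat\tau$ I would ``look back'' $\hat\tau$ steps: bound $\lrnorm{\x_t-\x_{t-\hat\tau}}\le\sum_{j=t-\hat\tau}^{t-1}\lrnorm{\x_{j+1}-\x_j}\le\tfrac1{\mu_{\Phi}}\sum_{j=t-\hat\tau}^{t-1}\eta_j\lrnorm{\nabla f_{i_j}(\x_j;\a)}$ using the $\Xcal_t$-control and strong convexity of $\Phi$, square with Cauchy--Schwarz, and take expectation with the uniform bound $\EE\lrnorm{\nabla f_{i_j}(\x_j;\a)}^2\le 3C_0$; together with $L$-smoothness (Assumption \ref{assumption:smooth}) this produces the double sum $\tfrac{3C_1\hat\tau}{\mu_{\Phi}}\sum_{j=t-\hat\tau}^{t-1}\eta_j^2$ and the single-step $\tfrac{3(C_2+C_3\hat\tau)}{\mu_{\Phi}}\eta_t^2$ terms. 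Since $\x_{t-\hat\tau}$ is $\mathcal{F}_{t-\hat\tau}$-measurable, I would condition on $\mathcal{F}_{t-\hat\tau}$ and invoke Assumption \ref{assumption:markov:chain}: as $\hat\tau\ge\tau$, the law of $i_t$ given $i_{t-\hat\tau}$ is within $C_{\P}\rho^{\hat\tau}$ of the uniform stationary distribution — exactly the role of $C_{\P},\tau,\rho$ imported from \cite{sun:2018:mcgd} — so $\EE[\lrnorm{\nabla f_{i_t}(\x_{t-\hat\tau})-\nabla f(\x_{t-\hat\tau})}^2\mid\mathcal{F}_{t-\hat\tau}]\le\sigma_{\Vcal}^2+C_{\P}\rho^{\hat\tau}\cdot O(G^2)$, which after the $\eta_t$ weight yields the $\tfrac{3C_4C_{\P}\rho^{\hat\tau}}{\mu_{\Phi}}\eta_t$ term and contributes the $\sigma_{\Vcal}^2$ part of $C_5\sum_t\eta_t$. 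For $\lrnorm{\nabla f(\x_t)}^2$ I would prove a descent lemma for the global objective $f$ from Assumption \ref{assumption:smooth} and the update, $f(\x_{t+1})\le f(\x_t)-c'\eta_t\lrnorm{\nabla f(\x_t)}^2+O(\eta_t^2)+(\text{mixing error})$, rearrange to $\eta_t\lrnorm{\nabla f(\x_t)}^2\le\tfrac1{c'}(f(\x_t)-f(\x_{t+1}))+\cdots$, and telescope from $\hat\tau+1$ to $T$ to get $\tfrac{3(f(\x_{\hat\tau+1})-f(\x_{T+1}))}{\mu_{\Phi}}$. The first $\hat\tau$ iterations I would bound crudely via $\lrnorm{\nabla f_v}\le G$ and $B_{\Phi}\le R^2$ (Assumptions \ref{assumption:bounded:gradient}, \ref{assumption:bounded:domain}), giving $\tfrac{3G^2}{\mu_{\Phi}}\sum_{t=1}^{\hat\tau}\eta_t$. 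Collecting everything and identifying the constants $C_1$--$C_5$ in Table \ref{table_constants} produces the stated bound.

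The hard part will be the look-back coupling: one must simultaneously keep $\lrnorm{\x_t-\x_{t-\hat\tau}}$ of order $\hat\tau\max_j\eta_j$ in the presence of the nonstandard decaying-set constraint (whose optimality condition has to be combined carefully with strong convexity of $\Phi$ to certify an $O(\eta_t)$ step rather than something larger), apply the mixing estimate to the correct $\mathcal{F}_{t-\hat\tau}$-measurable quantity, and transfer the bound back from $\x_{t-\hat\tau}$ to $\x_t$ without the $L$-smoothness corrections degrading the rate. Bookkeeping the interaction between those smoothness corrections and the telescoping $f(\x_{\hat\tau+1})-f(\x_{T+1})$ term is precisely what pins down $C_1,C_2,C_3$ and the $\hat\tau\sum_j\eta_j^2$ double sum, so that is where I expect the real work to lie.
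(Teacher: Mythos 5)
Your proposal follows essentially the same route as the paper: convexity plus the three-point mirror-descent lemma and $\mu_{\Phi}$-strong convexity of $\Phi$ reduce the regret to $\frac{R^2}{\eta_T}$ plus $\frac{1}{\mu_{\Phi}}\sum_t \eta_t\,\EE\lrnorm{\nabla f_{i_t}(\x_t;\a)}^2$, the latter is split into internal variance, heterogeneity, and $\lrnorm{\nabla f(\x_t)}^2$, and the last piece is handled exactly by the look-back-$\hat\tau$ coupling, the mixing bound $\lrnorm{\P^{\hat\tau}-\bPi_\ast}_\infty\le C_{\P}\rho^{\hat\tau}$, and a telescoped descent argument yielding $f(\x_{\hat\tau+1})-f(\x_{T+1})$, which is the content of the paper's Lemmas on $\lrnorm{\x_{t+1}-\x_t}$, $\lrnorm{\x_t-\x_{t-\hat\tau}}$, and the nonconvex regret lemma. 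The only cosmetic difference is your extra $O(\eta_t^2)$ remainder attributed to the decaying-set constraint, which the paper avoids by applying the three-point lemma directly to the constrained update.
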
 Setting appropriate step size $\eta_t$ with $t\in[T]$, we achieve the following bound of regret.
\begin{Corollary}
\label{corollary:regret:convex:analysis}
Suppose $\{\x_t\}_{t=1}^T$ is yielded by Algorithm \ref{algo:marchon}. Let $\hat{\tau} = \tau$, and choose $\eta_t$ by
\begin{align}
\nonumber
\eta_t = \frac{1}{\sqrt{(C_5+C_{\P}\cdot \rho^{\tau})\sqrt{t}}}.
\end{align} Under Assumptions \ref{assumption:markov:chain}-\ref{assumption:bounded:domain}, we have
\begin{align}
\nonumber
\Rcal^{\textsc{MarchOn}}_{\zeta_T}(\x^\ast) \lesssim   \sqrt{C_0+C_{\P} \rho^{\tau}}\cdot \sqrt{T} + \lrincir{\tau^2 + \tau} \cdot \ln T + \sqrt{\tau}, 
\end{align} when $f_v$ is general convex for any $v\in\Vcal$.
\end{Corollary}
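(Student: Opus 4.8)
The plan is to substitute the prescribed step size into the regret bound of Theorem~\ref{theorem:regret:convex:analysis} with $\hat{\tau}=\tau$ and then estimate each of its six summands by elementary bounds on $\sum_t t^{-1/2}$ and $\sum_t t^{-1}$. Write $c:=\lrincir{C_5+C_{\P}\rho^{\tau}}^{-1/2}$, so that the step size is $\eta_t=c\,t^{-1/2}$; in particular $1/\eta_T=\sqrt{C_5+C_{\P}\rho^{\tau}}\,\sqrt{T}$, $\sum_{t=1}^{T}\eta_t\le 2c\sqrt{T}$, and $\sum_{t=1}^{T}\eta_t^2\le c^2(1+\ln T)$. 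I will also use that $c^2\le 1/C_5$ and that each of $C_1,C_2,C_3,C_4$ is a problem-dependent (hence $T$-independent) multiple of $C_5$ --- this is immediate from Table~\ref{table_constants} together with $C_5=3(\sigma_v^2+\sigma_{\Vcal}^2)/\mu_{\Phi}$ and $C_0=\sigma_v^2+\sigma_{\Vcal}^2+G^2$, and it lets every $C_i$ be absorbed into the $\lesssim$ of the statement; similarly $C_5\le 3C_0/\mu_{\Phi}$.

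The only step that is not a one-line sum estimate is the delayed double sum $\sum_{t=\hat{\tau}+1}^{T}\sum_{j=t-\hat{\tau}}^{t-1}\eta_j^2$: I would reindex by observing that each $j\le T-1$ lies in at most $\hat{\tau}$ of the inner windows, so this double sum is at most $\hat{\tau}\sum_{j=1}^{T}\eta_j^2\le\hat{\tau}c^2(1+\ln T)$, and multiplied by $3C_1\hat{\tau}/\mu_{\Phi}$ it contributes $\Ocal{\tau^2\ln T}$. In the same way $\sum_{t=\hat{\tau}+1}^{T}3(C_2+C_3\hat{\tau})\eta_t^2/\mu_{\Phi}=\Ocal{(1+\tau)\ln T}$. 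For the Markov-mixing term I would use the cancellation $C_{\P}\rho^{\tau}/\sqrt{C_5+C_{\P}\rho^{\tau}}\le\sqrt{C_{\P}\rho^{\tau}}$, which turns $\sum_{t=\hat{\tau}+1}^{T}3C_4C_{\P}\rho^{\hat{\tau}}\eta_t/\mu_{\Phi}\le(6C_4/\mu_{\Phi})C_{\P}\rho^{\tau}c\sqrt{T}$ into $\lesssim\sqrt{C_{\P}\rho^{\tau}}\,\sqrt{T}$; the same trick ($C_5/\sqrt{C_5+C_{\P}\rho^{\tau}}\le\sqrt{C_5}$) turns the leading term $C_5\sum_{t=1}^{T}\eta_t\le 2C_5c\sqrt{T}$ into $\lesssim\sqrt{C_5}\sqrt{T}\lesssim\sqrt{C_0}\sqrt{T}$. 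The term $3G^2\sum_{t=1}^{\hat{\tau}}\eta_t/\mu_{\Phi}\le(6G^2/\mu_{\Phi})c\sqrt{\tau}$ is $\lesssim\sqrt{\tau}$ since $c\le 1/\sqrt{C_5}$, and the last term $R^2/\eta_T=R^2\sqrt{C_5+C_{\P}\rho^{\tau}}\,\sqrt{T}\lesssim(\sqrt{C_0}+\sqrt{C_{\P}\rho^{\tau}})\sqrt{T}$.

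Finally the telescoped loss gap $3\lrincir{f(\x_{\hat{\tau}+1})-f(\x_{T+1})}/\mu_{\Phi}$ I would bound by a constant using Assumptions~\ref{assumption:bounded:gradient}--\ref{assumption:bounded:domain}: $f$ inherits the gradient bound $G$, so $\abs{f(\x_{\hat{\tau}+1})-f(\x_{T+1})}\le G\lrnorm{\x_{\hat{\tau}+1}-\x_{T+1}}\le GR=\Ocal{1}$, which is absorbed. Summing the pieces, merging $\sqrt{C_0}\sqrt{T}$ with $\sqrt{C_{\P}\rho^{\tau}}\sqrt{T}$ via $\sqrt{a}+\sqrt{b}\le\sqrt{2}\sqrt{a+b}$, and folding the lower-order $\Ocal{(1+\tau)\ln T}$ term into $\Ocal{\tau^2\ln T}$, gives $\Rcal^{\textsc{MarchOn}}_{\zeta_T}(\x^\ast)\lesssim\sqrt{C_0+C_{\P}\rho^{\tau}}\,\sqrt{T}+(\tau^2+\tau)\ln T+\sqrt{\tau}$. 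I expect the main obstacle to be the constant bookkeeping rather than any analytic difficulty: one must verify that $C_1,\dots,C_4$ really are $T$-independent multiples of $C_5$ so they vanish into $\lesssim$, and keep the $C_{\P}\rho^{\tau}$ contribution at the $\sqrt{C_{\P}\rho^{\tau}}\sqrt{T}$ scale through the cancellations above; everything else is a routine sum estimate.
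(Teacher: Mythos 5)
Your derivation is correct and is essentially the intended route: substitute the step size into Theorem \ref{theorem:regret:convex:analysis} with $\hat{\tau}=\tau$, use $\sum_{t\le T}t^{-1/2}\le 2\sqrt{T}$ and $\sum_{t\le T}t^{-1}\le 1+\ln T$, reindex the delayed double sum so it costs an extra factor $\tau$, and exploit the cancellations $C_5 c^2\le 1$ and $C_{\P}\rho^{\tau}c\le\sqrt{C_{\P}\rho^{\tau}}$ (with $c=(C_5+C_{\P}\rho^{\tau})^{-1/2}$) to reach the stated bound, with all problem-dependent constants absorbed into $\lesssim$. The only caveat is that you have (sensibly) read the step size as $\eta_t=\bigl((C_5+C_{\P}\rho^{\tau})\,t\bigr)^{-1/2}\propto t^{-1/2}$, consistent with the paper's remark that \textsc{MarchOn} uses $\eta_t\propto 1/\sqrt{t}$; the formula printed in the corollary literally parses as $\eta_t\propto t^{-1/4}$, which would only yield a $T^{3/4}$ regret, so this is a typo in the statement rather than a gap in your argument.
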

\begin{Remark}
Recalling Theorem \ref{theorem:bridge:between:convergence:regret}, and setting the step size $\{\eta_t\}_{t=1}^T$ as in Corollary \ref{corollary:regret:convex:analysis}, we have 
\begin{align}
\nonumber
f(\bar{\x}_T) - f(\x^\ast) \lesssim \Ocal{\frac{1}{\sqrt{T}} + \frac{\ln T}{T} + \frac{1}{T}}
\end{align} for convex $f_v$ with $v\in\Vcal$.  That is, Algorithm \ref{algo:marchon} converges at the rate of $\Ocal{\frac{1}{\sqrt{T}} + \frac{\ln T}{T} + \frac{1}{T}}$ for general convex loss.
\end{Remark}

As illustrated in Table \ref{table:convergence:rate}, the proposed method achieves the best convergence rate. The reason is that \textit{MarchOn} could choose larger step size than those existing methods. Specifically, when $f_v$ with any $v\in\Vcal$ is convex,  \textit{MCGD} chooses $\eta_t \propto \frac{1}{t^q}$ with $\frac{1}{2}<q<1$ \cite{sun:2018:mcgd}, \textit{MarkovSGD} chooses $\eta_t \propto \frac{\ln(\ln(t))\ln^2(t)}{\sqrt{t}}$ \cite{doan:2023:mgd:tac}, \textit{MCSGD} and \textit{EMD} choose $\eta_t \propto \frac{1}{\sqrt{t\ln (t)}}$    \cite{even:2022:mcss:icml,duchi:2012:siamjoo}. However, the proposed method, namely \textit{MarchOn}, chooses $\eta_t \propto \frac{1}{\sqrt{t}}$, which is larger than all of existing methods.

\subsubsection{Regret Analysis for Strongly Convex Loss}
The strongly convex function is usually defined as follows.
\begin{Definition}
\label{definition:strongly:convex:loss}
$f_v$ with $v\in\Vcal$ is $\mu_f$-strongly convex, that is, for any $\x$ and $\y$, 
\begin{align}
\nonumber
f_v(\y) - f_v(\x) \ge \lrangle{\nabla f_v(\x), \y-\x} + \frac{\mu_f}{2}\lrnorm{\x-\y}^2.
\end{align}
\end{Definition} When the loss function $f_v$ with $v\in\Vcal$ is strongly convex, the regret of Algorithm \ref{algo:marchon} is bounded as follows.
\begin{Theorem}[Strongly convex loss]
\label{theorem:regret:strongly:convex:analysis}
Suppose $\{\x_t\}_{t=1}^T$ is yielded by Algorithm \ref{algo:marchon}, and choose non-increasing positive sequence $\{\eta_t\}_{t=1}^T$. Let $\hat{\tau} \ge \tau$ for any $t\in[T]$. Under Assumptions \ref{assumption:markov:chain}-\ref{assumption:bounded:domain}, we have
\begin{align}
\nonumber
& \Rcal^{\textsc{MarchOn}}_{\zeta_T}(\x^\ast) \\ \nonumber
\le & C_5 \sum_{t=1}^T \eta_t + \frac{3G^2}{\mu_{\Phi}}\sum_{t=1}^{\hat{\tau}} \eta_t + \frac{3\lrincir{f(\x_{\hat{\tau}+1}) - f(\x_{T+1})}}{\mu_{\Phi}} + \sum_{t=\hat{\tau}+1}^T \lrincir{\frac{3C_1\hat{\tau}}{\mu_{\Phi}}  \sum_{j=t-\hat{\tau}}^{t-1}\eta_j^2 + \frac{3\lrincir{C_2+C_3\hat{\tau}}}{\mu_{\Phi}}\eta_t^2 + \frac{3C_4 C_{\P}\rho^{\hat{\tau}}}{\mu_{\Phi}} \eta_t} \\ \nonumber 
& + \frac{\mu_{\Phi} R^2}{2} \sum_{t=2}^T \lrincir{\frac{1}{\eta_t} - \frac{1}{\eta_{t-1}} - \frac{\mu_f}{\mu_{\Phi}}} + \frac{R^2}{\eta_1},
\end{align}  when $f_v$ is $\mu_f$-strongly convex for any $v\in\Vcal$.
\end{Theorem}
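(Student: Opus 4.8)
Since the claimed bound differs from that of Theorem~\ref{theorem:regret:convex:analysis} only in its last two terms, the plan is to replay the convex-loss argument verbatim and to deviate from it only at the single place where the convexity of $f_v$ is invoked. First I would set up the per-step analysis of the constrained update \ref{equa:parameters:update}: writing $\g_t:=\nabla f_{i_t}(\x_t;\a)$, the first-order optimality condition for the minimiser over the decaying set $\Xcal_t$, the three-point identity for the Bregman divergence, and a Young-type bound on $\lrangle{\g_t,\x_t-\x_{t+1}}$ using the $\mu_{\Phi}$-strong convexity of $\Phi$ (which cancels the $\frac{1}{\eta_t}B_{\Phi}(\x_{t+1},\x_t)$ term) give, for any comparator lying in $\Xcal_t$,
\begin{align}
\nonumber
\lrangle{\g_t,\x_t-\x^\ast}\le\frac{\eta_t}{2\mu_{\Phi}}\lrnorm{\g_t}^2+\frac{1}{\eta_t}\lrincir{B_{\Phi}(\x^\ast,\x_t)-B_{\Phi}(\x^\ast,\x_{t+1})},
\end{align} with the $\Ocal{\eta_t^2}$ displacement from replacing $\x^\ast$ by its projection onto $\Xcal_t$ contributing only a higher-order term. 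Summing over $t$, the gradient-norm part yields $C_5\sum_t\eta_t$ after absorbing the variances $\sigma_v^2,\sigma_{\Vcal}^2$, and the Bregman part telescopes.

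Next I would pass from $\g_t$ to $\nabla f_{i_t}(\x_t)$ and then to $\nabla f(\x_t)$; the first replacement vanishes in conditional expectation up to a $\sigma_v^2$ contribution, and the second carries the genuinely Markovian bias, which I would control exactly as in the convex case: peel off a burn-in window of length $\hat\tau\ge\tau$ (producing $\frac{3G^2}{\mu_{\Phi}}\sum_{t=1}^{\hat\tau}\eta_t$), use Assumption~\ref{assumption:smooth} to compare $\x_t$ with $\x_{t-\hat\tau},\dots,\x_{t-1}$ (producing the $\hat\tau$-window double sum $\sum_{j=t-\hat\tau}^{t-1}\eta_j^2$ and the $\eta_t^2$ terms with constants $C_1,C_2,C_3$), telescope the resulting function-value slack (producing $\frac{3(f(\x_{\hat\tau+1})-f(\x_{T+1}))}{\mu_{\Phi}}$), and invoke the geometric ergodicity of $\P$ encoded in $C_{\P},\rho,\tau$ to bound the residual bias of the $\hat\tau$-step transition by $\frac{3C_4C_{\P}\rho^{\hat\tau}}{\mu_{\Phi}}\eta_t$. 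All of these terms coincide with the convex case and I would simply cite that derivation.

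The only substantive change is the step where convexity passes from the loss gap to the gradient inner product. Where the convex proof writes $f_{i_t}(\x_t)-f_{i_t}(\x^\ast)\le\lrangle{\nabla f_{i_t}(\x_t),\x_t-\x^\ast}$, Definition~\ref{definition:strongly:convex:loss} instead gives $f_{i_t}(\x_t)-f_{i_t}(\x^\ast)\le\lrangle{\nabla f_{i_t}(\x_t),\x_t-\x^\ast}-\frac{\mu_f}{2}\lrnorm{\x_t-\x^\ast}^2$. Summing over $t$, every term above is unchanged and I additionally pick up $-\frac{\mu_f}{2}\sum_{t=1}^T\lrnorm{\x_t-\x^\ast}^2$. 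It then remains to combine this with the telescoped Bregman term: $\sum_{t=1}^T\frac{1}{\eta_t}\lrincir{B_{\Phi}(\x^\ast,\x_t)-B_{\Phi}(\x^\ast,\x_{t+1})}=\frac{B_{\Phi}(\x^\ast,\x_1)}{\eta_1}-\frac{B_{\Phi}(\x^\ast,\x_{T+1})}{\eta_T}+\sum_{t=2}^T B_{\Phi}(\x^\ast,\x_t)\lrincir{\frac{1}{\eta_t}-\frac{1}{\eta_{t-1}}}$; I drop the negative endpoint term, bound $B_{\Phi}(\x^\ast,\x_1)\le R^2$ and each $B_{\Phi}(\x^\ast,\x_t)\le R^2$ by Assumption~\ref{assumption:bounded:domain} (valid since $\frac{1}{\eta_t}-\frac{1}{\eta_{t-1}}\ge0$ for non-increasing $\eta_t$), and fold in the $-\frac{\mu_f}{2}\lrnorm{\x_t-\x^\ast}^2$ terms using the $\mu_{\Phi}$-strong convexity of $\Phi$ together with the domain bound, so that each index contributes $\frac{\mu_{\Phi}R^2}{2}\lrincir{\frac{1}{\eta_t}-\frac{1}{\eta_{t-1}}-\frac{\mu_f}{\mu_{\Phi}}}$, leaving the residual $\frac{R^2}{\eta_1}$ from $t=1$. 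Assembling with the unchanged terms yields the stated bound.

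I expect the technically heaviest part to be, as in the convex case, the Markov-chain bias estimate — making the burn-in/mixing bookkeeping close with a window of length $\hat\tau$ and the geometric factor $\rho^{\hat\tau}$ — but this is inherited wholesale from Theorem~\ref{theorem:regret:convex:analysis}. The part genuinely new here is the final reorganisation of the telescoping sum: one must keep the $-\frac{\mu_f}{2}\lrnorm{\x_t-\x^\ast}^2$ terms coupled to the step-size differences $\frac{1}{\eta_t}-\frac{1}{\eta_{t-1}}$ so that the strong-convexity gain survives as the $-\frac{\mu_f}{\mu_{\Phi}}$ inside the sum rather than being discarded; getting this coupling right (and verifying it is compatible with the bounded-domain and strong-convexity constants) is the one place where care beyond the convex proof is required.
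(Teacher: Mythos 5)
Your proposal follows the paper's proof essentially verbatim: the paper likewise replays the convex argument, replaces the convexity inequality by $f_{i_t}(\x_t)-f_{i_t}(\x^\ast)=\lrangle{\nabla f_{i_t}(\x_t),\x_t-\x^\ast}-B_{f_{i_t}}(\x^\ast,\x_t)$ with $B_{f_{i_t}}(\x^\ast,\x_t)\ge\frac{\mu_f}{2}\lrnorm{\x_t-\x^\ast}^2$, and then couples the resulting $-\frac{\mu_f}{2}\lrnorm{\x_t-\x^\ast}^2$ terms with the telescoped Bregman differences before bounding everything by $R^2$, exactly as you describe. The one delicate point you should be aware of --- replacing $\lrnorm{\x_t-\x^\ast}^2$ by $R^2$ inside a bracket $\lrincir{\frac{1}{\eta_t}-\frac{1}{\eta_{t-1}}-\frac{\mu_f}{\mu_{\Phi}}}$ whose sign is not controlled --- is present in the paper's own proof as well, so your route and the paper's coincide.
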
 Setting appropriate step size $\eta_t$ with $t\in[T]$, we achieve the following bound of regret.
\begin{Corollary}
\label{corollary:regret:strongly:convex:analysis}
Suppose $\{\x_t\}_{t=1}^T$ is yielded by Algorithm \ref{algo:marchon}. Let $\hat{\tau} = \tau$, and choose $\eta_t$ by
\begin{align}
\nonumber
\eta_t = \min\left\{\sqrt{C_5+C_{\P}\cdot \rho^{\tau}}, \frac{\mu_{\Phi}}{\mu_f}\right\}\cdot \frac{1}{t}.
\end{align} Under Assumptions \ref{assumption:markov:chain}-\ref{assumption:bounded:domain}, we have
\begin{align}
\nonumber
\Rcal^{\textsc{MarchOn}}_{\zeta_T}(\x^\ast) \lesssim \sqrt{C_0+C_{\P} \rho^{\tau}}\cdot \ln T + \tau^2 + \tau + \ln \tau,
\end{align}  when $f_v$ is general convex for any $v\in\Vcal$.
\end{Corollary}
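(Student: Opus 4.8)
The statement is a corollary of Theorem~\ref{theorem:regret:strongly:convex:analysis}, so the plan is to substitute the prescribed schedule $\hat\tau=\tau$, $\eta_t=c/t$ with $c:=\min\{\sqrt{C_5+C_\P\rho^\tau},\,\mu_\Phi/\mu_f\}$, into the regret bound of that theorem and to estimate each summand on its right-hand side, keeping only the variance/mixing factor $\sqrt{C_0+C_\P\rho^\tau}$ explicit on the leading term and absorbing the moduli $\mu_\Phi,\mu_f,L,G,R$ and the constants $C_0$--$C_5$ of Table~\ref{table_constants} into the hidden factor of $\lesssim$. The elementary ingredients are: the harmonic bound $\sum_{t=1}^{N}\frac1t\le 1+\ln N$ and $\sum_{t\ge1}\frac1{t^2}=\frac{\pi^2}{6}$; the observation that in $\sum_{t=\tau+1}^{T}\sum_{j=t-\tau}^{t-1}\eta_j^2$ each index $j$ is counted at most $\tau$ times, so the double sum is $\le\tau\sum_{j\ge1}\eta_j^2$; the telescoping identity $\sum_{t=2}^{T}(\frac1{\eta_t}-\frac1{\eta_{t-1}})=\frac1{\eta_T}-\frac1{\eta_1}$; the two consequences of the $\min$, namely $c\le\mu_\Phi/\mu_f=\Ocal{1}$ and $c^2\le C_5+C_\P\rho^\tau$; and $C_\P\rho^\tau=\Ocal{1}$ together with $C_1,\dots,C_5=\Ocal{C_0}$, both immediate from \eqref{equa:define:tau} and Table~\ref{table_constants}.

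With $\eta_t=c/t$ the terms split cleanly. The term $C_5\sum_{t=1}^{T}\eta_t=C_5c\sum_{t=1}^{T}\frac1t$ and the mixing term $\sum_{t=\tau+1}^{T}\frac{3C_4C_\P\rho^\tau}{\mu_\Phi}\eta_t=\frac{3C_4C_\P\rho^\tau}{\mu_\Phi}c\sum_{t=\tau+1}^{T}\frac1t$ together produce the leading $\lesssim\sqrt{C_0+C_\P\rho^\tau}\,\ln T$ (via $c\le\sqrt{C_5+C_\P\rho^\tau}$ and $C_\P\rho^\tau=\Ocal{1}$). The lagged term $\sum_{t=\tau+1}^{T}\frac{3C_1\tau}{\mu_\Phi}\sum_{j=t-\tau}^{t-1}\eta_j^2\le\frac{3C_1\tau}{\mu_\Phi}\cdot\tau\cdot c^2\cdot\frac{\pi^2}{6}=\Ocal{\tau^2}$; the term $\sum_{t=\tau+1}^{T}\frac{3(C_2+C_3\tau)}{\mu_\Phi}\eta_t^2\le\frac{3(C_2+C_3\tau)}{\mu_\Phi}c^2\cdot\frac{\pi^2}{6}=\Ocal{\tau}$; the warm-up term $\frac{3G^2}{\mu_\Phi}\sum_{t=1}^{\tau}\eta_t=\frac{3G^2}{\mu_\Phi}c\sum_{t=1}^{\tau}\frac1t=\Ocal{\ln\tau}$; and $\frac{R^2}{\eta_1}=\frac{R^2}{c}=\Ocal{1}$. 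The difference $\frac{3(f(\x_{\tau+1})-f(\x_{T+1}))}{\mu_\Phi}$ is $\Ocal{1}$ since $f=\frac1n\sum_v f_v$ is $L$-smooth with $\lrnorm{\nabla f}\le G$, so by Assumptions~\ref{assumption:smooth},~\ref{assumption:bounded:gradient},~\ref{assumption:bounded:domain} it is at most $\frac{3}{\mu_\Phi}(GR+\frac{L}{2}R^2)$. Adding all pieces gives $\Rcal^{\textsc{MarchOn}}_{\zeta_T}(\x^\ast)\lesssim\sqrt{C_0+C_\P\rho^\tau}\,\ln T+\tau^2+\tau+\ln\tau$, and dividing by the $\Theta(T)$ factor of Theorem~\ref{theorem:bridge:between:convergence:regret} yields the $\Ocal{\frac{\ln T}{T}+\frac1T}$ convergence rate for strongly convex loss.

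The one term that is not a mechanical harmonic/Basel estimate is $\frac{\mu_\Phi R^2}{2}\sum_{t=2}^{T}\bigl(\frac1{\eta_t}-\frac1{\eta_{t-1}}-\frac{\mu_f}{\mu_\Phi}\bigr)$, and this is exactly where the cap $c\le\mu_\Phi/\mu_f$ is used: with $\eta_t=c/t$ the difference $\frac1{\eta_t}-\frac1{\eta_{t-1}}=\frac1c$ is constant, so the term collapses to $\frac{\mu_\Phi R^2}{2}(T-1)\bigl(\frac1c-\frac{\mu_f}{\mu_\Phi}\bigr)$, which vanishes precisely when $c=\mu_\Phi/\mu_f$. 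I expect the genuinely delicate point of the proof to be controlling this residual Bregman-divergence contribution --- verifying that the $1/t$ schedule, capped at $\mu_\Phi/\mu_f$, is matched to the strong-convexity modulus so that this term does not generate anything linear in $T$ --- after which assembling the $\Ocal{\ln T}$, $\Ocal{\tau^2}$, $\Ocal{\tau}$ and $\Ocal{\ln\tau}$ contributions above is routine.
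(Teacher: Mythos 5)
Your overall route --- substitute $\hat\tau=\tau$ and $\eta_t=c/t$ into the bound of Theorem \ref{theorem:regret:strongly:convex:analysis} and estimate each summand with harmonic and Basel sums --- is the intended one (the paper gives no separate proof of the corollary; it is meant to follow by exactly this substitution), and your accounting of the first six terms is correct under your stated convention of absorbing $\mu_\Phi,\mu_f,L,G,R$ and the $C_i$ prefactors while keeping $\sqrt{C_0+C_\P\rho^\tau}$ explicit. The term $\frac{3(f(\x_{\tau+1})-f(\x_{T+1}))}{\mu_\Phi}=\Ocal{1}$ via Assumptions \ref{assumption:smooth}, \ref{assumption:bounded:gradient}, \ref{assumption:bounded:domain} is also fine.

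However, there is a genuine gap at precisely the term you defer. You correctly compute that with $\eta_t=c/t$ the residual collapses to $\frac{\mu_\Phi R^2}{2}(T-1)\lrincir{\frac{1}{c}-\frac{\mu_f}{\mu_\Phi}}$, and you observe it vanishes only when $c=\mu_\Phi/\mu_f$. But the prescribed $c=\min\bigl\{\sqrt{C_5+C_\P\rho^\tau},\,\mu_\Phi/\mu_f\bigr\}$ forces $c\le\mu_\Phi/\mu_f$, hence $\frac{1}{c}-\frac{\mu_f}{\mu_\Phi}\ge 0$: the residual is nonnegative and, whenever $\sqrt{C_5+C_\P\rho^\tau}<\mu_\Phi/\mu_f$ so that the minimum is attained at the first argument, it is \emph{strictly positive and linear in $T$}, which destroys the claimed $\Ocal{\ln T}$ bound. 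Writing that you ``expect'' this to be the delicate point does not discharge it; your own calculation shows the argument fails in that regime. To close the proof one must either take $c=\mu_\Phi/\mu_f$ outright (equivalently, the $\min$ should be a $\max$, since killing the residual requires $\frac{1}{\eta_t}-\frac{1}{\eta_{t-1}}\le\frac{\mu_f}{\mu_\Phi}$, i.e.\ $c\ge\mu_\Phi/\mu_f$), or return to line \ref{equa:regret:marchon:proof} of the theorem's proof, where the coefficient multiplies $\lrnorm{\x_t-\x_\ast}^2$, and argue the contribution is nonpositive before it is upper-bounded by $R^2$ --- which again only works if the coefficient is nonpositive, i.e.\ $c\ge\mu_\Phi/\mu_f$. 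A secondary, smaller issue: your claim that $C_\P\rho^\tau=\Ocal{1}$ is ``immediate'' from \ref{equa:define:tau} is not; $\tau$ is defined so that Lemma \ref{lemma:distance:stationary:probabilition} holds for $t\ge\tau$, which does not by itself cap $C_\P\rho^\tau$ by an absolute constant, though the corollary's statement implicitly treats it as bounded.
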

\begin{Remark}
Recalling Theorem \ref{theorem:bridge:between:convergence:regret}, and setting the step size $\{\eta_t\}_{t=1}^T$ as in Corollary \ref{corollary:regret:strongly:convex:analysis}, we have 
\begin{align}
\nonumber
f(\bar{\x}_T) - f(\x^\ast) \lesssim \Ocal{\frac{\ln T}{T} + \frac{1}{T}}\end{align} for strongly convex $f_v$ with $v\in\Vcal$.  That is, Algorithm \ref{algo:marchon} converges at the rate of $\Ocal{\frac{\ln T}{T} + \frac{1}{T}}$ for strongly convex loss.
\end{Remark}

\subsubsection{Regret Analysis for Non-convex Loss}

\begin{Theorem}[Non-convex loss] 
\label{theorem:regret:nonconvex:analysis}
Suppose $\{\x_t\}_{t=1}^T$ is yielded by Algorithm \ref{algo:marchon}, choose $\eta_t = \eta>0$ for any $t\in[T]$, and let $\hat{\tau} \ge \tau$ for any $t\in[T]$. Under Assumptions \ref{assumption:markov:chain}-\ref{assumption:bounded:domain}, we have
\begin{align}
\nonumber
\EE \sum_{t=1}^T \lrnorm{\nabla f(\x_t)}^2  \le G^2\hat{\tau} + \frac{f(\x_{\hat{\tau}+1}) - f(\x_{T+1})}{\eta} + \lrincir{C_1 \hat{\tau}^2 \eta + \lrincir{C_2+C_3\hat{\tau}}\eta + C_4 C_{\P} \rho^{\hat{\tau}}}(T - \hat{\tau}).
\end{align} 
\end{Theorem}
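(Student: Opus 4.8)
The plan is to run the classical smooth non-convex descent argument — a one-step inequality, summed and telescoped — but with the two adaptations the Markov-chain federated setting forces: the first $\hat{\tau}$ steps are handled crudely as a ``burn-in'', and the bias coming from sampling the node $i_t$ along a Markov chain rather than from its stationary law $\pi_\ast=[\tfrac1n,\dots,\tfrac1n]$ (Assumption \ref{assumption:markov:chain}) is controlled by anchoring $\hat{\tau}$ steps into the past. Concretely, I first split $\EE\sum_{t=1}^T\lrnorm{\nabla f(\x_t)}^2=\EE\sum_{t=1}^{\hat{\tau}}\lrnorm{\nabla f(\x_t)}^2+\EE\sum_{t=\hat{\tau}+1}^{T}\lrnorm{\nabla f(\x_t)}^2$; since $\nabla f=\tfrac1n\sum_v\nabla f_v$, the triangle inequality and Assumption \ref{assumption:bounded:gradient} give $\lrnorm{\nabla f(\x_t)}\le G$, so the first sum is at most $G^2\hat{\tau}$. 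All the work is then for $t\ge\hat{\tau}+1$, where anchoring at time $t-\hat{\tau}$ stays inside $[1,T]$.

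For such $t$, apply $L$-smoothness (Assumption \ref{assumption:smooth}) to get $f(\x_{t+1})\le f(\x_t)+\lrangle{\nabla f(\x_t),\x_{t+1}-\x_t}+\tfrac{L}{2}\lrnorm{\x_{t+1}-\x_t}^2$. The first-order optimality condition of \ref{equa:parameters:update}, together with $\mu_{\Phi}$-strong convexity of $\Phi$ (so $\lrangle{\nabla\Phi(\x_{t+1})-\nabla\Phi(\x_t),\x_{t+1}-\x_t}\ge\mu_{\Phi}\lrnorm{\x_{t+1}-\x_t}^2$), yields $\lrangle{\nabla f_{i_t}(\x_t;\a),\x_{t+1}-\x_t}\le-\tfrac{\mu_{\Phi}}{\eta}\lrnorm{\x_{t+1}-\x_t}^2$, while the trust region defining $\Xcal_t$ in \ref{equa:decaying:set:W} bounds $\lrnorm{\x_{t+1}-\x_t}$ by $O(\eta\lrnorm{\nabla f_{i_t}(\x_t;\a)})$ up to an $O(\eta^2)$ slack. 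Writing $\nabla f(\x_t)=\nabla f_{i_t}(\x_t;\a)+\bigl(\nabla f_{i_t}(\x_t)-\nabla f_{i_t}(\x_t;\a)\bigr)+\bigl(\nabla f(\x_t)-\nabla f_{i_t}(\x_t)\bigr)$ and using Young's inequality on the inner products, I absorb $\lrnorm{\x_{t+1}-\x_t}^2$ and rearrange the smoothness inequality into the schematic form $\lrnorm{\nabla f(\x_t)}^2\le\tfrac{c}{\eta}\bigl(f(\x_t)-f(\x_{t+1})\bigr)+(\text{internal-variance})+(\text{Markov-bias})$; taking expectations, Assumption \ref{assumption:bounded:variance} and the zero mean of $\nabla f_{i_t}(\x_t)-\nabla f_{i_t}(\x_t;\a)$ given $(\x_t,i_t)$ make the internal-variance part $O(\sigma_v^2\eta)$.

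The crux is the Markov-bias contribution, essentially $\EE\lrnorm{\nabla f(\x_t)-\nabla f_{i_t}(\x_t)}$ up to the $\eta$ and Young factors. I decompose it around the lagged iterate $\x_{t-\hat{\tau}}$ as $\bigl(\nabla f(\x_t)-\nabla f(\x_{t-\hat{\tau}})\bigr)+\bigl(\nabla f(\x_{t-\hat{\tau}})-\nabla f_{i_t}(\x_{t-\hat{\tau}})\bigr)+\bigl(\nabla f_{i_t}(\x_{t-\hat{\tau}})-\nabla f_{i_t}(\x_t)\bigr)$. The two displacement brackets are each $\le L\lrnorm{\x_t-\x_{t-\hat{\tau}}}\le L\sum_{j=t-\hat{\tau}}^{t-1}\lrnorm{\x_{j+1}-\x_j}=O(L\hat{\tau}\eta G/\mu_{\Phi})$ by the step bound of the previous paragraph; these are what ultimately produce the $C_1\hat{\tau}^2\eta$ and $C_3\hat{\tau}\eta$ terms (the extra power of $\hat{\tau}$ appears because the gradient increment itself already sums $\hat{\tau}$ small steps before it is multiplied by the $O(\hat{\tau}\eta)$ factor present from Young). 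For the genuine bias bracket, condition on $\Fcal_{t-\hat{\tau}}$: $\x_{t-\hat{\tau}}$ is $\Fcal_{t-\hat{\tau}}$-measurable, so $\EE[\nabla f_{i_t}(\x_{t-\hat{\tau}})\mid\Fcal_{t-\hat{\tau}}]=\sum_{j}[\P^{\hat{\tau}}]_{i_{t-\hat{\tau}},j}\nabla f_j(\x_{t-\hat{\tau}})$ while $\nabla f(\x_{t-\hat{\tau}})=\sum_j[\pi_\ast]_j\nabla f_j(\x_{t-\hat{\tau}})$, hence the term is at most $G\lrnorm{[\P^{\hat{\tau}}]_{i_{t-\hat{\tau}},:}-\pi_\ast}_1$, which by the Jordan-form mixing estimate of \cite{sun:2018:mcgd} — precisely what $C_{\P},\rho,\tau$ were introduced for — is $\le G\,C_{\P}\rho^{\hat{\tau}}$ once $\hat{\tau}\ge\tau$, giving the $C_4C_{\P}\rho^{\hat{\tau}}$ term; the residual $\lrnorm{\nabla f_{i_t}(\x_{t-\hat{\tau}})-\nabla f(\x_{t-\hat{\tau}})}$ surviving Young contributes the external variance $\sigma_{\Vcal}^2$ sitting inside $C_2,C_4$.

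Finally, summing the per-step inequality over $t=\hat{\tau}+1,\dots,T$ telescopes the differences $f(\x_t)-f(\x_{t+1})$ into $\bigl(f(\x_{\hat{\tau}+1})-f(\x_{T+1})\bigr)/\eta$, while the $T-\hat{\tau}$ copies of the variance and bias terms give $\bigl(C_1\hat{\tau}^2\eta+(C_2+C_3\hat{\tau})\eta+C_4C_{\P}\rho^{\hat{\tau}}\bigr)(T-\hat{\tau})$; adding $G^2\hat{\tau}$ from the burn-in yields the stated bound, the exact coefficients matched to Table \ref{table_constants} by collecting the accumulated $1/\mu_{\Phi}$, $L$, $G$, and $\sigma_v^2+\sigma_{\Vcal}^2$ factors. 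I expect the main difficulty to be the bias step: coupling the fully history-dependent iterate $\x_t$ with the conditionally-independent lagged point $\x_{t-\hat{\tau}}$ without losing the geometric factor, and invoking the Jordan-block bound with the threshold $\tau$ chosen so that the polynomial-in-$\hat{\tau}$ prefactors never swamp $\rho^{\hat{\tau}}$. A secondary nuisance is the nonstandard set $\Xcal_t$: one must check that the competitor used in the optimality condition is feasible and that the $O(\eta^2)$ slack it contributes is genuinely lower order, so that per-step progress is still governed by $\eta\lrnorm{\nabla f_{i_t}(\x_t;\a)}$.
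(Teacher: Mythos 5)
Your plan follows essentially the same route as the paper's proof (which reduces the theorem to Lemma~\ref{lemma:regret:nonconvex:analysis} with $\eta_t=\eta$): the $G^2\hat{\tau}$ burn-in for the first $\hat{\tau}$ iterates, the $L$-smoothness descent inequality telescoped into $(f(\x_{\hat{\tau}+1})-f(\x_{T+1}))/\eta$, the step-size bound $\lrnorm{\x_{t+1}-\x_t}\lesssim \eta\lrnorm{\nabla f_{i_t}(\x_t;\a)}/\mu_{\Phi}$ from the mirror-descent optimality condition and $\mu_{\Phi}$-strong convexity of $\Phi$, and above all the lag-$\hat{\tau}$ decoupling in which the Markov bias at the $\Fcal_{t-\hat{\tau}}$-measurable anchor $\x_{t-\hat{\tau}}$ is controlled by $\lrnorm{\P^{\hat{\tau}}-\bPi_\ast}_{\infty}\le C_{\P}\rho^{\hat{\tau}}$ while the displacement $\lrnorm{\x_t-\x_{t-\hat{\tau}}}$ yields the $C_1\hat{\tau}^2\eta$ and $C_3\hat{\tau}\eta$ terms. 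The only cosmetic difference is that the paper splits $\lrnorm{\nabla f(\x_t)}^2$ into $\lrnorm{\nabla f(\x_{t-\hat{\tau}})}^2$ plus a difference term and works with $\EE_{i_t}\lrangle{\nabla f(\x_{t-\hat{\tau}}),\nabla f_{i_t}(\x_{t-\hat{\tau}})}$, whereas you decompose the bias vector itself around the lagged iterate; this does not change the substance of the argument.
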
 Setting appropriate step size $\eta_t$ with $t\in[T]$, we achieve the following bound of regret.
\begin{Corollary}
\label{corollary:regret:strongly:convex:analysis}
Suppose $\{\x_t\}_{t=1}^T$ is yielded by Algorithm \ref{algo:marchon}. Let 
\begin{align}
\nonumber
\hat{\tau} = \max\left\{\tau, \frac{\ln T}{2\ln \lrincir{1/\rho}}\right\},
\end{align} and choose $\eta_t$ by
\begin{align}
\nonumber
\eta_t = \eta = \frac{2\ln \lrincir{1/\rho}}{\sqrt{C_1 T}\ln T}.
\end{align} Under Assumptions \ref{assumption:markov:chain}-\ref{assumption:bounded:domain}, we have
\begin{align}
\nonumber
\EE \frac{1}{T}\sum_{t=1}^T \lrnorm{\nabla f(\x_t)}^2 \lesssim \sqrt{\sigma_v^2+\sigma_{\Vcal}^2} \frac{\ln T}{\sqrt{T}} + \frac{\sqrt{\sigma_v^2+\sigma_{\Vcal}^2} + C_{\P}}{\sqrt{T}}.
\end{align}  when $f_v$ is general non-convex for any $v\in\Vcal$.
\end{Corollary}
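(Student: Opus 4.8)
The plan is to start from the non-asymptotic bound of Theorem~\ref{theorem:regret:nonconvex:analysis}, substitute the prescribed $\hat{\tau}$ and $\eta$, divide through by $T$, and estimate the resulting terms one by one, keeping careful track of how the constants $C_1,\dots,C_4$ (Table~\ref{table_constants}) scale with $\sigma_v^2+\sigma_{\Vcal}^2$. After dividing by $T$ and using $(T-\hat{\tau})/T\le 1$, the right-hand side becomes
\[
\frac{G^2\hat{\tau}}{T}+\frac{f(\x_{\hat{\tau}+1})-f(\x_{T+1})}{\eta T}+C_1\hat{\tau}^2\eta+(C_2+C_3\hat{\tau})\eta+C_4 C_{\P}\rho^{\hat{\tau}},
\]
and each summand is handled separately.

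First I would record two elementary facts about the choices. Since $\rho\in[0,1)$ and $\hat{\tau}\ge\tfrac{\ln T}{2\ln(1/\rho)}$, we get $\rho^{\hat{\tau}}\le\rho^{\ln T/(2\ln(1/\rho))}=\exp\lrincir{-\tfrac{1}{2}\ln T}=1/\sqrt{T}$; this is exactly what turns the mixing term into $C_4 C_{\P}\rho^{\hat{\tau}}\le C_4 C_{\P}/\sqrt{T}$, i.e.\ the $\Ocal{C_{\P}/\sqrt{T}}$ contribution. Second, for $T$ large enough $\tfrac{\ln T}{2\ln(1/\rho)}\ge\tau$, so $\hat{\tau}=\Theta(\ln T)$ (taking $\lceil\cdot\rceil$ to make $\hat\tau$ an integer only adds $\Ocal{1}$), and with the given $\eta$ one has $\hat{\tau}\eta=\Theta\lrincir{1/\sqrt{C_1 T}}$ and $\hat{\tau}^2\eta=\Theta\lrincir{\ln T/\sqrt{C_1 T}}$. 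Then: (i) $G^2\hat{\tau}/T=\Ocal{\ln T/T}=o(1/\sqrt{T})$, absorbed; (ii) by Assumptions~\ref{assumption:bounded:gradient}--\ref{assumption:bounded:domain}, $|f(\x_{\hat{\tau}+1})-f(\x_{T+1})|\le GR$ is a constant, so this term is $\Ocal{\sqrt{C_1}\,\ln T/\sqrt{T}}$; (iii) $C_1\hat{\tau}^2\eta=\Ocal{\sqrt{C_1}\,\ln T/\sqrt{T}}$; (iv) $(C_2+C_3\hat{\tau})\eta$ has dominant part $C_3\hat{\tau}\eta=\Ocal{C_3/\sqrt{C_1 T}}$ (the $C_2\eta$ piece being of lower order); (v) the mixing term $\Ocal{C_4 C_{\P}/\sqrt{T}}$. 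Finally, since $C_1\propto C_0$ and $C_3\propto C_0$ with $C_0=\sigma_v^2+\sigma_{\Vcal}^2+G^2$, treating $L,G,\mu_{\Phi},R,\ln(1/\rho)$ as $\Ocal{1}$ gives $\sqrt{C_1}\lesssim\sqrt{\sigma_v^2+\sigma_{\Vcal}^2}$ and $C_3/\sqrt{C_1}\lesssim\sqrt{\sigma_v^2+\sigma_{\Vcal}^2}$, and summing (i)--(v) yields the claimed $\lesssim\sqrt{\sigma_v^2+\sigma_{\Vcal}^2}\,\tfrac{\ln T}{\sqrt{T}}+\tfrac{\sqrt{\sigma_v^2+\sigma_{\Vcal}^2}+C_{\P}}{\sqrt{T}}$.

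The computation is essentially mechanical, so there is no deep obstacle; the points that require care are the logarithmic identity $\rho^{\ln T/(2\ln(1/\rho))}=1/\sqrt{T}$ together with the verification that the $\eta$-balance is tight — the $\ln T$ in the denominator of $\eta$ is placed precisely so that the $1/\eta$ term (ii) and the $\hat{\tau}^2\eta\sim\ln^2 T\cdot\eta$ term (iii) both collapse to $\Ocal{\ln T/\sqrt{T}}$ with no residual $\ln^2 T$ — the handling of the $\max$ defining $\hat{\tau}$ (legitimate once $T$ exceeds a $\rho,\tau$-dependent threshold), and the bookkeeping of the $\sigma$-dependence through $C_1$--$C_4$ so the final rate displays $\sqrt{\sigma_v^2+\sigma_{\Vcal}^2}$ rather than $\sqrt{C_0}$. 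I expect verifying this tightness of the $\eta$-balance to be the most delicate — though still routine — step.
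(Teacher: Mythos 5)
Your proposal is correct and follows the only natural route — substituting the prescribed $\hat{\tau}$ and $\eta$ into Theorem \ref{theorem:regret:nonconvex:analysis}, dividing by $T$, and using $\rho^{\hat{\tau}}\le T^{-1/2}$ — which is exactly what the paper intends (it gives no explicit proof of this corollary). Your remark that the displayed rate shows $\sqrt{\sigma_v^2+\sigma_{\Vcal}^2}$ only after absorbing the $G$-dependent part of $\sqrt{C_0}$ into the hidden constants of $\lesssim$ is a fair observation about the statement itself, not a gap in your argument.
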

%
%
%
%
As we can see, Algorithm \ref{algo:marchon} achieves convergence at the rate of $\Ocal{\frac{\ln T}{\sqrt{T}} + \frac{1}{\sqrt{T}}}$ for non-convex loss. 

\section{Empirical Studies}
\label{sect:empirical:study}

\subsection{Experimental Settings}
We conduct logistic regression for binary classification in empirical studies, in which $f_v$ is denoted by 
\begin{align}
\nonumber
f_v(\x; \Dcal_v) = \frac{1}{n_v} \sum_{i=1}^{n_v} \lrincir{1+\exp\lrincir{-y\a^{\top}\x}}.
\end{align} Here, $\Dcal_v$ represents the local data of the node $v$. $n_v = |\Dcal_v|$ represents its cardinality. $\a\in\RR^d$ and $y\in\{-1, 1\}$ represents one instance and its corresponding label, respectively. In the experiment, we choose four Libsvm datasets, namely cod-rna\footnote{\url{https://www.csie.ntu.edu.tw/~cjlin/libsvmtools/datasets/binary.html\#cod-rna}}, covtype\footnote{\url{https://www.csie.ntu.edu.tw/~cjlin/libsvmtools/datasets/binary.html\#covtype}}, ijcnn1\footnote{\url{https://www.csie.ntu.edu.tw/~cjlin/libsvmtools/datasets/binary.html\#ijcnn1}}, and phishing\footnote{\url{https://www.csie.ntu.edu.tw/~cjlin/libsvmtools/datasets/binary.html\#phishing}}. All samples of those datasets are normalized to $[-1, 1]$. Statistics of those datasets are summarized in Table \ref{table:datasets}. 

\begin{table*}
\centering
\caption{Statistics of datasets.}
\renewcommand\arraystretch{2}
\begin{tabular}{c|c|c|c|c}
\hline
Datasets           & cod-rna & covtype & ijcnn1 & phishing  \\ \hline \hline
Number of features &    $8$     &    $54$     &    $22$    &     $68$      \\ \hline
Number of samples  &    $59,535$     &    $581,012$     &    $49,990$    &  $11,055$  \\ \hline      
\end{tabular}
\label{table:datasets}
\end{table*}

We compare the proposed method, that is \textit{MarchOn}, with three other existing methods, namely \textit{MCGD} \cite{sun:2018:mcgd}, \textit{MarkovSGD} \cite{doan:2023:mgd:tac}, and \textit{MCSGD} \cite{even:2022:mcss:icml}. Those existing methods generally update models by iteratively performing $\x_{t+1} = \x_t - \eta_t \nabla f_v(\x_t; \a\sim\Dcal_v)$ with $v\in\Vcal$ for unconstrained optimization problems. As we have shown, major difference of those methods is taking different strategies to choose step size. In the experiment, we choose $\eta_t \propto \frac{1}{t^q}$ with $\frac{1}{2}<q<1$ for \textit{MCGD}, $\eta_t \propto \frac{\ln(\ln(t))\ln^2(t)}{\sqrt{t}}$ for \textit{MarkovSGD}, $\eta_t \propto \frac{1}{\sqrt{t\ln (t)}}$ for \textit{MCSGD}, and $\eta_t \propto \frac{1}{\sqrt{t}}$ for \textit{MarchOn}. All those methods are implemented by using Python 3.11. Those methods are run by using Intel I7 CPU with $12$ cores.

\begin{figure*}
\setlength{\abovecaptionskip}{0pt}
\setlength{\belowcaptionskip}{0pt}
\centering 
\subfigure[cod-rna]{\includegraphics[width=0.24\columnwidth]{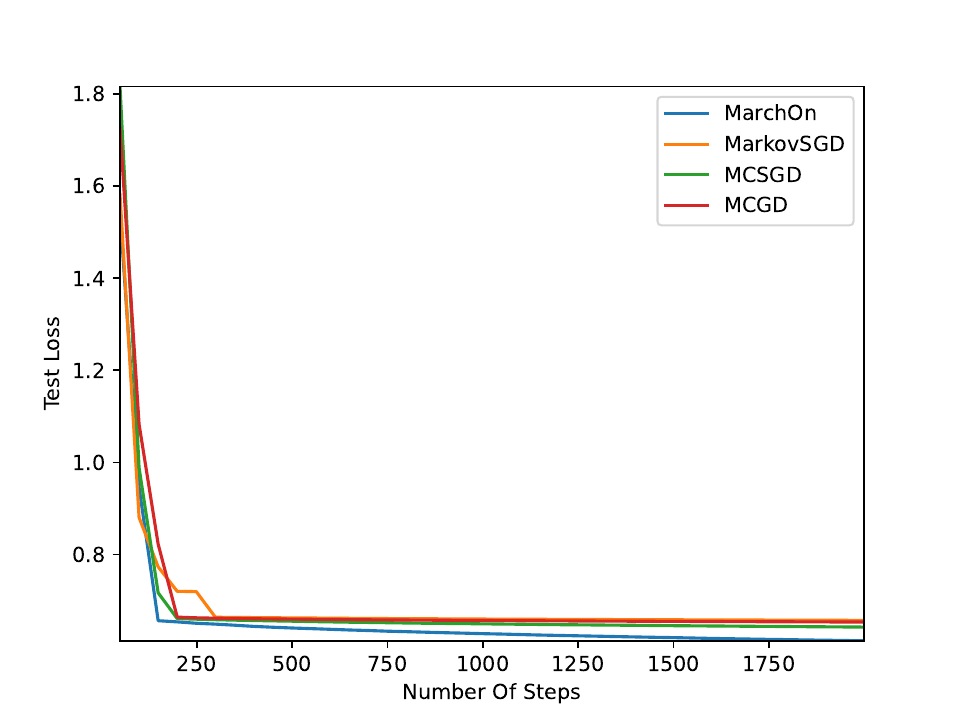}\label{figs_convergence_codrna}}
\subfigure[covtype]{\includegraphics[width=0.24\columnwidth]{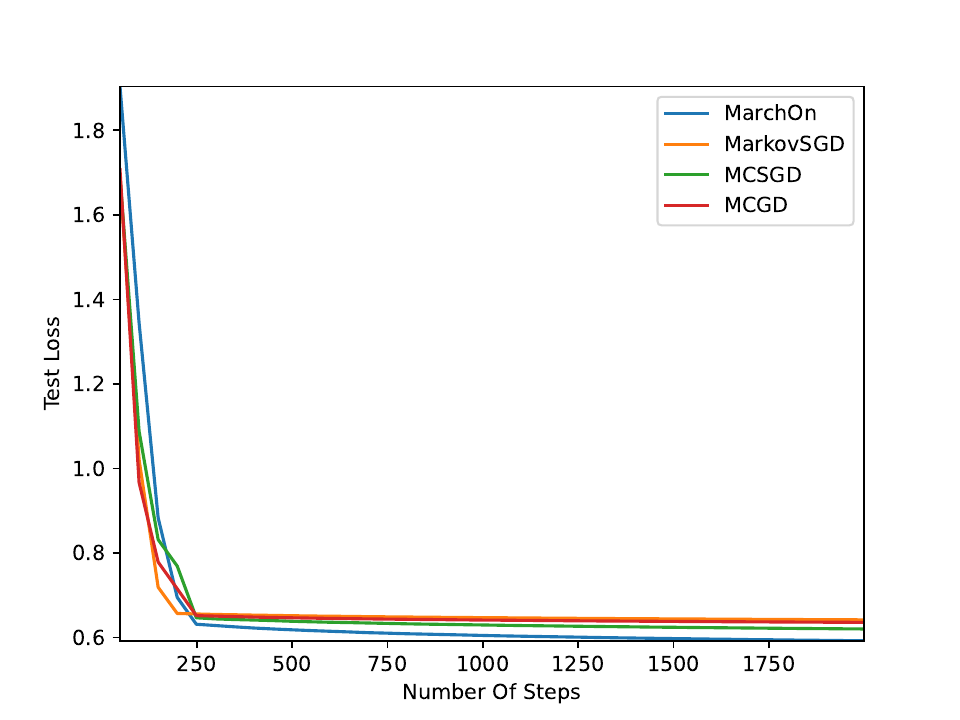}\label{figs_convergence_covtype}}
\subfigure[ijcnn1]{\includegraphics[width=0.24\columnwidth]{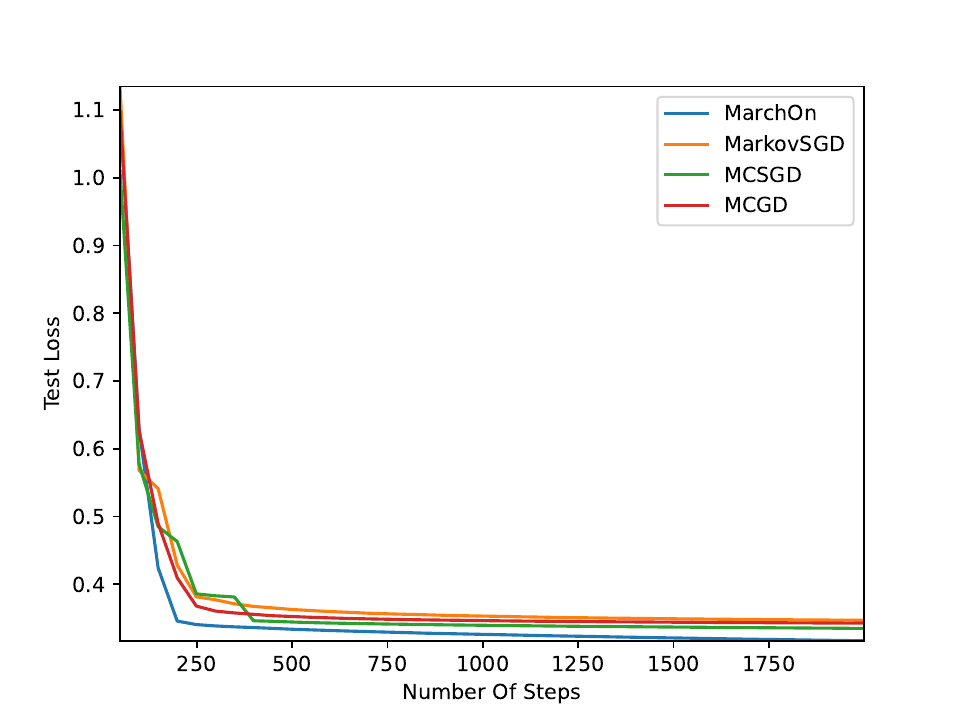}\label{figs_convergence_ijcnn1}}
\subfigure[phishing]{\includegraphics[width=0.24\columnwidth]{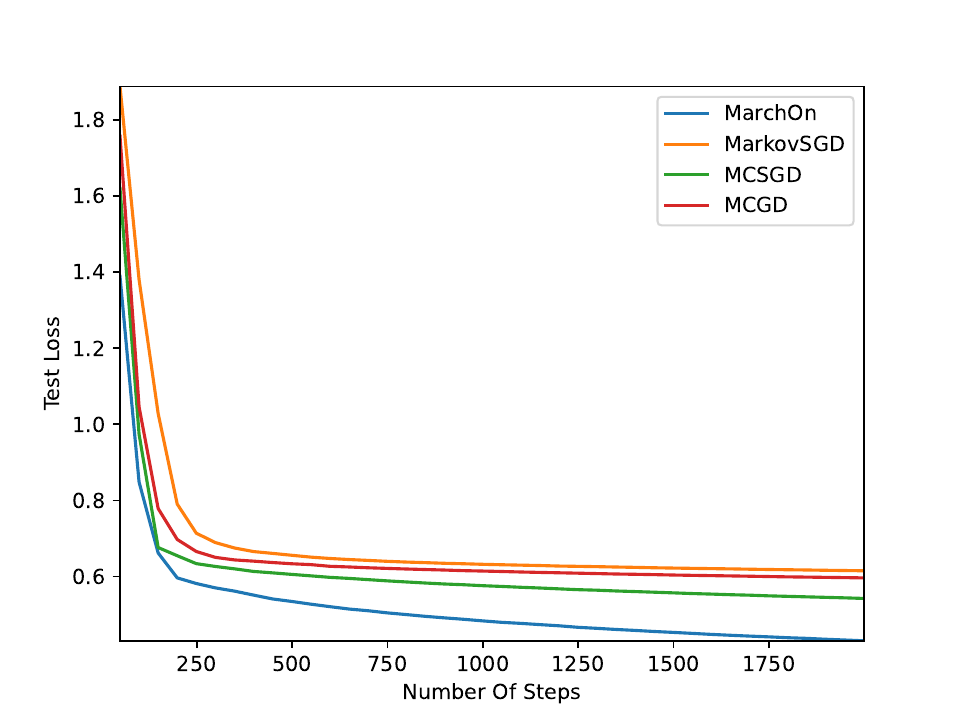}\label{figs_convergence_phishing}}
\caption{Comparison of convergence on a data federation for different methods. The network consists of $50$ nodes, and its topology is a complete graph. }
\label{figure_convergence_50nodes}
\end{figure*}
\begin{figure*}
\setlength{\abovecaptionskip}{0pt}
\setlength{\belowcaptionskip}{0pt}
\centering 
\subfigure[Cod-rna]{\includegraphics[width=0.24\columnwidth]{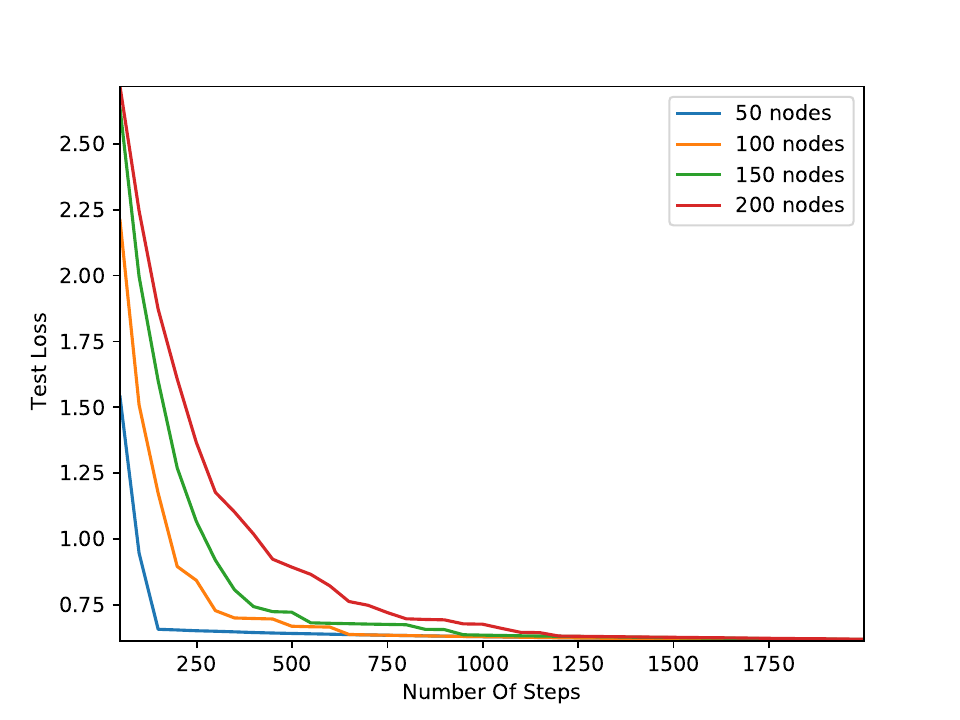}\label{figs_networksize_codrna}}
\subfigure[covtype]{\includegraphics[width=0.24\columnwidth]{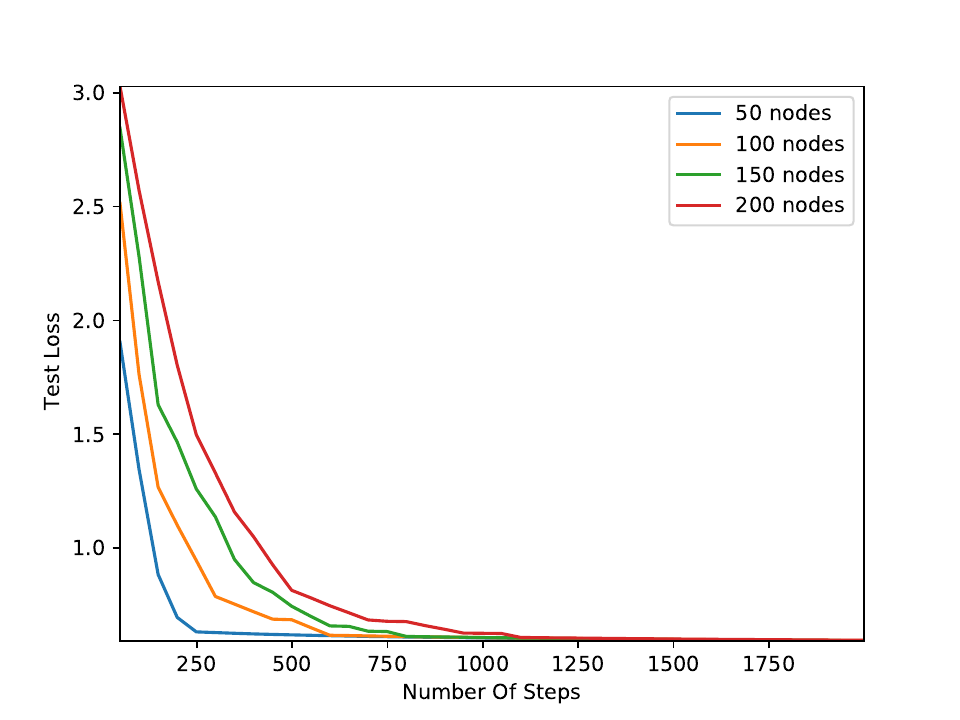}\label{figs_networksize_covtype}}
\subfigure[ijcnn1]{\includegraphics[width=0.24\columnwidth]{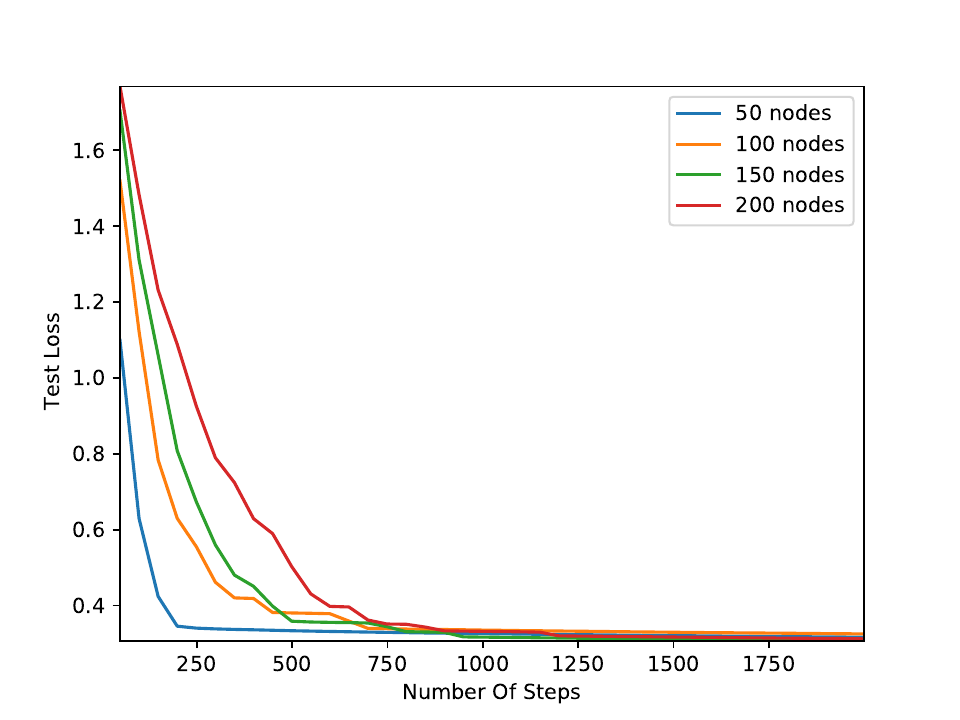}\label{figs_networksize_ijcnn1}}
\subfigure[phishing]{\includegraphics[width=0.24\columnwidth]{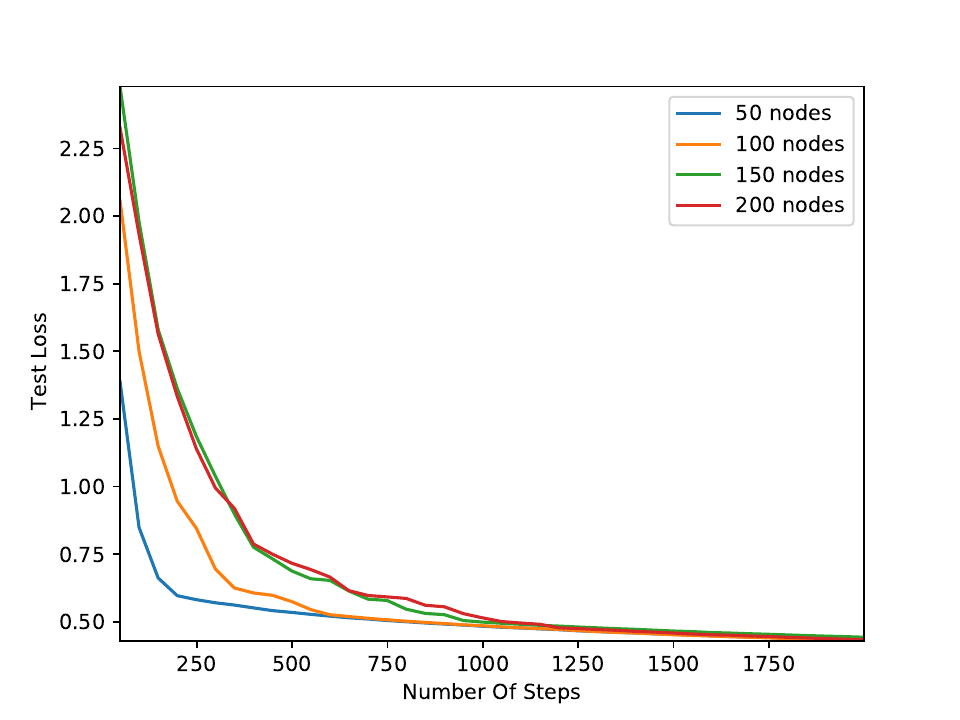}\label{figs_networksize_phishing}}
\caption{Comparison of convergence on a data federation by varying size of network. The network has a topology of the complete graph.}
\label{figure_networksize_completegraph}
\end{figure*}
\begin{figure*}
\setlength{\abovecaptionskip}{0pt}
\setlength{\belowcaptionskip}{0pt}
\centering 
\subfigure[Cod-rna]{\includegraphics[width=0.24\columnwidth]{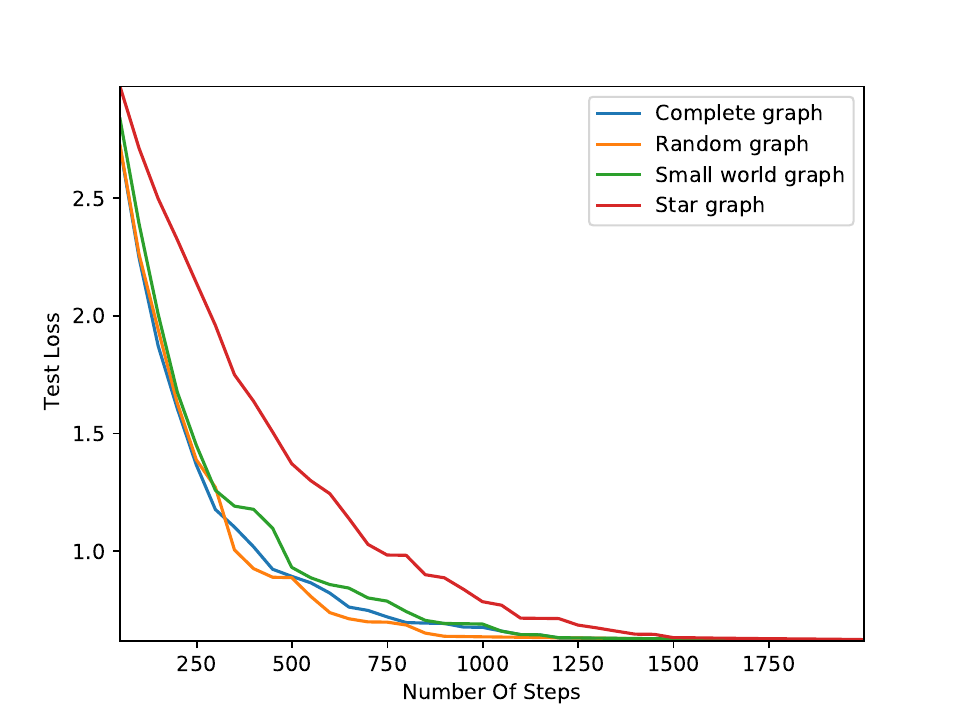}\label{figs_networktopology_codrna}}
\subfigure[covtype]{\includegraphics[width=0.24\columnwidth]{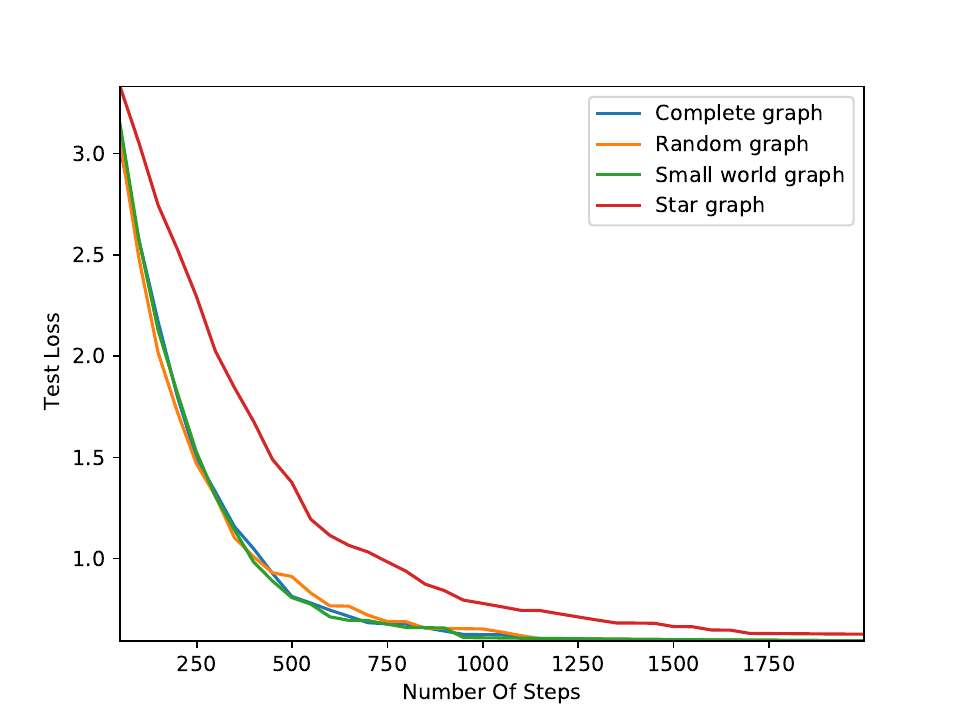}\label{figs_networktopology_covtype}}
\subfigure[ijcnn1]{\includegraphics[width=0.24\columnwidth]{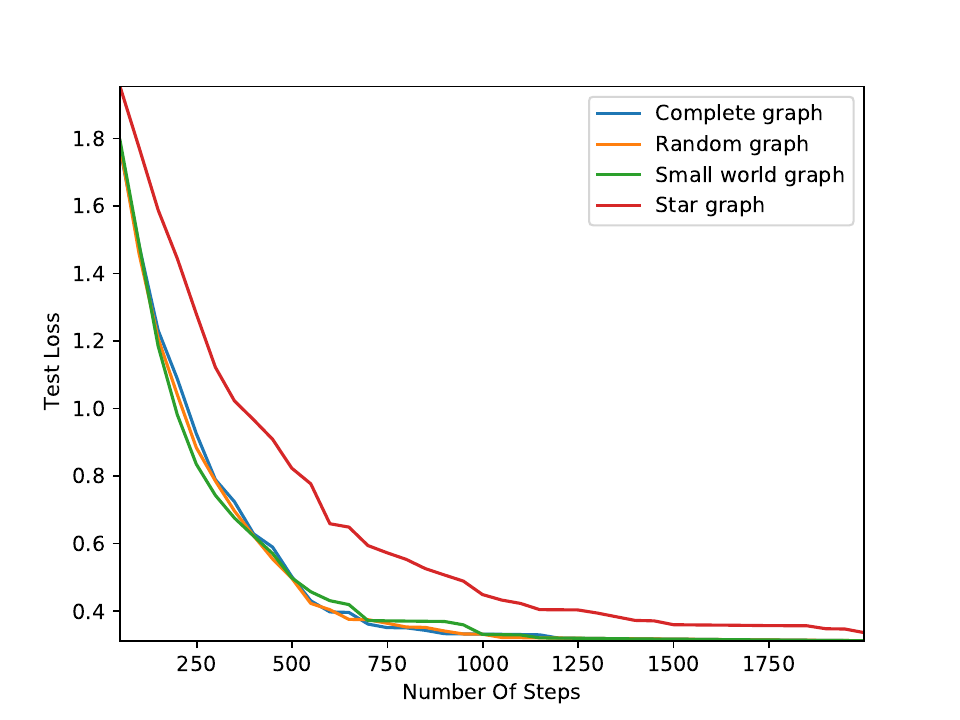}\label{figs_networktopology_ijcnn1}}
\subfigure[phishing]{\includegraphics[width=0.245\columnwidth]{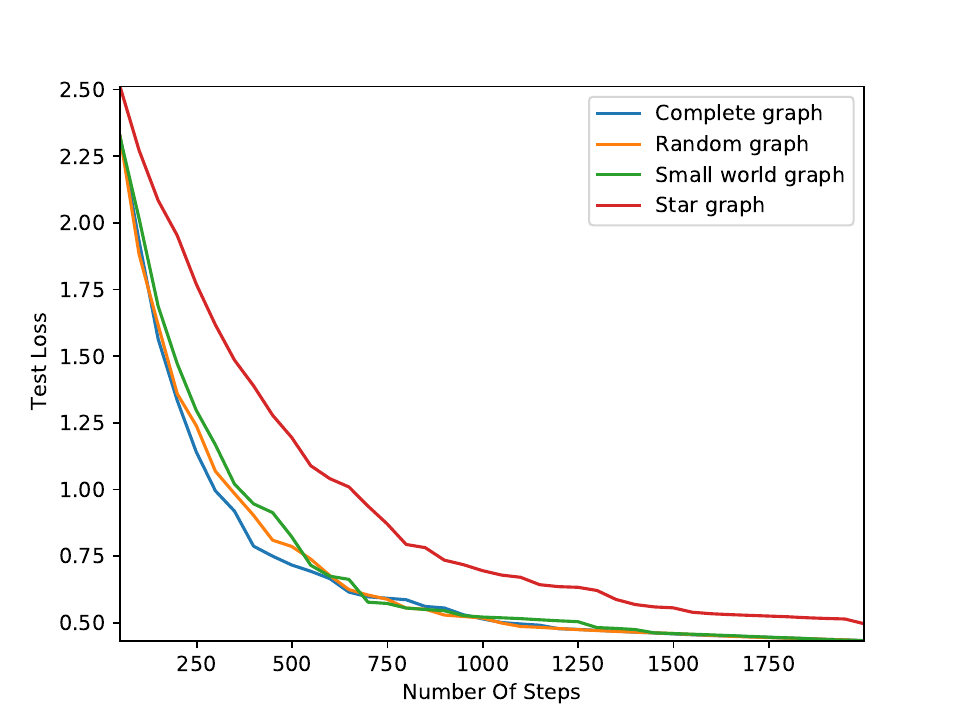}\label{figs_networktopology_phishing}}
\caption{Comparison of convergence on a data federation by varying topology of network. The network consists of $200$ nodes.}
\label{figure_convergence_networktopology}
\end{figure*}

\subsection{Numerical Results}

\textit{The proposed method outperforms others by obtaining best rate of convergence.} As illustrated in Figure \ref{figure_convergence_50nodes}, all of those methods converge fast in a network owning the topology of complete graph, but the proposed method, that is \textit{MarchOn}, achieves better performance than others. It validates the theoretical analysis, that is, the proposed method is able to achieve tighter bound of convergence rate. Specifically, we observe that the superiority becomes significant after $250$ steps, and the proposed method gains much more benefits for a large number of steps.

\textit{The proposed method gains benefits of performance with the reduction of network's size.} In the case, the network has the topology of the complete graph. As illustrated in Figure \ref{figure_networksize_completegraph}, we evaluate the robustness of the proposed method, that is \textit{MarchOn}, by varying the size of network. As we can see, \textit{MarchOn} converges fast in a large network, and performs better with the decrease of the size of network. Since $\rho = \frac{1}{n}$ holds for a network with topology of the complete graph, the observation consists with the theoretical analysis, that is, the convergence becomes large with the increase of $\rho$. 

\textit{The proposed method converges fast, and its performance is robust to the network's topology.}  As illustrated in Figure \ref{figure_convergence_networktopology}, we evaluate the robustness of the proposed method, that is \textit{MarchOn}, by varying the topology of network. In the experiment, every network owns $200$ nodes. The network owning the topology of the random graph is Erd\"{o}s-R\'{e}nyi random network, which is  generated by using a well-known Python package: \textit{networkx}\footnote{\url{https://networkx.org}}. The network owning the topology of the small world graph is generated by using the same way. As we can see,  \textit{MarchOn} achieves similar performance for the network with the topology of complete graph and other networks with the topology of either random graph or small world graph. Although the network owning the star graph converges slowly, its performance is comparable with others. Especially, considering that other networks has $\Ocal{n^2}$ edges, and the network owning the star graph has $\Ocal{n}$ edges, such gap of convergence is not much significant.

\section{Conclusion}
\label{sect:conclusion}
As a general optimization method, mirror descent with Markov chain stochastic gradient is investigated in the setting of federated learning. First, a new method named \textit{MarchOn} is proposed. It allows model to travel over data federation, and updates model by using local data of every visited node. Second, a new framework of theoretical analysis is developed. It successfully provides sub-linear rate of convergence for convex, strongly convex, and non-convex loss. Finally, extensive empirical studies are conducted to evaluate both effectiveness and robustness of the proposed method. Comparing with existing methods, the proposed method achieves the best performance on convergence, and performs robustness with respect to the network's size and topology. However, there are still some limitations. For example, some nodes in the data federation may be malicious.  The model may not believe in every node, and update parameters by using local data. In the future, the work will be extended to the setting of federated learning with malicious nodes.

\section*{Appendix}

\subsection{Proof of Theorem \ref{theorem:bridge:between:convergence:regret}}
\textbf{Theorem} \ref{theorem:bridge:between:convergence:regret}. Given a Markov chain $\zeta_T$, and the sequence of model $\{\x_t\}_{t=1}^T$ produced by an optimization method $A$, we have   
\begin{align}
\nonumber
f(\bar{\x}_T) - f(\x^\ast) \le \frac{\Rcal_{\zeta_T}^A(\x^\ast)}{\lrincir{1 + \lrnorm{\P - \frac{1}{n}\I_n}_{\max} n}T},
\end{align} where $\bar{\x}_T := \frac{1}{T}\sum_{t=1}^T \x_t$, and $\Rcal_{\zeta_T}^A(\x^\ast)$ is defined in \ref{equa:regret:formulation}.
\begin{proof}
\begin{align}
\nonumber
& \EE_{i_t} \lrincir{f_{i_t}(\x_t) - f_{i_t}(\x^\ast)} \\ \nonumber
= & \sum_{i=1}^n \lrincir{f_i(\x_t) - f_i(\x^\ast)} \cdot \PP\lrincir{i_t = i | \chi_t} \\ \nonumber
\overset{\textcircled{1}}{=} & \sum_{i=1}^n \lrincir{f_i(\x_t) - f_i(\x^\ast)} \cdot \PP\lrincir{i_t = i | i_{t-1}} \\ \nonumber
= & \sum_{i=1}^n \lrincir{f_i(\x_t) - f_i(\x^\ast)} \cdot \PP\lrincir{i_t = i | i_{t-1}} - \lrincir{f(\x_t) - f(\x^\ast)} + \lrincir{f(\x_t) - f(\x^\ast)} \\ \nonumber
= & \sum_{i=1}^n \lrincir{f_i(\x_t) - f_i(\x^\ast)} \cdot \PP\lrincir{i_t = i | i_{t-1}} - \sum_{i=1}^n \frac{1}{n}\lrincir{f_i(\x_t) - f_i(\x^\ast)} + \lrincir{f(\x_t) - f(\x^\ast)} \\ \nonumber
= & \sum_{i=1}^n \lrincir{\PP\lrincir{i_t = i | i_{t-1}} - \frac{1}{n}} \cdot \lrincir{f_i(\x_t) - f_i(\x^\ast)} + f(\x_t) - f(\x^\ast).
\end{align} $\textcircled{1}$ holds due to the Markov property. Rearranging items, we obtain
\begin{align}
\nonumber
& f(\x_t) - f(\x^\ast) \\ \nonumber
= & \EE_{i_t} \lrincir{f_{i_t}(\x_t) - f_{i_t}(\x^\ast)} - \sum_{i=1}^n \lrincir{\PP\lrincir{i_t = i | i_{t-1}} - \frac{1}{n}} \cdot \lrincir{f_i(\x_t) - f_i(\x^\ast)} \\ \nonumber
= & \EE_{i_t} \lrincir{f_{i_t}(\x_t) - f_{i_t}(\x^\ast)} - \sum_{i=1}^n \lrincir{[\P]_{i_{t-1}, i_t} - \frac{1}{n}} \cdot \lrincir{f_i(\x_t) - f_i(\x^\ast)} \\ \nonumber
\overset{\textcircled{1}}{\le} & \EE_{i_t} \lrincir{f_{i_t}(\x_t) - f_{i_t}(\x^\ast)} -  \lrnorm{\P - \frac{1}{n}\I_n}_{\max} \cdot \sum_{i=1}^n \lrincir{f_i(\x_t) - f_i(\x^\ast)} \\ \nonumber
= & \EE_{i_t} \lrincir{f_{i_t}(\x_t) - f_{i_t}(\x^\ast)} -  \lrnorm{\P - \frac{1}{n}\I_n}_{\max} n \cdot \lrincir{f(\x_t) - f(\x^\ast)}.
\end{align} $\textcircled{1}$ holds due to $\lrnorm{\A}_{\max} := \max_{i,j} |[\A]_{i,j}|$, that is the maximum of absolute values of the given matrix $\A$. Furthermore, we obtain
\begin{align}
\nonumber
f(\x_t) - f(\x^\ast) \le \frac{1}{1 + \lrnorm{\P - \frac{1}{n}\I_n}_{\max} n} \cdot \EE_{i_t} \lrincir{f_{i_t}(\x_t) - f_{i_t}(\x^\ast)}.
\end{align} Therefore, we obtain
\begin{align}
\nonumber
f(\bar{\x}_T) - f(\x^\ast) \le \frac{1}{T}\sum_{t=1}^T f(\x_t) - f(\x^\ast) \le \frac{\Rcal_{\zeta_T}^A(\x^\ast)}{\lrincir{1 + \lrnorm{\P - \frac{1}{n}\I_n}_{\max} n}T} \\ \nonumber
\end{align} It finally completes the proof.
\end{proof}

\subsection{Proof of Theorem \ref{theorem:regret:convex:analysis}}
\textbf{Theorem} \ref{theorem:regret:convex:analysis}. Suppose $\{\x_t\}_{t=1}^T$ is yielded by Algorithm \ref{algo:marchon}, choose non-increasing positive sequence $\{\eta_t\}_{t=1}^T$. Let $\hat{\tau} \ge \tau$ for any $t\in[T]$. Under Assumptions \ref{assumption:markov:chain}-\ref{assumption:bounded:domain}, we have
\begin{align}
\nonumber
& \Rcal^{\textsc{MarchOn}}_{\zeta_T}(\x^\ast) \\ \nonumber
\le & C_5 \sum_{t=1}^T \eta_t + \frac{3G^2}{\mu_{\Phi}}\sum_{t=1}^{\hat{\tau}} \eta_t + \frac{3\lrincir{f(\x_{\hat{\tau}+1}) - f(\x_{T+1})}}{\mu_{\Phi}} + \sum_{t=\hat{\tau}+1}^T \lrincir{\frac{3C_1\hat{\tau}}{\mu_{\Phi}}  \sum_{j=t-\hat{\tau}}^{t-1}\eta_j^2 + \frac{3\lrincir{C_2+C_3\hat{\tau}}}{\mu_{\Phi}}\eta_t^2 + \frac{3C_4 C_{\P}\rho^{\hat{\tau}}}{\mu_{\Phi}} \eta_t} + \frac{R^2}{\eta_T},
\end{align}  when $f_v$ is general convex for any $v\in\Vcal$.
\begin{proof}
Given any $i_t\in[n]$, we have 
\begin{align}
\nonumber
& f_{i_t}(\x_t) -f_{i_t}(\x^\ast) \\ \nonumber
\le & \lrangle{\nabla f_{i_t}(\x_t), \x_t - \x^\ast } \\ \nonumber
= & \EE_{\a\sim\Dcal_{i_t}}\lrangle{\nabla f_{i_t}(\x_t;\a), \x_t - \x^\ast } \\ \nonumber
= & \EE_{\a\sim\Dcal_{i_t}} \lrangle{\nabla f_{i_t}(\x_t;\a), \x_t - \x_{t+1}} + \EE_{\a\sim\Dcal_{i_t}} \lrangle{\nabla f_{i_t}(\x_t;\a), \x_{t+1} - \x^\ast } \\ \nonumber
\overset{\textcircled{1}}{\le} & \EE_{\a\sim\Dcal_{i_t}} \lrangle{\nabla f_{i_t}(\x_t;\a), \x_t - \x_{t+1}} + \EE_{\a\sim\Dcal_{i_t}} \frac{B_{\Phi}(\x^\ast, \x_t) -  B_{\Phi}(\x^\ast, \x_{t+1}) - B_{\Phi}(\x_{t+1}, \x_t)}{\eta_t} \\ \nonumber
\overset{\textcircled{2}}{\le} & \EE_{\a\sim\Dcal_{i_t}} \lrangle{\nabla f_{i_t}(\x_t;\a), \x_t - \x_{t+1}}  - \frac{\mu_{\Phi}}{2\eta_t} \EE_{\a\sim\Dcal_{i_t}} \lrnorm{\x_{t+1} - \x_t}^2 + \EE_{\a\sim\Dcal_{i_t}} \frac{B_{\Phi}(\x^\ast, \x_t) -  B_{\Phi}(\x^\ast, \x_{t+1})}{\eta_t}  \\ \nonumber
\overset{\textcircled{3}}{\le} & \frac{\eta_t}{\mu_{\Phi}}\cdot\EE_{\a\sim\Dcal_{i_t}} \lrnorm{\nabla f_{i_t}(\x_t;\a)}^2 -\frac{\mu_{\Phi}}{4\eta_t}\cdot \EE_{\a\sim\Dcal_{i_t}} \lrnorm{\x_{t+1}-\x_t}^2 + \EE_{\a\sim\Dcal_{i_t}} \frac{B_{\Phi}(\x^\ast, \x_t) -  B_{\Phi}(\x^\ast, \x_{t+1})}{\eta_t} \\ \nonumber
\le & \frac{\eta_t}{\mu_{\Phi}}\cdot \EE_{\a\sim\Dcal_{i_t}} \lrnorm{\nabla f_{i_t}(\x_t;\a)}^2 + \EE_{\a\sim\Dcal_{i_t}} \frac{B_{\Phi}(\x^\ast, \x_t) -  B_{\Phi}(\x^\ast, \x_{t+1})}{\eta_t}.
\end{align} $\textcircled{1}$ holds due to Lemma \ref{lemma_mirror_descent_update_rule} by setting $\g = \nabla f_{i_t}(\x_t)$, $\u_t = \x_t$, $\u_{t+1} = \x_{t+1}$, $\u^\ast = \x^\ast$, and $\lambda = \eta_t$. $\textcircled{2}$ holds due to $\Phi$ is $\mu_{\Phi}$-strongly convex.  $\textcircled{3}$  holds because $\lrangle{\u,\v} \le \frac{a}{2} \lrnorm{\u}^2 + \frac{1}{2a}\lrnorm{\v}^2$ holds for any $\u$, $\v$, and $a>0$. 

Telescoping it over $t$, we have
\begin{align}
\nonumber
&\Rcal^{\textsc{MarchOn}}_{\zeta_T}(\x^\ast) = \EE \sum_{t=1}^T \lrincir{f_{i_t}(\x_t) -f_{i_t}(\x^\ast)}  \\ \nonumber
\le & \sum_{t=1}^T\frac{\eta_t}{\mu_{\Phi}}\EE_{\a\sim\Dcal_{i_t}} \lrnorm{\nabla f_{i_t}(\x_t;\a)}^2+ \sum_{t=1}^T \EE_{\a\sim\Dcal_{i_t}} \frac{ B_{\Phi}(\x^\ast, \x_t) -  B_{\Phi}(\x^\ast, \x_{t+1}) }{\eta_t} \\ \nonumber 
\le & \sum_{t=1}^T\frac{\eta_t}{\mu_{\Phi}}\EE_{\a\sim\Dcal_{i_t}} \lrnorm{\nabla f_{i_t}(\x_t;\a)}^2+ \sum_{t=2}^T \lrincir{\frac{B_{\Phi}(\x^\ast, \x_t)}{\eta_t}  -  \frac{B_{\Phi}(\x^\ast, \x_t)}{\eta_{t-1}}} + \frac{B_{\Phi}(\x^\ast, \x_1)}{\eta_1} - \frac{B_{\Phi}(\x^\ast, \x_{T+1})}{\eta_T}\\ \nonumber 
\le & \sum_{t=1}^T\frac{\eta_t}{\mu_{\Phi}}\EE_{\a\sim\Dcal_{i_t}} \lrnorm{\nabla f_{i_t}(\x_t;\a)}^2 + \sum_{t=2}^T B_{\Phi}(\x^\ast, \x_t)\lrincir{\frac{1}{\eta_t} - \frac{1}{\eta_{t-1}}} + \frac{B_{\Phi}(\x^\ast, \x_1)}{\eta_1}\\ \nonumber 
\le & \sum_{t=1}^T\frac{\eta_t}{\mu_{\Phi}}\EE_{\a\sim\Dcal_{i_t}} \lrnorm{\nabla f_{i_t}(\x_t;\a)}^2 + \sum_{t=2}^T R^2\lrincir{\frac{1}{\eta_t} - \frac{1}{\eta_{t-1}}} + \frac{R^2}{\eta_1}\\ \label{equa:regret:marchon:proof:convex} 
= & \underbrace{\sum_{t=1}^T\frac{\eta_t}{\mu_{\Phi}}\EE_{\a\sim\Dcal_{i_t}} \lrnorm{\nabla f_{i_t}(\x_t;\a)}^2}_{I_1} + \frac{R^2}{\eta_T}.
\end{align} Focusing on $I_1$, we obtain
\begin{align}
\nonumber
& \EE I_1 = \EE \sum_{t=1}^T\frac{\eta_t}{\mu_{\Phi}}\lrnorm{\nabla f_{i_t}(\x_t;\a) - \nabla f_{i_t}(\x_t) + \nabla f_{i_t}(\x_t) - \nabla f(\x_t) + \nabla f(\x_t)}^2 \\ \nonumber
\le & \EE \sum_{t=1}^T\frac{3\eta_t \lrnorm{\nabla f_{i_t}(\x_t;\a) - \nabla f_{i_t}(\x_t)}^2}{\mu_{\Phi}} + \EE \sum_{t=1}^T\frac{3\eta_t \lrnorm{\nabla f_{i_t}(\x_t) - \nabla f(\x_t)}^2}{\mu_{\Phi}} + \EE \sum_{t=1}^T\frac{3\eta_t \lrnorm{\nabla f(\x_t)}^2}{\mu_{\Phi}}\\ \nonumber
\le & \frac{3\sigma_v^2}{\mu_{\Phi}} \sum_{t=1}^T \eta_t + \frac{3\sigma_{\Vcal}^2}{\mu_{\Phi}} \sum_{t=1}^T \eta_t + \EE \sum_{t=1}^T\frac{3\eta_t \lrnorm{\nabla f(\x_t)}^2}{\mu_{\Phi}} \\ \nonumber
\overset{\textcircled{1}}{\le} & \frac{3\sigma_v^2+3\sigma_{\Vcal}^2}{\mu_{\Phi}} \sum_{t=1}^T \eta_t +  LG \sum_{t=\hat{\tau}+1}^T \eta_t^2 + \frac{3LG \lrincir{\sigma_v^2 + \sigma_{\Vcal}^2 + G^2} \hat{\tau}}{\mu_{\Phi}^2} \sum_{t=\hat{\tau}+1}^T \sum_{j=t-\hat{\tau}}^{t-1}\eta_j^2 \\ \nonumber
& + \lrincir{\sigma_v^2 + \sigma_{\Vcal}^2+G^2} \lrincir{ \frac{L^2\hat{\tau}\sum_{t=\hat{\tau}+1}^T \sum_{j=t-\hat{\tau}}^{t-1}\eta_j^2}{2\mu_{\Phi}^2} + \frac{3(L+1+L^2 \hat{\tau})\sum_{t=\hat{\tau}+1}^T\eta_t^2}{2\mu_{\Phi}^2} + 3G^2\sum_{t=\hat{\tau}+1}^T\eta_t^2} \\ \nonumber
& + f(\x_{\hat{\tau}+1}) - f(\x_{T+1}) + \frac{\hat{\tau}G^2\sum_{t=\hat{\tau}+1}^T\eta_t^2}{2} + \lrincir{\frac{3G^2}{2}+\sigma_{\Vcal}^2} C_{\P} \sum_{t=\hat{\tau}+1}^T \eta_t \cdot \rho^{\hat{\tau}}.
\end{align} $\textcircled{1}$ is obtained due to Theorem \ref{theorem:regret:nonconvex:analysis}. Putting it back into \ref{equa:regret:marchon:proof:convex}, we thus completes the proof.
\end{proof}

\subsection{Proof of Theorem \ref{theorem:regret:strongly:convex:analysis}}

\textbf{Theorem} \ref{theorem:regret:strongly:convex:analysis}. Suppose $\{\x_t\}_{t=1}^T$ is yielded by Algorithm \ref{algo:marchon}, choose non-increasing positive sequence $\{\eta_t\}_{t=1}^T$. Let $\hat{\tau} \ge \tau$ for any $t\in[T]$. Under Assumptions \ref{assumption:markov:chain}-\ref{assumption:bounded:domain}, we have
\begin{align}
\nonumber
& \Rcal^{\textsc{MarchOn}}_{\zeta_T}(\x^\ast) \\ \nonumber
\le & C_5 \sum_{t=1}^T \eta_t + \frac{3G^2}{\mu_{\Phi}}\sum_{t=1}^{\hat{\tau}} \eta_t + \frac{3\lrincir{f(\x_{\hat{\tau}+1}) - f(\x_{T+1})}}{\mu_{\Phi}} + \sum_{t=\hat{\tau}+1}^T \lrincir{\frac{3C_1\hat{\tau}}{\mu_{\Phi}}  \sum_{j=t-\hat{\tau}}^{t-1}\eta_j^2 + \frac{3\lrincir{C_2+C_3\hat{\tau}}}{\mu_{\Phi}}\eta_t^2 + \frac{3C_4 C_{\P}\rho^{\hat{\tau}}}{\mu_{\Phi}} \eta_t} \\ \nonumber 
& + \frac{\mu_{\Phi} R^2}{2} \sum_{t=2}^T \lrincir{\frac{1}{\eta_t} - \frac{1}{\eta_{t-1}} - \frac{\mu_f}{\mu_{\Phi}}} + \frac{R^2}{\eta_1},
\end{align}  when $f_v$ is $\mu_f$-strongly convex for any $v\in\Vcal$.
\begin{proof}
Given any $i_t\in[n]$, we have 
\begin{align}
\nonumber
& f_{i_t}(\x_t) -f_{i_t}(\x^\ast) \\ \nonumber
= & \lrangle{\nabla f_{i_t}(\x_t), \x_t - \x^\ast } - \lrincir{ f_{i_t}(\x^\ast) - f_{i_t}(\x_t) - \lrangle{\nabla f_{i_t}(\x_t), \x^\ast - \x_t} } \\ \nonumber
= & \lrangle{\nabla f_{i_t}(\x_t), \x_t - \x^\ast } - B_{f_{i_t}}(\x^\ast, \x_t) \\ \nonumber 
= & \EE_{\a\sim\Dcal_{i_t}}\lrangle{\nabla f_{i_t}(\x_t;\a), \x_t - \x^\ast } - B_{f_{i_t}}(\x^\ast, \x_t) \\ \nonumber
= & \EE_{\a\sim\Dcal_{i_t}} \lrangle{\nabla f_{i_t}(\x_t;\a), \x_t - \x_{t+1}} + \EE_{\a\sim\Dcal_{i_t}} \lrangle{\nabla f_{i_t}(\x_t;\a), \x_{t+1} - \x^\ast }  - B_{f_{i_t}}(\x^\ast, \x_t)\\ \nonumber
\overset{\textcircled{1}}{\le} & \EE_{\a\sim\Dcal_{i_t}} \lrangle{\nabla f_{i_t}(\x_t;\a), \x_t - \x_{t+1}} + \EE_{\a\sim\Dcal_{i_t}} \frac{B_{\Phi}(\x^\ast, \x_t) -  B_{\Phi}(\x^\ast, \x_{t+1}) - B_{\Phi}(\x_{t+1}, \x_t)}{\eta_t}  - B_{f_{i_t}}(\x^\ast, \x_t)\\ \nonumber
\overset{\textcircled{2}}{\le} & \EE_{\a\sim\Dcal_{i_t}} \lrangle{\nabla f_{i_t}(\x_t;\a), \x_t - \x_{t+1}}  - \frac{\mu_{\Phi}}{2\eta_t} \EE_{\a\sim\Dcal_{i_t}} \lrnorm{\x_{t+1} - \x_t}^2 + \EE_{\a\sim\Dcal_{i_t}} \frac{B_{\Phi}(\x^\ast, \x_t) -  B_{\Phi}(\x^\ast, \x_{t+1})}{\eta_t}  - B_{f_{i_t}}(\x^\ast, \x_t) \\ \nonumber
\overset{\textcircled{3}}{\le} & \frac{\eta_t}{\mu_{\Phi}}\cdot\EE_{\a\sim\Dcal_{i_t}} \lrnorm{\nabla f_{i_t}(\x_t;\a)}^2 -\frac{\mu_{\Phi}}{4\eta_t}\cdot \EE_{\a\sim\Dcal_{i_t}} \lrnorm{\x_{t+1}-\x_t}^2 + \EE_{\a\sim\Dcal_{i_t}} \frac{B_{\Phi}(\x^\ast, \x_t) -  B_{\Phi}(\x^\ast, \x_{t+1})}{\eta_t}  - B_{f_{i_t}}(\x^\ast, \x_t)\\ \nonumber
\le & \frac{\eta_t}{\mu_{\Phi}}\cdot \EE_{\a\sim\Dcal_{i_t}} \lrnorm{\nabla f_{i_t}(\x_t;\a)}^2 + \EE_{\a\sim\Dcal_{i_t}} \frac{B_{\Phi}(\x^\ast, \x_t) -  B_{\Phi}(\x^\ast, \x_{t+1})}{\eta_t} - B_{f_{i_t}}(\x^\ast, \x_t).
\end{align} $\textcircled{1}$ holds due to Lemma \ref{lemma_mirror_descent_update_rule} by setting $\g = \nabla f_{i_t}(\x_t)$, $\u_t = \x_t$, $\u_{t+1} = \x_{t+1}$, $\u^\ast = \x^\ast$, and $\lambda = \eta_t$. $\textcircled{2}$ holds due to $\Phi$ is $\mu_{\Phi}$-strongly convex.  $\textcircled{3}$  holds because $\lrangle{\u,\v} \le \frac{a}{2} \lrnorm{\u}^2 + \frac{1}{2a}\lrnorm{\v}^2$ holds for any $\u$, $\v$, and $a>0$. 

Telescoping it over $t$, we have
\begin{align}
\nonumber
&\Rcal^{\textsc{MarchOn}}_{\zeta_T}(\x^\ast) = \EE \sum_{t=1}^T \lrincir{f_{i_t}(\x_t) -f_{i_t}(\x^\ast)}  \\ \nonumber
\le & \sum_{t=1}^T\frac{\eta_t}{\mu_{\Phi}}\EE_{\a\sim\Dcal_{i_t}} \lrnorm{\nabla f_{i_t}(\x_t;\a)}^2+ \sum_{t=1}^T \EE_{\a\sim\Dcal_{i_t}} \frac{ B_{\Phi}(\x^\ast, \x_t) -  B_{\Phi}(\x^\ast, \x_{t+1}) }{\eta_t}  - \sum_{t=1}^T B_{f_{i_t}}(\x^\ast, \x_t)\\ \nonumber 
\le & \sum_{t=1}^T\frac{\eta_t}{\mu_{\Phi}}\EE_{\a\sim\Dcal_{i_t}} \lrnorm{\nabla f_{i_t}(\x_t;\a)}^2 + \sum_{t=2}^T B_{\Phi}(\x^\ast, \x_t)\lrincir{\frac{1}{\eta_t} - \frac{1}{\eta_{t-1}}} + \frac{B_{\Phi}(\x^\ast, \x_1)}{\eta_1} - \sum_{t=1}^T B_{f_{i_t}}(\x^\ast, \x_t)\\ \nonumber 
\le & \sum_{t=1}^T\frac{\eta_t}{\mu_{\Phi}}\EE_{\a\sim\Dcal_{i_t}} \lrnorm{\nabla f_{i_t}(\x_t;\a)}^2 + \sum_{t=2}^T \frac{\mu_{\Phi}}{2}\lrincir{\frac{1}{\eta_t} - \frac{1}{\eta_{t-1}}}\cdot \lrnorm{\x_t - \x_{\ast}}^2 + \frac{R^2}{\eta_1} - \frac{\mu_f}{2} \lrnorm{\x_t - \x_{\ast}}^2\\ \label{equa:regret:marchon:proof} 
= & \sum_{t=1}^T\frac{\eta_t}{\mu_{\Phi}}\EE_{\a\sim\Dcal_{i_t}} \lrnorm{\nabla f_{i_t}(\x_t;\a)}^2 + \frac{\mu_{\Phi}}{2} \sum_{t=2}^T \lrincir{\frac{1}{\eta_t} - \frac{1}{\eta_{t-1}} - \frac{\mu_f}{\mu_{\Phi}}}\cdot \lrnorm{\x_t - \x_{\ast}}^2 + \frac{R^2}{\eta_1} \\ \label{equa:regret:marchon:proof:strongly:convex}
\le & \underbrace{\sum_{t=1}^T\frac{\eta_t}{\mu_{\Phi}}\EE_{\a\sim\Dcal_{i_t}} \lrnorm{\nabla f_{i_t}(\x_t;\a)}^2}_{I_1} + \frac{\mu_{\Phi} R^2}{2} \sum_{t=2}^T \lrincir{\frac{1}{\eta_t} - \frac{1}{\eta_{t-1}} - \frac{\mu_f}{\mu_{\Phi}}} + \frac{R^2}{\eta_1}.
\end{align} 

Focusing on $I_1$, we obtain
\begin{align}
\nonumber
& \EE I_1 = \EE \sum_{t=1}^T\frac{\eta_t}{\mu_{\Phi}}\lrnorm{\nabla f_{i_t}(\x_t;\a) - \nabla f_{i_t}(\x_t) + \nabla f_{i_t}(\x_t) - \nabla f(\x_t) + \nabla f(\x_t)}^2 \\ \nonumber
\le & \EE \sum_{t=1}^T\frac{3\eta_t \lrnorm{\nabla f_{i_t}(\x_t;\a) - \nabla f_{i_t}(\x_t)}^2}{\mu_{\Phi}} + \EE \sum_{t=1}^T\frac{3\eta_t \lrnorm{\nabla f_{i_t}(\x_t) - \nabla f(\x_t)}^2}{\mu_{\Phi}} + \EE \sum_{t=1}^T\frac{3\eta_t \lrnorm{\nabla f(\x_t)}^2}{\mu_{\Phi}}\\ \nonumber
\le & \frac{3\sigma_v^2}{\mu_{\Phi}} \sum_{t=1}^T \eta_t + \frac{3\sigma_{\Vcal}^2}{\mu_{\Phi}} \sum_{t=1}^T \eta_t + \EE \sum_{t=1}^T\frac{3\eta_t \lrnorm{\nabla f(\x_t)}^2}{\mu_{\Phi}} \\ \nonumber
\overset{\textcircled{1}}{\le} & \frac{3\sigma_v^2+3\sigma_{\Vcal}^2}{\mu_{\Phi}} \sum_{t=1}^T \eta_t +  LG \sum_{t=\hat{\tau}+1}^T \eta_t^2 + \frac{3LG \lrincir{\sigma_v^2 + \sigma_{\Vcal}^2 + G^2} \hat{\tau}}{\mu_{\Phi}^2} \sum_{t=\hat{\tau}+1}^T \sum_{j=t-\hat{\tau}}^{t-1}\eta_j^2 \\ \nonumber
& + \lrincir{\sigma_v^2 + \sigma_{\Vcal}^2+G^2} \lrincir{ \frac{L^2\hat{\tau}\sum_{t=\hat{\tau}+1}^T \sum_{j=t-\hat{\tau}}^{t-1}\eta_j^2}{2\mu_{\Phi}^2} + \frac{3(L+1+L^2 \hat{\tau})\sum_{t=\hat{\tau}+1}^T\eta_t^2}{2\mu_{\Phi}^2} + 3G^2\sum_{t=\hat{\tau}+1}^T\eta_t^2} \\ \nonumber
& + f(\x_{\hat{\tau}+1}) - f(\x_{T+1}) + \frac{\hat{\tau}G^2\sum_{t=\hat{\tau}+1}^T\eta_t^2}{2} + \lrincir{\frac{3G^2}{2}+\sigma_{\Vcal}^2} C_{\P} \sum_{t=\hat{\tau}+1}^T \eta_t \cdot \rho^{\hat{\tau}}.
\end{align} $\textcircled{1}$ is obtained due to Theorem \ref{theorem:regret:nonconvex:analysis}. Putting it back into \ref{equa:regret:marchon:proof:strongly:convex}, we finally complete the proof.
\end{proof}

\subsection{Proof of Theorem \ref{theorem:regret:nonconvex:analysis}}
\textbf{Theorem} \ref{theorem:regret:nonconvex:analysis}. Suppose $\{\x_t\}_{t=1}^T$ is yielded by Algorithm \ref{algo:marchon}, choose $\eta_t = \eta>0$ for any $t\in[T]$, and let $\hat{\tau} \ge \tau$ for any $t\in[T]$. Under Assumptions \ref{assumption:markov:chain}-\ref{assumption:bounded:domain}, we have
\begin{align}
\nonumber
\EE \sum_{t=1}^T \lrnorm{\nabla f(\x_t)}^2  \le G^2\hat{\tau} + \frac{f(\x_{\hat{\tau}+1}) - f(\x_{T+1})}{\eta} + \lrincir{C_1 \hat{\tau}^2 \eta + \lrincir{C_2+C_3\hat{\tau}}\eta + C_4 C_{\P} \rho^{\hat{\tau}}}(T - \hat{\tau}).
\end{align} 
\begin{proof}
Setting $\eta_t = \eta$ in Lemma \ref{lemma:regret:nonconvex:analysis}, and then rearranging items, we thus complete the proof.
\end{proof}

\subsection{Useful lemmas}
\begin{Lemma} 
\label{lemma:regret:nonconvex:analysis}
Suppose $\{\x_t\}_{t=1}^T$ is yielded by Algorithm \ref{algo:marchon}, choose non-increasing positive sequence $\{\eta_t\}_{t=1}^T$, and let $\hat{\tau} \ge \tau$ for any $t\in[T]$. Under Assumptions \ref{assumption:markov:chain}-\ref{assumption:bounded:domain}, we have
\begin{align}
\nonumber
& \EE \sum_{t=1}^T \eta_t \lrnorm{\nabla f(\x_t)}^2  \\ \nonumber
\le & G^2\sum_{t=1}^{\hat{\tau}} \eta_t + f(\x_{\hat{\tau}+1}) - f(\x_{T+1}) + \sum_{t=\hat{\tau}+1}^T \lrincir{C_1 \hat{\tau} \sum_{j=t-\hat{\tau}}^{t-1}\eta_j^2 + \lrincir{C_2+C_3\hat{\tau}}\eta_t^2 + C_4 C_{\P} \eta_t \rho^{\hat{\tau}}}.
\end{align} 
\end{Lemma}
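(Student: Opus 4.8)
The plan is to prove \textbf{Lemma \ref{lemma:regret:nonconvex:analysis}} by a descent-lemma argument that carefully separates the Markovian bias of the stochastic gradient from the pure noise and curvature terms. First I would start from the $L$-smoothness of $f$ (which follows from Assumption \ref{assumption:smooth} applied to each $f_v$ and averaging) and write the standard one-step inequality
\begin{align}
\nonumber
f(\x_{t+1}) \le f(\x_t) + \lrangle{\nabla f(\x_t), \x_{t+1}-\x_t} + \frac{L}{2}\lrnorm{\x_{t+1}-\x_t}^2.
\end{align}
Using the decaying-set constraint \ref{equa:decaying:set:W}, the displacement $\x_{t+1}-\x_t$ is controlled: $\lrnorm{\x_{t+1}-\x_t}$ is, up to the $\eta_t$ factor and a correction of order $\eta_t^2\lrnorm{\nabla f_{i_t}(\x_t;\a)}^2$, equal to $\eta_t\nabla f_{i_t}(\x_t;\a)$. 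So the inner product $\lrangle{\nabla f(\x_t),\x_{t+1}-\x_t}$ is close to $-\eta_t\lrangle{\nabla f(\x_t), \nabla f_{i_t}(\x_t;\a)}$, and I would isolate the term $-\eta_t\lrnorm{\nabla f(\x_t)}^2$ by adding and subtracting $\nabla f(\x_t)$ inside the gradient and bounding the mismatch $\lrangle{\nabla f(\x_t), \nabla f_{i_t}(\x_t;\a) - \nabla f(\x_t)}$.

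The central step is handling $\EE\lrangle{\nabla f(\x_t), \nabla f_{i_t}(\x_t) - \nabla f(\x_t)}$, which does \emph{not} vanish because $i_t$ is drawn from a Markov chain rather than the stationary distribution. This is where the mixing-time machinery enters: conditioning $\hat\tau$ steps back, $\PP(i_t = i \mid i_{t-\hat\tau})$ is within $C_{\P}\rho^{\hat\tau}$ (in the appropriate norm) of the uniform stationary distribution $\pi_\ast$, so the bias contributes a term of order $C_4 C_{\P}\rho^{\hat\tau}\eta_t$ per step for $t > \hat\tau$, while the first $\hat\tau$ steps are dumped into the crude bound $G^2\sum_{t=1}^{\hat\tau}\eta_t$ using Assumption \ref{assumption:bounded:gradient}. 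To make the conditioning rigorous I would also need to replace $\nabla f_{i_t}(\x_{t-\hat\tau})$-type quantities by $\nabla f_{i_t}(\x_t)$, paying a Lipschitz price $L\lrnorm{\x_t - \x_{t-\hat\tau}} \le L\sum_{j=t-\hat\tau}^{t-1}\lrnorm{\x_{j+1}-\x_j} \lesssim LG\sum_{j=t-\hat\tau}^{t-1}\eta_j$; squaring and using Cauchy--Schwarz over the $\hat\tau$ summands produces exactly the $\hat\tau\sum_{j=t-\hat\tau}^{t-1}\eta_j^2$ structure, multiplied by constants built from $L$, $G$, $\mu_{\Phi}$, and the variances $\sigma_v^2,\sigma_{\Vcal}^2$ — i.e.\ $C_1\hat\tau\sum_j\eta_j^2$. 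The variance terms $\EE\lrnorm{\nabla f_{i_t}(\x_t;\a) - \nabla f_{i_t}(\x_t)}^2 \le \sigma_v^2$ and the $\x_t$-vs-$\x_{t-\hat\tau}$ internal fluctuations feed the $(C_2 + C_3\hat\tau)\eta_t^2$ term, and the $\frac{L}{2}\lrnorm{\x_{t+1}-\x_t}^2$ term plus the decaying-set correction contribute the remaining $\eta_t^2$ pieces (this is where the $\hat\tau G^2\eta_t^2/2$ and the explicit $L, L+1$ coefficients visible in the constants $C_1$--$C_4$ come from).

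After assembling these bounds, I would telescope $f(\x_{t+1}) - f(\x_t)$ over $t = \hat\tau+1,\dots,T$ to get $f(\x_{\hat\tau+1}) - f(\x_{T+1})$, move the $-\EE\sum_{t=\hat\tau+1}^T \eta_t\lrnorm{\nabla f(\x_t)}^2$ term to the left, add back the first $\hat\tau$ terms via the $G^2\sum_{t=1}^{\hat\tau}\eta_t$ bound, and collect the per-step remainders into the stated sum $\sum_{t=\hat\tau+1}^T(C_1\hat\tau\sum_{j=t-\hat\tau}^{t-1}\eta_j^2 + (C_2+C_3\hat\tau)\eta_t^2 + C_4 C_{\P}\eta_t\rho^{\hat\tau})$. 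The main obstacle, and the part requiring the most care, is the conditional-expectation argument for the Markov bias: one has to choose the conditioning horizon $\hat\tau$ (any value $\ge\tau$, where $\tau$ from \ref{equa:define:tau} is the point past which the Jordan-form tail bound $\lrnorm{\P^k - \1\pi_\ast^\top} \le C_{\P}\rho^k$ kicks in), control the non-stationarity of $\x_t$ relative to the conditioning time simultaneously with the chain's mixing, and verify that all cross terms are absorbed with the correct dependence on $\hat\tau$ (linear inside one sum, quadratic $\hat\tau^2$ only after the outer sum, matching Theorem \ref{theorem:regret:nonconvex:analysis}). The bookkeeping of which constant absorbs which term — i.e.\ reverse-engineering $C_0$ through $C_5$ in Table \ref{table_constants} — is tedious but routine once the structural decomposition above is in place.
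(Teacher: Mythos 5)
Your proposal is correct and follows essentially the same route as the paper: a crude $G^2\sum_{t\le\hat\tau}\eta_t$ bound for the burn-in, the smoothness descent inequality telescoped to produce $f(\x_{\hat\tau+1})-f(\x_{T+1})$, conditioning $\hat\tau$ steps back so the mixing bound $\lrnorm{\P^{\hat\tau}-\bPi_\ast}_\infty\le C_{\P}\rho^{\hat\tau}$ controls the Markov bias, and a Lipschitz/Cauchy--Schwarz price $\hat\tau\sum_{j=t-\hat\tau}^{t-1}\eta_j^2$ for shifting between $\x_{t-\hat\tau}$ and $\x_t$. The only cosmetic difference is that the paper isolates the delayed quantity $\lrnorm{\nabla f(\x_{t-\hat\tau})}^2$ (which is measurable at the conditioning time) and converts back via a separate difference term $I_2$, whereas you phrase everything at time $t$ and absorb the same shift into the Lipschitz correction — the underlying estimates are identical.
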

\begin{proof}
\begin{align}
\nonumber
& \sum_{t=1}^T \eta_t \lrnorm{\nabla f(\x_t)}^2 \\ \nonumber
= & \sum_{t=1}^{\hat{\tau}} \eta_t \lrnorm{\nabla f(\x_t)}^2 + \sum_{t=\hat{\tau}+1}^T \eta_t \lrincir{\lrnorm{\nabla f(\x_t)}^2 - \lrnorm{\nabla f(\x_{t-\hat{\tau}})}^2} + \sum_{t=\hat{\tau}+1}^T \eta_t \lrnorm{\nabla f(\x_{t-\hat{\tau}})}^2 \\ \label{equa:gradf:xt:norm}
\le & G^2\sum_{t=1}^{\hat{\tau}} \eta_t + \underbrace{\sum_{t=\hat{\tau}+1}^T \eta_t \lrincir{\lrnorm{\nabla f(\x_t)}^2 - \lrnorm{\nabla f(\x_{t-\hat{\tau}})}^2}}_{I_2} + \underbrace{\sum_{t=\hat{\tau}+1}^T \eta_t \lrnorm{\nabla f(\x_{t-\hat{\tau}})}^2}_{I_3}.
\end{align} Consider to bound $I_2$, and we have 
\begin{align}
\nonumber
& I_2 = \sum_{t=\hat{\tau}+1}^T \eta_t \lrincir{\lrnorm{\nabla f(\x_t)} + \lrnorm{\nabla f(\x_{t-\hat{\tau}})}} \cdot \lrincir{\lrnorm{\nabla f(\x_t)} - \lrnorm{\nabla f(\x_{t-\hat{\tau}})}} \\ \nonumber
\le & 2G \cdot \sum_{t=\hat{\tau}+1}^T \eta_t  \cdot \lrincir{\lrnorm{\nabla f(\x_t)} - \lrnorm{\nabla f(\x_{t-\hat{\tau}})}} \\ \nonumber
\le & 2G \cdot \sum_{t=\hat{\tau}+1}^T \eta_t  \cdot \lrnorm{\nabla f(\x_t) - \nabla f(\x_{t-\hat{\tau}})} \\ \nonumber
\le & 2LG \cdot \sum_{t=\hat{\tau}+1}^T \eta_t  \cdot \lrnorm{\x_t - \x_{t-\hat{\tau}}} \\ \label{equa:I2}
\le & LG \cdot \sum_{t=\hat{\tau}+1}^T \lrincir{\eta_t^2 + \lrnorm{\x_t - \x_{t-\hat{\tau}}}^2}.
\end{align}  According to Lemma \ref{lemma:diff:x:tau}, we have 
\begin{align}
\nonumber
\EE \lrnorm{\x_t - \x_{t-\hat{\tau}}}^2 \le \frac{3\hat{\tau} \lrincir{\sigma_v^2 + \sigma_{\Vcal}^2 + G^2}}{\mu_{\Phi}^2} \sum_{j=t-\hat{\tau}}^{t-1}\eta_j^2.
\end{align} Putting it back into \ref{equa:I2}, we obtain
\begin{align}
\label{equa:I2:expectation}
\EE I_2 \le LG \sum_{t=\hat{\tau}+1}^T \eta_t^2 + \frac{3LG \lrincir{\sigma_v^2 + \sigma_{\Vcal}^2 + G^2} \hat{\tau}}{\mu_{\Phi}^2} \sum_{t=\hat{\tau}+1}^T \sum_{j=t-\hat{\tau}}^{t-1}\eta_j^2. 
\end{align}

Now, we begin to bound $I_3$. When $\hat{\tau}$ is sufficiently large such that $\hat{\tau} \ge \tau$, we obtain
\begin{align}
\nonumber
\lrnorm{\bPi_\ast - \P^{\hat{\tau}}}_{\infty} \le C_{\P} \cdot \rho^{\hat{\tau}}
\end{align} due to Lemma \ref{lemma:distance:stationary:probabilition}. Furthermore, according to Lemma \ref{lemma:grad:fxt:taut}, $I_3$ is bounded by
\begin{align}
\nonumber
\EE I_3 \le & \lrincir{\sigma_v^2 + \sigma_{\Vcal}^2+G^2} \lrincir{ \frac{L^2\hat{\tau}\sum_{t=\hat{\tau}+1}^T \sum_{j=t-\hat{\tau}}^{t-1}\eta_j^2}{2\mu_{\Phi}^2} + \frac{3(L+1+L^2 \hat{\tau})\sum_{t=\hat{\tau}+1}^T\eta_t^2}{2\mu_{\Phi}^2} + 3G^2\sum_{t=\hat{\tau}+1}^T\eta_t^2} \\ \label{equa:I3:expectation}
& + f(\x_{\hat{\tau}+1}) - f(\x_{T+1}) + \frac{\hat{\tau}G^2\sum_{t=\hat{\tau}+1}^T\eta_t^2}{2} + \lrincir{\frac{3G^2}{2}+\sigma_{\Vcal}^2} C_{\P} \sum_{t=\hat{\tau}+1}^T \eta_t \cdot \rho^{\hat{\tau}}.
\end{align}
Putting \ref{equa:I2:expectation} and \ref{equa:I3:expectation} back into \ref{equa:gradf:xt:norm}, we obtain 
\begin{align}
\nonumber
& \EE \sum_{t=1}^T \eta_t \lrnorm{\nabla f(\x_t)}^2 \\ \nonumber 
\le & LG \sum_{t=\hat{\tau}+1}^T \eta_t^2 + \frac{3LG \lrincir{\sigma_v^2 + \sigma_{\Vcal}^2 + G^2} \hat{\tau}}{\mu_{\Phi}^2} \sum_{t=\hat{\tau}+1}^T \sum_{j=t-\hat{\tau}}^{t-1}\eta_j^2 \\ \nonumber
& + \lrincir{\sigma_v^2 + \sigma_{\Vcal}^2+G^2} \lrincir{ \frac{L^2\hat{\tau}\sum_{t=\hat{\tau}+1}^T \sum_{j=t-\hat{\tau}}^{t-1}\eta_j^2}{2\mu_{\Phi}^2} + \frac{3(L+1+L^2 \hat{\tau})\sum_{t=\hat{\tau}+1}^T\eta_t^2}{2\mu_{\Phi}^2} + 3G^2\sum_{t=\hat{\tau}+1}^T\eta_t^2} \\ \nonumber
& + f(\x_{\hat{\tau}+1}) - f(\x_{T+1}) + \frac{\hat{\tau}G^2\sum_{t=\hat{\tau}+1}^T\eta_t^2}{2} + \lrincir{\frac{3G^2}{2}+\sigma_{\Vcal}^2} C_{\P} \sum_{t=\hat{\tau}+1}^T \eta_t \cdot \rho^{\hat{\tau}}.
\end{align} It thus completes the proof.
\end{proof}

\begin{Lemma}[Lemma 1 in \cite{zhao2020understand}]
\label{lemma_mirror_descent_update_rule}
\nonumber
Given any vectors $\g$, $\u_t\in\Xcal$, $\u^\ast\in\Xcal$ , and a constant scalar $\lambda>0$, if 
\begin{align}
\nonumber
\u_{t+1} = \argmin_{\u\in\Xcal} \lrangle{\g, \u - \u_t} + \frac{1}{\lambda} B_{\Phi}(\u, \u_t),
\end{align} we have
\begin{align}
\nonumber
\lrangle{\g, \u_{t+1} - \u^\ast} \le \frac{1}{\lambda}\lrincir{ B_{\Phi}(\u^\ast, \u_t) -  B_{\Phi}(\u^\ast, \u_{t+1}) - B_{\Phi}(\u_{t+1}, \u_t) }.
\end{align}
\end{Lemma}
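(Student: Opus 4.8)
The plan is to obtain the claimed inequality directly from the first-order optimality condition of the minimization defining $\u_{t+1}$, combined with the three-point (generalized Pythagorean) identity for the Bregman divergence. First I would write the objective as $h(\u) := \lrangle{\g, \u - \u_t} + \frac{1}{\lambda} B_{\Phi}(\u, \u_t)$ and, using $B_{\Phi}(\u, \u_t) = \Phi(\u) - \Phi(\u_t) - \lrangle{\nabla\Phi(\u_t), \u - \u_t}$, compute its gradient $\nabla h(\u) = \g + \frac{1}{\lambda}\lrincir{\nabla\Phi(\u) - \nabla\Phi(\u_t)}$.

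Since $\u_{t+1}$ minimizes $h$ over $\Xcal$ (in the paper $\Xcal$ is the decaying set $\Xcal_t$ of \ref{equa:decaying:set:W}, a Euclidean ball and hence closed and convex) and $\u^\ast \in \Xcal$ by hypothesis, the variational inequality characterizing a constrained minimizer gives $\lrangle{\nabla h(\u_{t+1}), \u - \u_{t+1}} \ge 0$ for every $\u \in \Xcal$. Taking $\u = \u^\ast$ and substituting $\nabla h(\u_{t+1})$ yields
\[
\lrangle{\g, \u^\ast - \u_{t+1}} + \frac{1}{\lambda}\lrangle{\nabla\Phi(\u_{t+1}) - \nabla\Phi(\u_t), \u^\ast - \u_{t+1}} \ge 0.
\]

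To finish, I would invoke the three-point identity $\lrangle{\nabla\Phi(\y) - \nabla\Phi(\z), \x - \y} = B_{\Phi}(\x, \z) - B_{\Phi}(\x, \y) - B_{\Phi}(\y, \z)$, which follows by expanding each term with the definition of $B_{\Phi}$, applied with $\x = \u^\ast$, $\y = \u_{t+1}$, $\z = \u_t$. Substituting this into the previous display and rearranging gives precisely $\lrangle{\g, \u_{t+1} - \u^\ast} \le \frac{1}{\lambda}\lrincir{B_{\Phi}(\u^\ast, \u_t) - B_{\Phi}(\u^\ast, \u_{t+1}) - B_{\Phi}(\u_{t+1}, \u_t)}$. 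The argument is standard (this is why the lemma is quoted from \cite{zhao2020understand}); the only points that require attention are that the feasible set over which the $\argmin$ is taken be convex, so that the variational-inequality characterization of the minimizer is valid, and that $\Phi$ be differentiable on a neighbourhood of $\Xcal$ so that $\nabla\Phi(\u_{t+1})$ and the optimality condition make sense — both hold in the setting of Algorithm \ref{algo:marchon}, so no step is a genuine obstacle.
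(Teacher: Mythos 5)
Your proof is correct and follows essentially the same route as the paper's: both arguments derive the variational inequality $\lrangle{\g + \frac{1}{\lambda}\lrincir{\nabla\Phi(\u_{t+1}) - \nabla\Phi(\u_t)}, \u^\ast - \u_{t+1}} \ge 0$ from the optimality of $\u_{t+1}$ (the paper via the perturbation $\u_{t+1} + \tau(\u^\ast - \u_{t+1})$ and convexity of $\Phi$, you via the standard first-order condition for a constrained minimizer) and then conclude with the three-point identity for $B_{\Phi}$. One minor caveat: your aside that the feasible set is ``a Euclidean ball'' is not literally true for $\Xcal_t$ as written in \ref{equa:decaying:set:W} (the gradient there is evaluated at $\x$ rather than $\x_t$), but this does not affect the lemma, which holds for any closed convex feasible set containing $\u^\ast$.
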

\begin{proof}

Denote $h(\u) = \lrangle{\g, \u-\u_t} + \frac{1}{\lambda}B_{\Phi}(\u, \u_t)$, and $\u_{\tau} = \u_{t+1} + \tau (\u^\ast - \u_{t+1})$. According to the optimality of $\x_t$, we have
\begin{align}
\nonumber
&0 \le  h(\u_\tau) - h(\u_{t+1}) \\ \nonumber
= & \lrangle{\g, \u_\tau - \u_{t+1}} + \frac{1}{\lambda}\lrincir{B_{\Phi}(\u_\tau, \u_t) - B_{\Phi}(\u_{t+1}, \u_t)} \\ \nonumber
= & \lrangle{\g, \tau (\u^\ast - \u_{t+1})} + \frac{1}{\lambda}\lrincir{ \Phi(\u_\tau) - \Phi(\u_{t+1})  + \lrangle{\nabla \Phi(\u_t), \tau (\u_{t+1} - \u^\ast)} } \\ \nonumber
\le & \lrangle{\g, \tau (\u^\ast - \u_{t+1})} + \frac{1}{\lambda} \lrangle{\nabla \Phi(\u_{t+1}), \tau (\u^\ast - \u_{t+1})} + \frac{1}{\lambda} \lrangle{\nabla \Phi(\u_t), \tau (\u_{t+1} - \u^\ast)}  \\ \nonumber
= & \lrangle{\g, \tau (\u^\ast - \u_{t+1})} + \frac{1}{\lambda} \lrangle{\nabla \Phi(\u_t)-\Phi(\u_{t+1}), \tau (\u_{t+1} - \u^\ast)}.
\end{align} Thus, we have
\begin{align}
\nonumber
& \lrangle{\g, \u_{t+1} - \u^\ast} \le \frac{1}{\lambda} \lrangle{\nabla \Phi(\u_t)-\Phi(\u_{t+1}), \u_{t+1} - \u^\ast}  \\ \nonumber
= & \frac{1}{\lambda}\lrincir{ B_{\Phi}(\u^\ast, \u_t) -  B_{\Phi}(\u^\ast, \u_{t+1}) - B_{\Phi}(\u_{t+1}, \u_t) }.
\end{align} It completes the proof.
\end{proof}

\begin{Lemma}
\label{lemma:diff:x:tau}
Recall the update rule, that is \ref{equa:parameters:update}, 
\begin{align}
\nonumber
\x_{t+1} = \argmin_{\x} \lrangle{\nabla f_{i_t}(\x_t;\a\sim\Dcal_{i_t}), \x-\x_t} + \frac{1}{\eta_t}B_{\Phi}(\x, \x_t).
\end{align} Then, we have 
\begin{align}
\nonumber
\EE \lrnorm{\x_{t+\tau} - \x_t}^2 \le  \frac{3\tau \lrincir{\sigma_v^2 + \sigma_{\Vcal}^2 + G^2}}{\mu_{\Phi}^2} \cdot \sum_{j=t}^{t+\tau-1}\eta_j^2
\end{align} for any $t\in[T]$ and $\tau\in\N_+$.
\end{Lemma}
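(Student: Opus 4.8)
The plan is to reduce the $\tau$-step displacement $\x_{t+\tau}-\x_t$ to a telescoping sum of one-step displacements, bound each one-step displacement by the norm of the stochastic gradient used at that step, and then control that gradient norm in expectation using the variance and gradient-boundedness assumptions.

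\emph{Step 1: a one-step displacement bound.} I would apply Lemma \ref{lemma_mirror_descent_update_rule} with the choices $\g = \nabla f_{i_j}(\x_j;\a)$, $\u_j = \x_j$, $\u_{j+1} = \x_{j+1}$, $\lambda = \eta_j$, and — this is the key device — the comparison point $\u^\ast = \x_j$ itself. Since $B_{\Phi}(\x_j,\x_j)=0$, this gives
\begin{align}
\nonumber
\lrangle{\nabla f_{i_j}(\x_j;\a),\, \x_{j+1}-\x_j} \le -\frac{1}{\eta_j}\lrincir{B_{\Phi}(\x_j,\x_{j+1}) + B_{\Phi}(\x_{j+1},\x_j)} \le -\frac{\mu_{\Phi}}{\eta_j}\lrnorm{\x_{j+1}-\x_j}^2,
\end{align}
the last inequality using $\mu_{\Phi}$-strong convexity of $\Phi$. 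Combining this with the Cauchy--Schwarz lower bound $\lrangle{\nabla f_{i_j}(\x_j;\a),\,\x_{j+1}-\x_j}\ge -\lrnorm{\nabla f_{i_j}(\x_j;\a)}\lrnorm{\x_{j+1}-\x_j}$ and cancelling one factor of $\lrnorm{\x_{j+1}-\x_j}$ (the claim being trivial when this vanishes) yields $\lrnorm{\x_{j+1}-\x_j}\le \frac{\eta_j}{\mu_{\Phi}}\lrnorm{\nabla f_{i_j}(\x_j;\a)}$.

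\emph{Step 2: telescoping and summation.} Writing $\x_{t+\tau}-\x_t = \sum_{j=t}^{t+\tau-1}(\x_{j+1}-\x_j)$ and using the elementary inequality $\lrnorm{\sum_{j=1}^{\tau}\v_j}^2 \le \tau\sum_{j=1}^{\tau}\lrnorm{\v_j}^2$, Step 1 gives $\lrnorm{\x_{t+\tau}-\x_t}^2 \le \frac{\tau}{\mu_{\Phi}^2}\sum_{j=t}^{t+\tau-1}\eta_j^2\lrnorm{\nabla f_{i_j}(\x_j;\a)}^2$. Then I would split $\nabla f_{i_j}(\x_j;\a) = \big(\nabla f_{i_j}(\x_j;\a)-\nabla f_{i_j}(\x_j)\big) + \big(\nabla f_{i_j}(\x_j)-\nabla f(\x_j)\big) + \nabla f(\x_j)$, apply $\lrnorm{\a+\b+\c}^2\le 3(\lrnorm{\a}^2+\lrnorm{\b}^2+\lrnorm{\c}^2)$, and take expectation with the appropriate conditioning on the history up to step $j$: Assumption \ref{assumption:bounded:variance} then bounds the first two terms by $\sigma_v^2$ and $\sigma_{\Vcal}^2$ respectively, while Assumption \ref{assumption:bounded:gradient}, together with $\lrnorm{\nabla f(\x_j)}\le\frac1n\sum_v\lrnorm{\nabla f_v(\x_j)}\le G$, bounds the third by $G^2$, so $\EE\lrnorm{\nabla f_{i_j}(\x_j;\a)}^2 \le 3(\sigma_v^2+\sigma_{\Vcal}^2+G^2)$. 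Substituting and pulling the constant out of the sum gives exactly $\EE\lrnorm{\x_{t+\tau}-\x_t}^2 \le \frac{3\tau(\sigma_v^2+\sigma_{\Vcal}^2+G^2)}{\mu_{\Phi}^2}\sum_{j=t}^{t+\tau-1}\eta_j^2$.

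\emph{Main obstacle.} The only genuinely non-mechanical step is Step 1: recognizing that feeding the current iterate $\x_j$ back in as the comparison point in Lemma \ref{lemma_mirror_descent_update_rule} turns the optimality certificate into the clean estimate $\lrnorm{\x_{j+1}-\x_j}\le \frac{\eta_j}{\mu_{\Phi}}\lrnorm{\nabla f_{i_j}(\x_j;\a)}$; after that everything is routine. The one place requiring a little care is the expectation in Step 2, where the conditioning must be arranged so that the inner expectation over $\a\sim\Dcal_{i_j}$ and the outer expectation over the Markov-chain index $i_j$ both fall under the scope of Assumption \ref{assumption:bounded:variance}, after which the tower property and the sum over $j$ finish the proof.
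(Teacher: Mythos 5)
Your proposal is correct and follows essentially the same route as the paper: the one-step bound $\lrnorm{\x_{j+1}-\x_j}\le \frac{\eta_j}{\mu_{\Phi}}\lrnorm{\nabla f_{i_j}(\x_j;\a)}$ obtained by plugging the current iterate back into the mirror-descent optimality lemma is exactly the paper's Lemma \ref{lemma:diff:norm:xk}, and your telescoping, the factor-$\tau$ Cauchy--Schwarz step, and the three-term gradient decomposition bounded by $3(\sigma_v^2+\sigma_{\Vcal}^2+G^2)$ match the paper's argument step for step. The only cosmetic difference is that you inline the one-step displacement bound rather than citing it as a separate lemma.
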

\begin{proof}
\begin{align}
\nonumber
& \EE \lrnorm{\x_{t+\tau} - \x_t}^2 \\ \nonumber
= & \EE \lrnorm{\sum_{j=t}^{t+\tau-1} \lrincir{\x_{j+1} - \x_j}}^2 \\ \nonumber
= & \tau^2 \cdot \EE \lrnorm{\frac{1}{\tau}\sum_{j=t}^{t+\tau-1} \lrincir{\x_{j+1} - \x_j}}^2 \\ \nonumber
\le & \tau \cdot \sum_{j=t}^{t+\tau-1} \EE \lrnorm{\lrincir{\x_{j+1} - \x_j}}^2 \\ \nonumber
\overset{\textcircled{1}}{\le} & \tau \cdot \sum_{j=t}^{t+\tau-1} \frac{\eta_j^2 \cdot \EE  \lrnorm{\nabla f_{i_j}(\x_j;\a)}^2}{\mu_{\Phi}^2} \\ \nonumber
= & \tau \cdot \sum_{j=t}^{t+\tau-1} \frac{\eta_j^2 \EE \lrnorm{\nabla f_{i_j}(\x_j;\a) - \nabla f_{i_j}(\x_j) + \nabla f_{i_j}(\x_j) - \nabla f(\x_j) + \nabla f(\x_j)}^2}{\mu_{\Phi}^2} \\ \nonumber
\le & \tau \cdot \sum_{j=t}^{t+\tau-1} \frac{\eta_j^2 \lrincir{3\EE \lrnorm{\nabla f_{i_j}(\x_j;\a) - \nabla f_{i_j}(\x_j)}^2 + 3\EE\lrnorm{\nabla f_{i_j}(\x_j) - \nabla f(\x_j)}^2 + 3\EE\lrnorm{\nabla f(\x_j)}^2}}{\mu_{\Phi}^2} \\ \nonumber
\le & 3\tau \cdot \sum_{j=t}^{t+\tau-1} \frac{\eta_j^2 \lrincir{\sigma_v^2 + \sigma_{\Vcal}^2 + G^2}}{\mu_{\Phi}^2}.
\end{align} $\textcircled{1}$ holds due to Lemma \ref{lemma:diff:norm:xk}.  It thus finishes the proof.
\end{proof}

\begin{Lemma}
\label{lemma:diff:norm:xk}
For any $t\in[T]$, and $\x_t\in\Xcal_t$, when
\begin{align}
\nonumber
\x_{t+1} = \argmin_{\x\in\Xcal_t} \lrangle{\nabla f_{i_t}(\x_t;\a\sim\Dcal_{i_t}), \x - \x_t} + \frac{1}{\eta_t}B_{\Phi}(\x, \x_t), 
\end{align} we obtain
\begin{align}
\nonumber
\lrnorm{\x_{t+1} - \x_t} \le \frac{\eta_t \lrnorm{\nabla f_{i_t}(\x_t;\a)}}{\mu_{\Phi}}.
\end{align}
\end{Lemma}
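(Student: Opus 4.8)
The plan is to reduce this lemma to the one-step mirror-descent inequality already proved as Lemma \ref{lemma_mirror_descent_update_rule}, specialized to the comparator $\u^\ast = \x_t$ itself. Since the hypothesis assumes $\x_t\in\Xcal_t$, the point $\x_t$ is a feasible competitor in the constrained minimization that defines $\x_{t+1}$, so I would invoke Lemma \ref{lemma_mirror_descent_update_rule} with $\g = \nabla f_{i_t}(\x_t;\a)$, $\u_t = \u^\ast = \x_t$, $\u_{t+1} = \x_{t+1}$, and $\lambda = \eta_t$. Because $B_{\Phi}(\x_t, \x_t) = 0$, this immediately yields
\begin{align}
\nonumber
\lrangle{\nabla f_{i_t}(\x_t;\a), \x_{t+1} - \x_t} \le -\frac{1}{\eta_t}\lrincir{B_{\Phi}(\x_t, \x_{t+1}) + B_{\Phi}(\x_{t+1}, \x_t)}.
\end{align}

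Next I would convert this into the desired norm bound. On the left-hand side, rearrange and apply Cauchy--Schwarz to obtain $\lrangle{\nabla f_{i_t}(\x_t;\a), \x_t - \x_{t+1}} \le \lrnorm{\nabla f_{i_t}(\x_t;\a)}\cdot\lrnorm{\x_{t+1} - \x_t}$. On the right-hand side, use that $\Phi$ is $\mu_{\Phi}$-strongly convex, so each of $B_{\Phi}(\x_t, \x_{t+1})$ and $B_{\Phi}(\x_{t+1}, \x_t)$ is at least $\frac{\mu_{\Phi}}{2}\lrnorm{\x_{t+1} - \x_t}^2$; summing them gives a lower bound of $\mu_{\Phi}\lrnorm{\x_{t+1} - \x_t}^2$. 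Combining the two estimates,
\begin{align}
\nonumber
\frac{\mu_{\Phi}}{\eta_t}\lrnorm{\x_{t+1} - \x_t}^2 \le \lrnorm{\nabla f_{i_t}(\x_t;\a)}\cdot\lrnorm{\x_{t+1} - \x_t},
\end{align}
and dividing both sides by $\lrnorm{\x_{t+1} - \x_t}$ (the case $\x_{t+1} = \x_t$ being trivial, since the claimed inequality then reads $0 \le \eta_t\lrnorm{\nabla f_{i_t}(\x_t;\a)}/\mu_{\Phi}$) delivers exactly $\lrnorm{\x_{t+1} - \x_t} \le \eta_t\lrnorm{\nabla f_{i_t}(\x_t;\a)}/\mu_{\Phi}$.

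The only subtlety, and the step I would be most careful about, is that obtaining the tight constant with no spurious factor of $2$ relies on retaining \emph{both} Bregman terms $B_{\Phi}(\x_t, \x_{t+1})$ and $B_{\Phi}(\x_{t+1}, \x_t)$ produced by Lemma \ref{lemma_mirror_descent_update_rule}: a cruder argument that merely compares the objective value of $\x_{t+1}$ with that of $\x_t$ would keep only $B_{\Phi}(\x_{t+1}, \x_t)$ and yield the weaker bound $2\eta_t\lrnorm{\nabla f_{i_t}(\x_t;\a)}/\mu_{\Phi}$. I would also flag that the feasibility of $\x_t$ as a comparator is precisely what the hypothesis $\x_t\in\Xcal_t$ supplies, so no further structure of the decaying set $\Xcal_t$ enters the argument.
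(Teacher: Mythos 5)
Your proposal is correct and follows essentially the same route as the paper's own proof: both invoke Lemma \ref{lemma_mirror_descent_update_rule} with the comparator $\u^\ast=\x_t$, retain both Bregman terms, lower-bound their sum by $\mu_{\Phi}\lrnorm{\x_{t+1}-\x_t}^2$ via strong convexity, upper-bound the inner product by Cauchy--Schwarz, and cancel one factor of $\lrnorm{\x_{t+1}-\x_t}$. Your explicit treatment of the degenerate case $\x_{t+1}=\x_t$ is a minor point of added care over the paper's presentation.
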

\begin{proof}
According to Lemma \ref{lemma_mirror_descent_update_rule}, we have 
\begin{align}
\nonumber
\lrangle{\nabla f_{i_t}(\x_t;\a), \x_{t+1}-\x_t} \le & \frac{1}{\eta_t}\lrincir{B_{\Phi}(\x_t, \x_t) - B_{\Phi}(\x_t, \x_{t+1}) - B_{\Phi}(\x_{t+1}, \x_t)} \\ \nonumber 
= & \frac{1}{\eta_t}\lrincir{- B_{\Phi}(\x_t, \x_{t+1}) - B_{\Phi}(\x_{t+1}, \x_t)}
\end{align} by letting $\g = \nabla f_{i_t}(\x_t;\a)$, $\lambda = \eta_t$, $\u^\ast = \x_t$, $\u_{t+1} = \x_{t+1}$, and $\u_t = \x_t$ in Lemma \ref{lemma_mirror_descent_update_rule}. Therefore, we obtain
\begin{align}
\nonumber
B_{\Phi}(\x_t, \x_{t+1})  + B_{\Phi}(\x_{t+1}, \x_t) \le \lrangle{\eta_t \nabla f_{i_t}(\x_t;\a), \x_t - \x_{t+1}} \le \eta_t \lrnorm{\nabla f_{i_t}(\x_t;\a)} \cdot \lrnorm{\x_{t+1} - \x_t}.
\end{align} After that, we have
\begin{align}
\nonumber
&\lrnorm{\x_{t+1} - \x_t}^2 \\ \nonumber 
= & \frac{1}{2}\lrincir{\lrnorm{\x_{t+1} - \x_t}^2 + \lrnorm{\x_t - \x_{t+1}}^2} \\ \nonumber
\overset{\textcircled{1}}{\le} & \frac{1}{\mu_{\Phi}} \lrincir{B_{\Phi}(\x_{t+1}, \x_t) + B_{\Phi}(\x_t, \x_{t+1})} \\ \nonumber 
\le & \frac{\eta_t \lrnorm{\nabla f_{i_t}(\x_t;\a)} \cdot \lrnorm{\x_{t+1} - \x_t}}{\mu_{\Phi}}. 
\end{align} $\textcircled{1}$ holds because $\Phi$ is $\mu_{\Phi}$ strongly convex. Thus, we obtain
\begin{align}
\nonumber
\lrnorm{\x_{t+1} - \x_t} \le \frac{\eta_t \lrnorm{\nabla f_{i_t}(\x_t;\a)}}{\mu_{\Phi}}. 
\end{align} It finally completes the proof.
\end{proof} 

\begin{Lemma}[Lemma 1 in \cite{sun:2018:mcgd}]
\label{lemma:distance:stationary:probabilition}
Under Assumption \ref{assumption:markov:chain}, we obtain
\begin{align}
\nonumber
\lrnorm{\P^t - \bPi_\ast}_\infty \le C_{\P} \cdot \rho^t
\end{align} for $t \ge \tau$, where $C_{\P}$ is denoted in \ref{equa:define:CP}, and $\tau$ is denoted in \ref{equa:define:tau}. 
\end{Lemma}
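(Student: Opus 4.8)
The plan is to exploit the Jordan decomposition $\P = \U\J\U^{-1}$, with $\J = \mathrm{blockdiag}(1,\J_2,\dots,\J_m)$, already introduced in the paper, the leading $1\times1$ block corresponding to the Perron eigenvalue. Because the chain is irreducible and aperiodic, $1$ is a simple eigenvalue of $\P$ and the only eigenvalue on the unit circle, so by the ergodic theorem $\lim_{t\to\infty}\P^t$ exists and equals $\bPi_\ast = \1\pi_\ast$, the $n\times n$ matrix all of whose entries are $\tfrac1n$. On the other hand $\P^t=\U\J^t\U^{-1}$ and $\J_i^t\to\0$ for every $i\ge2$ because $|\rho_i|<1$, so the same limit equals $\U\,\mathrm{blockdiag}(1,0,\dots,0)\,\U^{-1}$; matching the two expressions identifies $\bPi_\ast$ with the spectral projector onto the Perron eigenspace and yields
\[
\P^t - \bPi_\ast \;=\; \U\,\mathrm{blockdiag}\!\big(0,\J_2^t,\dots,\J_m^t\big)\,\U^{-1}.
\]

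Next I would pass to norms. Using $\lrnorm{\cdot}_\infty\le\lrnorm{\cdot}_F$ together with $\lrnorm{\X\Y\Z}_F\le\lrnorm{\X}_F\lrnorm{\Y}_F\lrnorm{\Z}_F$ (which follows from $\lrnorm{\A\B}_F\le\lrnorm{\A}_2\lrnorm{\B}_F\le\lrnorm{\A}_F\lrnorm{\B}_F$), one gets
\[
\lrnorm{\P^t-\bPi_\ast}_\infty \;\le\; \lrnorm{\U}_F\,\lrnorm{\U^{-1}}_F\,\Big(\textstyle\sum_{i=2}^m\lrnorm{\J_i^t}_F^2\Big)^{1/2},
\]
so the whole statement reduces to bounding each $\lrnorm{\J_i^t}_F$.

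Writing $\J_i=\rho_i\I_{d_i}+\N_i$ with $\N_i$ the nilpotent superdiagonal matrix, the two summands commute, so the binomial theorem gives $\J_i^t=\sum_{k=0}^{d_i-1}\binom{t}{k}\rho_i^{t-k}\N_i^k$; since $\N_i^k$ has exactly $d_i-k$ nonzero entries, each equal to $1$, we have $\lrnorm{\N_i^k}_F=\sqrt{d_i-k}$. Combining the triangle inequality with $|\rho_i|\le|\rho_2|$ for $i\ge2$ and the crude bounds $\binom{t}{k}\le t^{d_i-1}$ and $|\rho_2|^{t-k}\le|\rho_2|^{t-d_i+1}$ valid for $0\le k\le d_i-1$, this yields $\lrnorm{\J_i^t}_F\le d_i^{3/2}\,t^{d_i-1}\,|\rho_2|^{t-d_i+1}$.

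The final and most delicate step is to show that this polynomial-times-geometric quantity is at most $d_i\rho^t$ once $t\ge\tau$, i.e.\ that $\sqrt{d_i}\,t^{d_i-1}\le(\rho/|\rho_2|)^t\,|\rho_2|^{d_i-1}$. Since $\rho>|\rho_2|$, the right-hand side grows geometrically while the left-hand side is only polynomial in $t$, so such a threshold exists; taking logarithms for each block size $d_i$ and solving for $t$ produces exactly the ceiling expression of \eqref{equa:define:tau}, with the maximum over the blocks ensuring the estimate holds simultaneously for all of them. Once $\lrnorm{\J_i^t}_F\le d_i\rho^t$ holds for all $i$ and all $t\ge\tau$, summing gives $\big(\sum_{i=2}^m\lrnorm{\J_i^t}_F^2\big)^{1/2}\le\big(\sum_{i=2}^m d_i^2\big)^{1/2}\rho^t$, and substituting into the displayed norm bound gives $\lrnorm{\P^t-\bPi_\ast}_\infty\le C_\P\,\rho^t$ with $C_\P$ as in \eqref{equa:define:CP}. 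I expect the only genuine obstacle to be the bookkeeping around $\tau$ — matching, or slightly sharpening, the crude estimate $d_i^{3/2}t^{d_i-1}|\rho_2|^{t-d_i+1}$ so that the threshold comes out exactly as stated; everything else is routine linear algebra. Alternatively, since this is Lemma~1 of \cite{sun:2018:mcgd}, one may simply invoke it.
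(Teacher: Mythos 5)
The paper never proves this lemma itself: it is imported verbatim as Lemma~1 of \cite{sun:2018:mcgd}, so there is no in-paper proof to compare against. Your sketch is essentially a reconstruction of the argument in that cited source — Jordan decomposition, identification of $\bPi_\ast$ with the spectral projector onto the Perron eigenspace (valid here because Assumption~\ref{assumption:markov:chain} forces all non-Perron eigenvalues strictly inside the unit disc), binomial expansion of each block $\J_i^t=\sum_{k}\binom{t}{k}\rho_i^{t-k}\N_i^k$, and absorption of the polynomial factor $t^{d_i-1}$ into the geometric gap between $|\rho_2|$ and $\rho$ to manufacture the threshold $\tau$ — and that route is sound. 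The one step you should not wave through is the opening norm comparison: $\lrnorm{\A}_\infty\le\lrnorm{\A}_F$ is correct if $\lrnorm{\cdot}_\infty$ means the entrywise maximum, but the way the lemma is consumed later (in bounding $J_2$ inside Lemma~\ref{lemma:grad:fxt:taut}, where a row sum of $\tfrac1n-[\P^{\hat{\tau}}]_{i_{t-\hat{\tau}},i}$ is controlled by $\lrnorm{\bPi_\ast-\P^{\hat{\tau}}}_\infty$) only makes sense for the induced $\ell^\infty$ operator norm, i.e.\ the maximum absolute row sum, for which the correct chain is $\lrnorm{\A}_\infty\le\sqrt{n}\,\lrnorm{\A}_2\le\sqrt{n}\,\lrnorm{\A}_F$ and your final constant picks up an extra $\sqrt{n}$ relative to $C_{\P}$ as defined in \ref{equa:define:CP}. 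You should either state which matrix norm is intended and adjust the constant, or note explicitly that the discrepancy is only a dimension-dependent constant. The remaining issue — verifying that the particular ceiling expression \ref{equa:define:tau} guarantees $\sqrt{d_i}\,t^{d_i-1}\le(\rho/|\rho_2|)^{t}|\rho_2|^{d_i-1}$ for all $t\ge\tau$ — you correctly isolate as the only delicate bookkeeping and may legitimately defer to the cited source.
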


\begin{Lemma}
\label{lemma:grad:fxt:taut}
Given a sufficient large $\hat{\tau} \ge 0$, when 
\begin{align}
\nonumber
\lrnorm{\bPi_\ast - \P^{\hat{\tau}}}_{\infty} \le \alpha(t)
\end{align} holds, we obtain
\begin{align}
\nonumber
& \EE \sum_{t=\hat{\tau}+1}^T \eta_t \lrnorm{\nabla f(\x_{t-\hat{\tau}})}^2 \\ \nonumber 
\le & \lrincir{\sigma_v^2 + \sigma_{\Vcal}^2+G^2} \lrincir{ \frac{L^2\hat{\tau}\sum_{t=\hat{\tau}+1}^T \sum_{j=t-\hat{\tau}}^{t-1}\eta_j^2}{2\mu_{\Phi}^2} + \frac{3(L+1+L^2 \hat{\tau})\sum_{t=\hat{\tau}+1}^T\eta_t^2}{2\mu_{\Phi}^2} + 3G^2\sum_{t=\hat{\tau}+1}^T\eta_t^2} \\ \nonumber
& + f(\x_{\hat{\tau}+1}) - f(\x_{T+1}) + \frac{\hat{\tau}G^2\sum_{t=\hat{\tau}+1}^T\eta_t^2}{2} + \lrincir{\frac{3G^2}{2}+\sigma_{\Vcal}^2} \sum_{t=\hat{\tau}+1}^T \eta_t \cdot \alpha(t).
\end{align} 
\end{Lemma}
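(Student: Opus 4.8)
The plan is to bound $I_3 := \sum_{t=\hat{\tau}+1}^{T}\eta_t\lrnorm{\nabla f(\x_{t-\hat{\tau}})}^2$ by treating $\x_{t-\hat{\tau}}$ as an \emph{anchor} iterate that is already determined by the trajectory and the samples drawn before the length-$\hat{\tau}$ window, so that conditioning on the node $i_{t-\hat{\tau}}$ and the iterate $\x_{t-\hat{\tau}}$ leaves only the randomness of the Markov transitions $i_{t-\hat{\tau}}\to\cdots\to i_t$ and of the fresh sample $\a$. First I would rewrite, for each $t\ge\hat{\tau}+1$, using $\nabla f(\x_{t-\hat{\tau}})=\frac1n\sum_i\nabla f_i(\x_{t-\hat{\tau}})$ and $\PP(i_t=i\mid i_{t-\hat{\tau}})=[\P^{\hat{\tau}}]_{i_{t-\hat{\tau}},i}$,
\begin{align}
\nonumber
\lrnorm{\nabla f(\x_{t-\hat{\tau}})}^2 = \EE\sbr{\lrangle{\nabla f(\x_{t-\hat{\tau}}),\, \nabla f_{i_t}(\x_{t-\hat{\tau}})}\mid i_{t-\hat{\tau}}} + \lrangle{\nabla f(\x_{t-\hat{\tau}}),\, \sum_{i=1}^n\lrincir{\tfrac1n - [\P^{\hat{\tau}}]_{i_{t-\hat{\tau}},i}}\nabla f_i(\x_{t-\hat{\tau}})}.
\end{align}
The second inner product is the \emph{mixing gap}: since $\sum_i(\tfrac1n - [\P^{\hat{\tau}}]_{i_{t-\hat{\tau}},i})=0$ I would replace each $\nabla f_i$ by $\nabla f_i-\nabla f$, then control the result by the hypothesis $\lrnorm{\bPi_\ast-\P^{\hat{\tau}}}_\infty\le\alpha(t)$ together with Assumptions~\ref{assumption:bounded:variance}--\ref{assumption:bounded:gradient}; after multiplying by $\eta_t$ and summing over $t$ this produces the $\lrincir{\tfrac{3G^2}{2}+\sigma_{\Vcal}^2}\sum_{t=\hat{\tau}+1}^T\eta_t\,\alpha(t)$ term in the claim.

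Next I would bridge the anchor $\x_{t-\hat{\tau}}$ to the current iterate $\x_t$ inside the surviving term $\EE\sbr{\lrangle{\nabla f(\x_{t-\hat{\tau}}),\nabla f_{i_t}(\x_{t-\hat{\tau}})}\mid i_{t-\hat{\tau}}}$. Writing $\nabla f_{i_t}(\x_{t-\hat{\tau}})=\nabla f_{i_t}(\x_t)+\lrincir{\nabla f_{i_t}(\x_{t-\hat{\tau}})-\nabla f_{i_t}(\x_t)}$ and likewise for $\nabla f(\x_{t-\hat{\tau}})$, $L$-smoothness (Assumption~\ref{assumption:smooth}) bounds both perturbations by $L\lrnorm{\x_t-\x_{t-\hat{\tau}}}$; peeling off the inner sampling randomness replaces $\nabla f_{i_t}(\x_t)$ by $\nabla f_{i_t}(\x_t;\a)$ up to a conditionally zero-mean term whose second moment is controlled by $\sigma_v^2$ (Assumption~\ref{assumption:bounded:variance}). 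Applying Young's inequality with step-size weights to every resulting cross term and then substituting Lemma~\ref{lemma:diff:x:tau} for each $\lrnorm{\x_t-\x_{t-\hat{\tau}}}^2$ produces the $\lrincir{\sigma_v^2+\sigma_{\Vcal}^2+G^2}$-weighted double sum $\sum_t\sum_j\eta_j^2$ and the single sums $\sum_t\eta_t^2$ appearing in the bound, including the $\tfrac{\hat{\tau}G^2}{2}\sum_t\eta_t^2$ term. What is left over is the principal quantity $\sum_{t=\hat{\tau}+1}^T\EE\sbr{\eta_t\lrangle{\nabla f(\x_t),\nabla f_{i_t}(\x_t;\a)}}$.

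To close the argument I would establish a one-step descent for the mirror update. Combining $L$-smoothness, $f(\x_{t+1})\le f(\x_t)+\lrangle{\nabla f(\x_t),\x_{t+1}-\x_t}+\tfrac{L}{2}\lrnorm{\x_{t+1}-\x_t}^2$, with the optimality characterization of $\x_{t+1}$ (Lemma~\ref{lemma_mirror_descent_update_rule} with $\g=\nabla f_{i_t}(\x_t;\a)$, $\u_t=\x_t$, $\u^\ast=\x_t$, $\lambda=\eta_t$, which forces $\lrangle{\nabla f_{i_t}(\x_t;\a),\x_{t+1}-\x_t}\le-\tfrac{\mu_{\Phi}}{\eta_t}\lrnorm{\x_{t+1}-\x_t}^2$), the displacement bound $\lrnorm{\x_{t+1}-\x_t}\le\tfrac{\eta_t}{\mu_{\Phi}}\lrnorm{\nabla f_{i_t}(\x_t;\a)}$ of Lemma~\ref{lemma:diff:norm:xk}, and the geometry of the decaying set $\Xcal_t$, I would derive $\eta_t\lrangle{\nabla f(\x_t),\nabla f_{i_t}(\x_t;\a)}\le f(\x_t)-f(\x_{t+1})+\Ocal{\eta_t^2}$ after absorbing the noise $\nabla f_{i_t}(\x_t;\a)-\nabla f(\x_t)$ via Young's inequality and Assumptions~\ref{assumption:bounded:variance}--\ref{assumption:bounded:gradient}, the $\Ocal{\eta_t^2}$ part carrying the weights $\sigma_v^2,\sigma_{\Vcal}^2,G^2,L$. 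Summing over $t=\hat{\tau}+1,\dots,T$ telescopes the function values into $f(\x_{\hat{\tau}+1})-f(\x_{T+1})$.

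Finally I would collect the pieces — the telescoped $f(\x_{\hat{\tau}+1})-f(\x_{T+1})$, the displacement double sums from Lemma~\ref{lemma:diff:x:tau}, the single sums $\sum_t\eta_t^2$ (including the $\tfrac{\hat{\tau}G^2}{2}$ one), the inner/outer variance contributions, and the mixing-error sum $\sum_t\eta_t\,\alpha(t)$ — and regroup the constants to match the stated right-hand side. I expect the main obstacle to be precisely this bookkeeping: the gradient actually used by Algorithm~\ref{algo:marchon} is $\nabla f_{i_t}(\x_t;\a)$, whereas the quantity being estimated is $\nabla f(\x_{t-\hat{\tau}})$, so the argument must chain four substitutions — anchor iterate $\to$ current iterate, local gradient $\to$ global gradient via the Markov mixing, full gradient $\to$ sampled gradient, and sampled gradient $\to$ one-step descent — while keeping every Young's-inequality trade-off parameter consistent, so that the coefficients of $\sum_j\eta_j^2$, $\sum_t\eta_t^2$, and $\sum_t\eta_t\,\alpha(t)$ come out exactly as claimed. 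Note that $\hat{\tau}\ge\tau$ is not itself needed inside this lemma; only the hypothesis $\lrnorm{\bPi_\ast-\P^{\hat{\tau}}}_\infty\le\alpha(t)$ is used, and it is exactly what $\hat{\tau}\ge\tau$ secures upstream through Lemma~\ref{lemma:distance:stationary:probabilition} with $\alpha(t)=C_{\P}\rho^{\hat{\tau}}$.
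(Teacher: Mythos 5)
Your proposal follows essentially the same route as the paper's proof: the same decomposition of $\EE\lrnorm{\nabla f(\x_{t-\hat{\tau}})}^2$ into the conditional inner product with $\nabla f_{i_t}(\x_{t-\hat{\tau}})$ (the paper's $J_1$) plus the mixing-gap term ($J_2$), the same telescoping of the function values via smoothness and the mirror-step optimality, the same use of Lemma~\ref{lemma:diff:x:tau} and the decaying-set property of $\Xcal_t$ for the displacement terms, and the same bound on the mixing error via $\lrnorm{\bPi_\ast - \P^{\hat{\tau}}}_{\infty}$ together with Assumptions~\ref{assumption:bounded:variance} and \ref{assumption:bounded:gradient}. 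The only (cosmetic) difference is the order of substitutions inside $J_1$: you shift the anchor $\x_{t-\hat{\tau}}$ to $\x_t$ in the inner product before invoking the descent inequality, whereas the paper keeps $\nabla f(\x_{t-\hat{\tau}})$ in the descent step and instead decomposes $\x_t - \x_{t+1}$ around $\eta_t\nabla f_{i_t}(\x_t;\a)$ (its terms $K_1$ and $K_2$).
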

\begin{proof}
Denote $\chi_t := \left \{\x_1, \x_2, \cdots, \x_t, i_1, \cdots, i_t \right\}$.
\begin{align}
\nonumber
& \EE_{i_t} \lrincir{\lrangle{\nabla f(\x_{t-\hat{\tau}}), \nabla f_{i_t}(\x_{t-\hat{\tau}})} | \chi_{t-\hat{\tau}}} \\ \nonumber 
= & \sum_{i=1}^n\lrincir{\lrangle{\nabla f(\x_{t-\hat{\tau}}), \nabla f_{i_t}(\x_{t-\hat{\tau}})}} \cdot \PP\lrincir{i_t=i | i_{t-\hat{\tau}}}\\ \nonumber
= & \sum_{i=1}^n \lrincir{\lrangle{\nabla f(\x_{t-\hat{\tau}}), \nabla f_i(\x_{t-\hat{\tau}})}} \cdot  [\P^{\hat{\tau}}]_{i_{t-\hat{\tau}}, i}\\ \nonumber
= & \lrnorm{\nabla f(\x_{t-\hat{\tau}})}^2 +  \lrangle{\nabla f(\x_{t-\hat{\tau}}), \lrincir{\sum_{i=1}^n \nabla f_i(\x_{t-\hat{\tau}}) \cdot  [\P^{\hat{\tau}}]_{i_{t-\hat{\tau}}, i}} - \nabla f(\x_{t-\hat{\tau}})}\\ \nonumber
= & \lrnorm{\nabla f(\x_{t-\hat{\tau}})}^2 +  \lrangle{\nabla f(\x_{t-\hat{\tau}}), \sum_{i=1}^n \lrincir{\nabla f_i(\x_{t-\hat{\tau}}) \cdot  [\P^{\hat{\tau}}]_{i_{t-\hat{\tau}}, i} - \frac{1}{n}\nabla f_i(\x_{t-\hat{\tau}})}}\\ \nonumber
= & \lrnorm{\nabla f(\x_{t-\hat{\tau}})}^2 +  \lrangle{\nabla f(\x_{t-\hat{\tau}}), \sum_{i=1}^n \nabla f_i(\x_{t-\hat{\tau}}) \cdot \lrincir{[\P^{\hat{\tau}}]_{i_{t-\hat{\tau}}, i} - \frac{1}{n}}}.
\end{align}   Thus, we obtain
\begin{align}
\label{equa:gradf:xttaut:norm}
& \EE \sum_{t=\hat{\tau}+1}^T \eta_t \lrnorm{\nabla f(\x_{t-\hat{\tau}})}^2 \\ \nonumber 
= & \underbrace{\EE \lrincir{\sum_{t=\hat{\tau}+1}^T \eta_t \lrangle{\nabla f(\x_{t-\hat{\tau}}), \nabla f_{i_t}(\x_{t-\hat{\tau}}) } }}_{J_1} + \underbrace{\EE \sum_{t=\hat{\tau}+1}^T \eta_t \lrangle{\nabla f(\x_{t-\hat{\tau}}), \sum_{i=1}^n \nabla f_i(\x_{t-\hat{\tau}}) \cdot \lrincir{\frac{1}{n} - [\P^{\hat{\tau}}]_{i_{t-\hat{\tau}}, i}}}}_{J_2}.
\end{align} Now, we start to bound $J_1$. 
\begin{align}
\nonumber
& f(\x_{t+1})\le f(\x_t) + \lrangle{\nabla f(\x_t), \x_{t+1} - \x_t} + \frac{L}{2}\lrnorm{\x_{t+1} - \x_t}^2 \\ \nonumber
\le & f(\x_t) + \lrangle{\nabla f(\x_{t-\hat{\tau}}), \x_{t+1} - \x_t} + \lrangle{\nabla f(\x_t) - \nabla f(\x_{t-\hat{\tau}}), \x_{t+1} - \x_t} + \frac{L}{2}\lrnorm{\x_{t+1} - \x_t}^2 \\ \nonumber
\le & f(\x_t) + \lrangle{\nabla f(\x_{t-\hat{\tau}}), \x_{t+1} - \x_t} + \frac{\lrnorm{\nabla f(\x_t) - \nabla f(\x_{t-\hat{\tau}})}^2}{2} + \frac{\lrnorm{\x_{t+1} - \x_t}^2}{2} + \frac{L}{2}\lrnorm{\x_{t+1} - \x_t}^2 \\ \nonumber
\le & f(\x_t) + \lrangle{\nabla f(\x_{t-\hat{\tau}}), \x_{t+1} - \x_t} + \frac{L^2\lrnorm{\x_t - \x_{t-\hat{\tau}}}^2}{2} + \frac{L+1}{2}\lrnorm{\x_{t+1} - \x_t}^2.
\end{align} Thus, we obtain
\begin{align}
\nonumber
\lrangle{\nabla f(\x_{t-\hat{\tau}}), \x_t - \x_{t+1}} \le f(\x_t) - f(\x_{t+1}) + \frac{L^2\lrnorm{\x_t - \x_{t-\hat{\tau}}}^2}{2} + \frac{L+1}{2}\lrnorm{\x_{t+1} - \x_t}^2.
\end{align}  Additionally, we obtain
\begin{align}
\nonumber
& \EE \lrangle{\nabla f(\x_{t-\hat{\tau}}), \x_t - \x_{t+1}} \\ \nonumber
= & \EE \lrangle{\nabla f(\x_{t-\hat{\tau}}), \eta_t\nabla f_{i_t}(\x_t; \a)} + \EE \lrangle{\nabla f(\x_{t-\hat{\tau}}), \x_t - \x_{t+1} - \eta_t\nabla f_{i_t}(\x_t; \a)} \\ \nonumber 
= & \underbrace{\EE \eta_t\lrangle{\nabla f(\x_{t-\hat{\tau}}), \nabla f_{i_t}(\x_t; \a) - \nabla f_{i_t}(\x_{t-\hat{\tau}}; \a)}}_{K_1} + \EE \eta_t\lrangle{\nabla f(\x_{t-\hat{\tau}}), \nabla f_{i_t}(\x_{t-\hat{\tau}}; \a)} \\ \label{equa:nablaf:xt:xt1}
& + \underbrace{\EE \lrangle{\nabla f(\x_{t-\hat{\tau}}), \x_t - \x_{t+1} - \eta_t\nabla f_{i_t}(\x_t; \a)}}_{K_2}.
\end{align} Now, we start to bound $K_1$. 
\begin{align}
\nonumber
& K_1 = \EE \eta_t\lrangle{\nabla f(\x_{t-\hat{\tau}}), \EE_{\a\sim\Dcal_{i_t}}\nabla f_{i_t}(\x_t; \a) - \EE_{\a\sim\Dcal_{i_t}}\nabla f_{i_t}(\x_{t-\hat{\tau}}; \a)} \\ \nonumber
= & \EE \eta_t\lrangle{\nabla f(\x_{t-\hat{\tau}}), \nabla f_{i_t}(\x_t) - \nabla f_{i_t}(\x_{t-\hat{\tau}})} \\ \nonumber
= & \EE \eta_t\lrangle{\nabla f(\x_{t-\hat{\tau}}), \sum_{j=t-\hat{\tau}}^{t-1}\lrincir{\nabla f_{i_t}(\x_{j+1}) - \nabla f_{i_t}(\x_j)}} \\ \nonumber
= & \EE \eta_t\sum_{j=t-\hat{\tau}}^{t-1}\lrangle{\nabla f(\x_{t-\hat{\tau}}), \nabla f_{i_t}(\x_{j+1}) - \nabla f_{i_t}(\x_j)} \\ \nonumber
\ge & - \EE \eta_t\sum_{j=t-\hat{\tau}}^{t-1}\lrnorm{\nabla f(\x_{t-\hat{\tau}})} \lrnorm{\nabla f_{i_t}(\x_{j+1}) - \nabla f_{i_t}(\x_j)} \\ \nonumber
\ge & - \EE \frac{\eta_t}{2}\sum_{j=t-\hat{\tau}}^{t-1}\lrincir{\eta_t\lrnorm{\nabla f(\x_{t-\hat{\tau}})}^2+ \frac{1}{\eta_t}\lrnorm{\nabla f_{i_t}(\x_{j+1}) - \nabla f_{i_t}(\x_j)}^2} \\ \nonumber
\ge & -\frac{\eta_t^2 \hat{\tau}G^2}{2}  - \frac{L^2}{2}\sum_{j=t-\hat{\tau}}^{t-1}\EE \lrnorm{\x_{j+1} - \x_j}^2 \\ \nonumber
\ge & -\frac{\eta_t^2 \hat{\tau}G^2}{2}  - \frac{\eta_t^2 L^2}{2\mu_{\Phi}^2}\sum_{j=t-\hat{\tau}}^{t-1}\EE \lrnorm{\nabla f_{i_j}(\x_j;\a\sim\Dcal_{i_j})}^2 \\ \nonumber
= & -\frac{\eta_t^2 \hat{\tau}G^2}{2}  - \frac{\eta_t^2 L^2}{2\mu_{\Phi}^2}\sum_{j=t-\hat{\tau}}^{t-1}\EE \lrnorm{\nabla f_{i_j}(\x_j;\a\sim\Dcal_{i_j}) - \nabla f_{i_j}(\x_j) + \nabla f_{i_j}(\x_j) - \nabla f(\x_j) + \nabla f(\x_j)}^2 \\ \label{equa:J3}
\ge & -\frac{\eta_t^2 \hat{\tau}G^2}{2}  - \frac{3\eta_t^2 L^2 \hat{\tau}\lrincir{\sigma_v^2 + \sigma_{\Vcal}^2 + G^2}}{2\mu_{\Phi}^2}. 
\end{align}

Now, we start to bound $K_2$. 
\begin{align}
\nonumber
K_2 \ge & - \EE \lrnorm{\nabla f(\x_{t-\hat{\tau}})} \lrnorm{\x_t - \x_{t+1} - \eta_t\nabla f_{i_t}(\x_t; \a)} \\ \nonumber
\ge & -G\eta_t^2\cdot\EE \lrnorm{\nabla f_{i_t}(\x_t; \a)}^2 \\ \nonumber
\ge & -G\eta_t^2\cdot\EE \lrnorm{\nabla f_{i_t}(\x_t; \a) - \nabla f_{i_t}(\x_t) + \nabla f_{i_t}(\x_t) - \nabla f(\x_t) + \nabla f(\x_t)}^2 \\ \label{equa:J4}
\ge & -3G\eta_t^2\lrincir{\sigma_v^2 + \sigma_{\Vcal}^2+G^2}.
\end{align} 

Putting \ref{equa:J3} and \ref{equa:J4} into \ref{equa:nablaf:xt:xt1}, we obtain
\begin{align}
\nonumber
& \EE \eta_t\lrangle{\nabla f(\x_{t-\hat{\tau}}), \nabla f_{i_t}(\x_{t-\hat{\tau}})} \\ \nonumber
= & \EE \eta_t\lrangle{\nabla f(\x_{t-\hat{\tau}}), \EE_{\a\sim\Dcal_{i_t}} \nabla f_{i_t}(\x_{t-\hat{\tau}}; \a)} \\ \nonumber
\le & \EE \lrangle{\nabla f(\x_{t-\hat{\tau}}), \x_t-\x_{t+1}} + \frac{\eta_t^2 \hat{\tau}G^2}{2}  + \frac{3\eta_t^2 L^2 \hat{\tau}\lrincir{\sigma_v^2 + \sigma_{\Vcal}^2 + G^2}}{2\mu_{\Phi}^2} + 3G\eta_t^2\lrincir{\sigma_v^2 + \sigma_{\Vcal}^2+G^2}\\ \nonumber
\le & f(\x_t) - f(\x_{t+1}) + \frac{L^2\EE\lrnorm{\x_t - \x_{t-\hat{\tau}}}^2}{2} + \frac{L+1}{2}\EE\lrnorm{\x_{t+1} - \x_t}^2 + \frac{\eta_t^2 \hat{\tau}G^2}{2} \\ \nonumber
& + \frac{3\eta_t^2 L^2 \hat{\tau}\lrincir{\sigma_v^2 + \sigma_{\Vcal}^2 + G^2}}{2\mu_{\Phi}^2} + 3G\eta_t^2\lrincir{\sigma_v^2 + \sigma_{\Vcal}^2+G^2} \\ \nonumber
\le & f(\x_t) - f(\x_{t+1}) + \frac{L^2\hat{\tau}\lrincir{\sigma_v^2 + \sigma_{\Vcal}^2+G^2}\sum_{j=t-\hat{\tau}}^{t-1}\eta_j^2}{2\mu_{\Phi}^2} + \frac{3(L+1)\lrincir{\sigma_v^2 + \sigma_{\Vcal}^2+G^2}\eta_t^2}{2\mu_{\Phi}^2} \\ \nonumber
& + \frac{\eta_t^2 \hat{\tau}G^2}{2}  + \frac{3\eta_t^2 L^2 \hat{\tau}\lrincir{\sigma_v^2 + \sigma_{\Vcal}^2 + G^2}}{2\mu_{\Phi}^2} + 3G\eta_t^2\lrincir{\sigma_v^2 + \sigma_{\Vcal}^2+G^2} \\ \nonumber
\le & f(\x_t) - f(\x_{t+1}) + \lrincir{\sigma_v^2 + \sigma_{\Vcal}^2+G^2}\lrincir{ \frac{L^2\hat{\tau}\sum_{j=t-\hat{\tau}}^{t-1}\eta_j^2}{2\mu_{\Phi}^2} + \frac{3(L+1+L^2\hat{\tau})\eta_t^2}{2\mu_{\Phi}^2} +3G^2\eta_t^2} + \frac{\eta_t^2\hat{\tau}G^2}{2}.
\end{align} Therefore, $J_1$ is bounded by 
\begin{align}
\nonumber
& J_1 = \EE \lrincir{\sum_{t=\hat{\tau}+1}^T \eta_t \lrangle{\nabla f(\x_{t-\hat{\tau}}), \nabla f_{i_t}(\x_{t-\hat{\tau}}) } } \\ \nonumber 
\le & \lrincir{\sigma_v^2 + \sigma_{\Vcal}^2+G^2} \lrincir{ \frac{L^2\hat{\tau}\sum_{t=\hat{\tau}+1}^T \sum_{j=t-\hat{\tau}}^{t-1}\eta_j^2}{2\mu_{\Phi}^2} + \frac{3(L+1+L^2 \hat{\tau})\sum_{t=\hat{\tau}+1}^T\eta_t^2}{2\mu_{\Phi}^2} + 3G^2\sum_{t=\hat{\tau}+1}^T\eta_t^2} \\ \label{equa:J1}
& + f(\x_{\hat{\tau}+1}) - f(\x_{T+1}) + \frac{\hat{\tau}G^2\sum_{t=\hat{\tau}+1}^T\eta_t^2}{2}.
\end{align} 

Now, we start to bound $J_2$. 
\begin{align}
\nonumber
& J_2 \le \EE \sum_{t=\hat{\tau}+1}^T \eta_t \lrnorm{\nabla f(\x_{t-\hat{\tau}})} \lrnorm{\sum_{i=1}^n \nabla f_i(\x_{t-\hat{\tau}}) \cdot \lrincir{\frac{1}{n} - [\P^{\hat{\tau}}]_{i_{t-\hat{\tau}}, i}}} \\ \nonumber
\le & \EE \sum_{t=\hat{\tau}+1}^T \eta_t \lrnorm{\nabla f(\x_{t-\hat{\tau}})} \lrnorm{\nabla f_{j_{\max}}(\x_{t-\hat{\tau}})} \abs{\sum_{i=1}^n  \lrincir{\frac{1}{n} - [\P^{\hat{\tau}}]_{i_{t-\hat{\tau}}, i}}} {~~~~} (\text{where } j_{\max} \in \argmax_{j\in[n]} \lrnorm{\nabla f_j(\x_{t-\hat{\tau}})}) \\ \nonumber
\le & \EE \sum_{t=\hat{\tau}+1}^T \eta_t \lrnorm{\nabla f(\x_{t-\hat{\tau}})} \lrnorm{\nabla f_{j_{\max}}(\x_{t-\hat{\tau}})} \lrnorm{\bPi_\ast - \P^{\hat{\tau}}}_{\infty} \\ \nonumber
\le & \EE \sum_{t=\hat{\tau}+1}^T \frac{\eta_t}{2} \lrincir{\lrnorm{\nabla f(\x_{t-\hat{\tau}})}^2 + \lrnorm{\nabla f_{j_{\max}}(\x_{t-\hat{\tau}})}^2} \lrnorm{\bPi_\ast - \P^{\hat{\tau}}}_{\infty} \\ \nonumber
\le & \EE \sum_{t=\hat{\tau}+1}^T \frac{\eta_t}{2} \lrincir{\lrnorm{\nabla f(\x_{t-\hat{\tau}})}^2 + \lrnorm{\nabla f_{j_{\max}}(\x_{t-\hat{\tau}}) - \nabla f(\x_{t-\hat{\tau}}) + \nabla f(\x_{t-\hat{\tau}})}^2} \lrnorm{\bPi_\ast - \P^{\hat{\tau}}}_{\infty} \\ \nonumber
\le & \EE \sum_{t=\hat{\tau}+1}^T \frac{\eta_t}{2} \lrincir{\lrnorm{\nabla f(\x_{t-\hat{\tau}})}^2 + 2\lrnorm{\nabla f_{j_{\max}}(\x_{t-\hat{\tau}}) - \nabla f(\x_{t-\hat{\tau}})}^2 + 2\lrnorm{\nabla f(\x_{t-\hat{\tau}})}^2} \lrnorm{\bPi_\ast - \P^{\hat{\tau}}}_{\infty} \\ \nonumber
\le & \lrincir{\frac{3G^2}{2}+\sigma_{\Vcal}^2} \sum_{t=\hat{\tau}+1}^T \eta_t \lrnorm{\bPi_\ast - \P^{\hat{\tau}}}_{\infty} \\ \label{equa:J2}
\le & \lrincir{\frac{3G^2}{2}+\sigma_{\Vcal}^2} \sum_{t=\hat{\tau}+1}^T \eta_t \cdot \alpha(t).
\end{align}  
Putting \ref{equa:J1} and \ref{equa:J2} back into \ref{equa:gradf:xttaut:norm}, we obtain
\begin{align}
\nonumber
& \EE \sum_{t=\hat{\tau}+1}^T \eta_t \lrnorm{\nabla f(\x_{t-\hat{\tau}})}^2 \\ \nonumber 
\le & \lrincir{\sigma_v^2 + \sigma_{\Vcal}^2+G^2} \lrincir{ \frac{L^2\hat{\tau}\sum_{t=\hat{\tau}+1}^T \sum_{j=t-\hat{\tau}}^{t-1}\eta_j^2}{2\mu_{\Phi}^2} + \frac{3(L+1+L^2 \hat{\tau})\sum_{t=\hat{\tau}+1}^T\eta_t^2}{2\mu_{\Phi}^2} + 3G^2\sum_{t=\hat{\tau}+1}^T\eta_t^2} \\ \label{equa:J1}
& + f(\x_{\hat{\tau}+1}) - f(\x_{T+1}) + \frac{\hat{\tau}G^2\sum_{t=\hat{\tau}+1}^T\eta_t^2}{2} + \lrincir{\frac{3G^2}{2}+\sigma_{\Vcal}^2} \sum_{t=\hat{\tau}+1}^T \eta_t \cdot \alpha(t).
\end{align} It thus completes the proof.
\end{proof}

\bibliographystyle{unsrtnat}
\bibliography{references}  

\begin{thebibliography}{28}
\providecommand{\natexlab}[1]{#1}
\providecommand{\url}[1]{\texttt{#1}}
\expandafter\ifx\csname urlstyle\endcsname\relax
  \providecommand{\doi}[1]{doi: #1}\else
  \providecommand{\doi}{doi: \begingroup \urlstyle{rm}\Url}\fi

\bibitem[Yang et~al.(2015)Yang, Lin, Jin, et~al.]{yang2015big}
Tianbao Yang, Qihang Lin, Rong Jin, et~al.
\newblock Big data analytics: Optimization and randomization.
\newblock In \emph{Proceedings of the ACM Conference on Knowledge Discovery and
  Data Mining (SIGKDD)}, page 2327, 2015.

\bibitem[Bubeck et~al.(2015)]{bubeck2015convex}
S{\'e}bastien Bubeck et~al.
\newblock Convex optimization: Algorithms and complexity.
\newblock \emph{Foundations and Trends{\textregistered} in Machine Learning},
  8\penalty0 (3-4):\penalty0 231--357, 2015.

\bibitem[Hazan et~al.(2016)]{hazan2016introduction}
Elad Hazan et~al.
\newblock Introduction to online convex optimization.
\newblock \emph{Foundations and Trends{\textregistered} in Optimization},
  2\penalty0 (3-4):\penalty0 157--325, 2016.

\bibitem[Shalev-Shwartz et~al.(2012)]{shalev2012online}
Shai Shalev-Shwartz et~al.
\newblock Online learning and online convex optimization.
\newblock \emph{Foundations and Trends{\textregistered} in Machine Learning},
  4\penalty0 (2):\penalty0 107--194, 2012.

\bibitem[Shalev-Shwartz and Ben-David(2014)]{books2014shalev}
Shai Shalev-Shwartz and Shai Ben-David.
\newblock \emph{Understanding Machine Learning - From Theory to Algorithms.}
\newblock Cambridge University Press, 2014.

\bibitem[Agarwal and Duchi(2013)]{tit2013agarwal}
Alekh Agarwal and John~C. Duchi.
\newblock The generalization ability of online algorithms for dependent data.
\newblock \emph{IEEE Transactions on Information Theory (TOIT)}, 59\penalty0
  (1):\penalty0 573--587, 2013.

\bibitem[Kuznetsov and Mohri(2020)]{kuznetsov2020discrepancy}
Vitaly Kuznetsov and Mehryar Mohri.
\newblock Discrepancy-based theory and algorithms for forecasting
  non-stationary time series.
\newblock \emph{Annals of Mathematics and Artificial Intelligence}, 88\penalty0
  (4):\penalty0 367--399, 2020.

\bibitem[Duchi et~al.(2012)Duchi, Agarwal, Johansson, and
  Jordan]{duchi:2012:siamjoo}
John~C. Duchi, Alekh Agarwal, Mikael Johansson, and Michael~I. Jordan.
\newblock Ergodic mirror descent.
\newblock \emph{SIAM Journal on Optimization}, 22\penalty0 (4):\penalty0
  1549--1578, 2012.

\bibitem[Sun et~al.(2018{\natexlab{a}})Sun, Sun, and Yin]{sun2018markov}
Tao Sun, Yuejiao Sun, and Wotao Yin.
\newblock On markov chain gradient descent.
\newblock \emph{Proceedings of the International Conference on Neural
  Information Processing Systems (NeurIPS)}, 31, 2018{\natexlab{a}}.

\bibitem[Even(2023)]{even:2022:mcss:icml}
Mathieu Even.
\newblock Stochastic gradient descent under markov-chain sampling schemes.
\newblock In \emph{Proceedings of the 32nd International Conference on Machine
  Learning (ICML)}, pages 1--12, 2023.

\bibitem[Wang et~al.(2022)Wang, Lei, Ying, and Zhou]{wang2022stability}
Puyu Wang, Yunwen Lei, Yiming Ying, and Ding-Xuan Zhou.
\newblock Stability and generalization for markov chain stochastic gradient
  methods.
\newblock \emph{Proceedings of the International Conference on Neural
  Information Processing Systems (NeurIPS)}, 35:\penalty0 37735--37748, 2022.

\bibitem[Roy et~al.(2022)Roy, Balasubramanian, and Ghadimi]{roy2022constrained}
Abhishek Roy, Krishnakumar Balasubramanian, and Saeed Ghadimi.
\newblock Constrained stochastic nonconvex optimization with state-dependent
  markov data.
\newblock \emph{Proceedings of the Advances in Neural Information Processing
  Systems (NeurIPS)}, 35:\penalty0 23256--23270, 2022.

\bibitem[Dorfman and Levy(2022)]{dorfman2022adapting}
Ron Dorfman and Kfir~Yehuda Levy.
\newblock Adapting to mixing time in stochastic optimization with markovian
  data.
\newblock In \emph{Proceedings of the International Conference on Machine
  Learning (ICML)}, pages 5429--5446, 2022.

\bibitem[Bresler et~al.(2020)Bresler, Jain, Nagaraj, Netrapalli, and
  Wu]{bresler2020lsr}
Guy Bresler, Prateek Jain, Dheeraj Nagaraj, Praneeth Netrapalli, and Xian Wu.
\newblock Least squares regression with markovian data: Fundamental limits and
  algorithms.
\newblock In \emph{Proceedings of the International Conference on Neural
  Information Processing Systems (NeurIPS)}, 2020.

\bibitem[Beznosikov et~al.(2023)Beznosikov, Samsonov, Sheshukova, Gasnikov,
  Naumov, and Moulines]{beznosikov2023first}
Aleksandr Beznosikov, Sergey Samsonov, Marina Sheshukova, Alexander Gasnikov,
  Alexey Naumov, and Eric Moulines.
\newblock First order methods with markovian noise: from acceleration to
  variational inequalities.
\newblock \emph{arXiv preprint arXiv:2305.15938}, 2023.

\bibitem[Sun et~al.(2018{\natexlab{b}})Sun, Sun, and Yin]{sun:2018:mcgd}
Tao Sun, Yuejiao Sun, and Wotao Yin.
\newblock On markov chain gradient descent.
\newblock In \emph{Proceedings of the International Conference on Neural
  Information Processing Systems (NeurIPS)}, page 9918–9927,
  2018{\natexlab{b}}.

\bibitem[Doan(2023)]{doan:2023:mgd:tac}
Thinh~T. Doan.
\newblock Finite-time analysis of markov gradient descent.
\newblock \emph{IEEE Transactions on Automatic Control (TAC)}, 68\penalty0
  (4):\penalty0 2140--2153, 2023.

\bibitem[Lian et~al.(2017)Lian, Zhang, Zhang, and Liu]{2017Asynchronous}
Xiangru Lian, Wei Zhang, Ce~Zhang, and Ji~Liu.
\newblock Asynchronous decentralized parallel stochastic gradient descent.
\newblock 2017.

\bibitem[Srivastava and Nedic(2011)]{srivastava2011distributed}
Kunal Srivastava and Angelia Nedic.
\newblock Distributed asynchronous constrained stochastic optimization.
\newblock \emph{IEEE Journal of Selected Topics in Signal Processing},
  5\penalty0 (4):\penalty0 772--790, 2011.

\bibitem[Wai(2020)]{wai2020consensus}
Hoi-To Wai.
\newblock On the convergence of consensus algorithms with markovian noise and
  gradient bias.
\newblock In \emph{Proceedings of the IEEE Conference on Decision and Control
  (CDC)}, pages 4897--4902, 2020.

\bibitem[Ayache and El~Rouayheb(2019)]{ayache2019random}
Ghadir Ayache and Salim El~Rouayheb.
\newblock Random walk gradient descent for decentralized learning on graphs.
\newblock In \emph{Proceedings of the IEEE International Parallel and
  Distributed Processing Symposium Workshops (IPDPSW)}, pages 926--931, 2019.

\bibitem[Ayache et~al.(2023)Ayache, Dassari, and Rouayheb]{ayache2023walk}
Ghadir Ayache, Venkat Dassari, and Salim~El Rouayheb.
\newblock Walk for learning: A random walk approach for federated learning from
  heterogeneous data.
\newblock \emph{IEEE Journal on Selected Areas in Communications}, 41\penalty0
  (4):\penalty0 929--940, 2023.

\bibitem[Sun et~al.(2023)Sun, Li, and Wang]{sun2023decentralizedsgdmc}
Tao Sun, Dongsheng Li, and Bao Wang.
\newblock On the decentralized stochastic gradient descent with markov chain
  sampling.
\newblock \emph{IEEE Transactions on Signal Processing (TSP)}, 71:\penalty0
  2895--2909, 2023.

\bibitem[Mao et~al.(2020)Mao, Yuan, Hu, Gu, Sayed, and Yin]{mao2020walkman}
Xianghui Mao, Kun Yuan, Yubin Hu, Yuantao Gu, Ali~H Sayed, and Wotao Yin.
\newblock Walkman: A communication-efficient random-walk algorithm for
  decentralized optimization.
\newblock \emph{IEEE Transactions on Signal Processing (TSP)}, 68:\penalty0
  2513--2528, 2020.

\bibitem[Parikh et~al.(2014)Parikh, Boyd, et~al.]{parikh2014proximal}
Neal Parikh, Stephen Boyd, et~al.
\newblock Proximal algorithms.
\newblock \emph{Foundations and trends{\textregistered} in Optimization},
  1\penalty0 (3):\penalty0 127--239, 2014.

\bibitem[Yang et~al.(2019)Yang, Liu, Chen, and Tong]{yang2019federated}
Qiang Yang, Yang Liu, Tianjian Chen, and Yongxin Tong.
\newblock Federated machine learning: Concept and applications.
\newblock \emph{ACM Transactions on Intelligent Systems and Technology (TIST)},
  10\penalty0 (2):\penalty0 1--19, 2019.

\bibitem[Li et~al.(2020)Li, Sahu, Zaheer, Sanjabi, Talwalkar, and
  Smith]{li2020federated}
Tian Li, Anit~Kumar Sahu, Manzil Zaheer, Maziar Sanjabi, Ameet Talwalkar, and
  Virginia Smith.
\newblock Federated optimization in heterogeneous networks.
\newblock \emph{Proceedings of Machine learning and Systems}, 2:\penalty0
  429--450, 2020.

\bibitem[Zhao et~al.(2020)Zhao, Zhao, Zhang, Zhu, Liu, and
  Yin]{zhao2020understand}
Yawei Zhao, Qian Zhao, Xingxing Zhang, En~Zhu, Xinwang Liu, and Jianping Yin.
\newblock Understand dynamic regret with switching cost for online decision
  making.
\newblock \emph{ACM Transactions on Intelligent Systems and Technology (TIST)},
  11\penalty0 (3):\penalty0 1--21, 2020.

\end{thebibliography}







\end{document}